%% file: main.tex
\documentclass[11pt]{article}

\usepackage[margin=1in]{geometry}

\usepackage[round]{natbib}
\usepackage{amsmath,amssymb,amsthm}
\usepackage{graphicx}
\usepackage{algorithm}
\usepackage{algpseudocode}
\usepackage{microtype}
\usepackage{hyperref}
\hypersetup{
  hidelinks,
  pdftitle={Curating Always-Loaded Context for LLM Agents: A Capacitated Assortment Model with Censored Feedback},
  pdfauthor={Zexuan Liu, Yuning Yang, Tiancheng Zhao}
}

\newtheorem{proposition}{Proposition}
\newtheorem{theorem}{Theorem}
\newtheorem{corollary}{Corollary}

\newtheorem{propositionA}{Proposition}[section]
\newtheorem{lemmaA}[propositionA]{Lemma}
\newtheorem{corollaryA}[propositionA]{Corollary}
\newtheorem{theoremA}[propositionA]{Theorem}
\theoremstyle{definition}

\newtheorem{assumption}{Assumption}
\newtheorem{assumptionA}[propositionA]{Assumption}
\theoremstyle{remark}

\newenvironment{assumptionprime}[1][]{%
  \edef\savedassumption{\arabic{assumption}}%
  \setcounter{assumption}{0}%
  \begin{assumption}[#1]}{\end{assumption}%
  \setcounter{assumption}{\savedassumption}}

\newcommand{\Reg}{\mathrm{Reg}}
\newcommand{\PFI}{P_{\mathrm{FI}}}
\newcommand{\cL}{\mathcal{L}}
\newcommand{\cN}{\mathcal{N}}
\newcommand{\Rz}{R_0}
\newcommand{\Bern}{\mathrm{Bern}}
\newcommand{\Bin}{\mathrm{Bin}}
\DeclareMathOperator{\sign}{sign}

\title{Curating Always-Loaded Context for LLM Agents:\\
A Capacitated Assortment Model with Censored Feedback}
\author{%
Zexuan Liu\textsuperscript{1}
\quad
Yuning Yang\textsuperscript{2}
\quad
Tiancheng Zhao\textsuperscript{3}\\[0.6em]
\normalsize\textsuperscript{1}University of Illinois at Urbana-Champaign\\
\normalsize\textsuperscript{2}Georgia Institute of Technology\\
\normalsize\textsuperscript{3}Saint Louis University}
\date{}

\begin{document}

\maketitle

\begin{abstract}

  At the start of every session, LLM agents load a fixed context file, such as \texttt{AGENTS.md}. Each loaded token in the file is charged again in every later round of the session, and these files can degrade performance as they grow in size. However, in practice, human or automated curators usually grow these files by appending.
  We formulate context curation as a capacitated assortment problem. Instructions consume tokens under a finite attention capacity; adding an instruction never raises the compliance of the others, while retained instructions incur a per-session setup cost.
  We prove an upper bound on the optimal file size, regardless of the number of available candidate instructions, and that appending every instruction with positive standalone value can be arbitrarily worse in net value than selecting an optimal subset. A token budget also limits the loss when the token price is underestimated.
  We then examine what can be learned from past sessions and how this information can guide decisions to add or remove instructions. Feedback is inherently censored: the benefits and harms of loaded instructions are observable, whereas missing instructions generate feedback only when their absence causes harm.
  In this setting, we show that deleting instructions ignored by agents can inevitably remove helpful ones. We characterize how much evidence should be collected before adding an instruction. Besides, we bound regret when human reviewers can inspect only a limited number of edits per period. Empirical experiments further show that irrelevant rules drawn from real context files reduce language-model compliance.

\end{abstract}

\section{INTRODUCTION}
\label{sec:intro}

A coding agent that works on a Python package reads the package's
\texttt{AGENTS.md} at the start of every session. The file holds
instructions such as ``run \texttt{pytest} after every edit'', ``add a
\texttt{CHANGELOG.md} line'', ``use NumPy docstrings'', ``never commit to
\texttt{main}'', and ``always use conda''. The last one breaks the
environment in a repository that uses \texttt{uv}. This example illustrates why instructions must be selected carefully: an unsuitable rule can lead the agent to make mistakes.

Even instructions that cause no harm incur a setup cost in each
session, regardless of their relevance to the current task. Context files can raise inference cost by
a fifth or more without a matching gain in task success
\citep{gloaguen2026agentsmd}, since
the file remains part of the prompt throughout the session, occupying context space in every model call. As the file grows,
the agent may follow its instructions less reliably. At high instruction density
the more instructions the file holds, the less reliably the agent follows
any one of them \citep{jaroslawicz2025ifscale,harada2025instructions}.
However, \citet{mcmillan2026adherence} and \citet{zhang2026guardrails}, studying real configuration files, observe no drop in adherence as file size grows within the small ranges they test.

In practice the file grows by appending. Across 1,867 repositories such
files more than double over their life, and deletions become rarer as the
file ages \citep{chakrabarti2026claudemd}. Tools that write the file from
past sessions add to the pressure. Every session is summarized into new
candidate instructions, so the pool of candidates grows without bound
while the agent's attention does not.

These lead to two questions. What should the file contain? And how can a
human or automated curator learn that from the transcripts of past
sessions? The second is harder than it looks, because transcripts are
censored. For instructions already in the file, transcripts reveal whether the agent complied with them and whether compliance helped or hurt. For instructions not in the file, their value is usually invisible; it becomes observable only when a failure reveals the missing instruction.

We answer the first question with an assortment model. Each candidate
instruction is an item with a setup cost, a probability of being
relevant, and a gain when followed. The gain can be negative as a followed
instruction can harm. The agent's limited attention is shared among all instructions. Adding an
instruction may lower the compliance of every instruction already in the
file, so the file is an assortment whose items compete for one resource.
Every loaded instruction carries a per-session setup cost. This is the
fixed-cost assortment problem of operations management
\citep{kunnumkal2019tractable,feldman2019relating} with a capacity
constraint \citep{rusmevichientong2010dynamic,desir2022capacitated}. Two
things are new: the items compete for a shared attention that dilutes as
the file grows, and the pool of items grows with time.

We answer the second question with an observation model that models the
censored feedback. We ask four questions: what censored transcripts identify, when a
deletion policy is safe, how much corroboration to require before adding
an instruction, and what a cap on human review costs.

To our knowledge, no prior work models instruction compliance as choice under a
finite attention capacity, and no existing learning result handles censored feedback under an edit
cap. Classical assortment models optimize revenue under a fixed choice model 
\citep{talluri2004revenue,berbeglia2020assortment}; we instead bound the size of
the optimal file under a attention-capacity bound. Learning with one-sided feedback
\citep{helmbold2000apple} and learning under a bounded number of switches
\citep{altschuler2021limited} have each been studied separately. The unique combination with a
growing candidate pool and a curator who can review only a limited number of edits has not been studied.

\paragraph{Contributions.}
(i)~A model of context curation as capacitated assortment problem with a
per-session setup cost, in which compliance is diluted by a shared
attention load (Section~\ref{sec:model}).
(ii)~A structural characterization under finite attention capacity, including a file-size bound independent of the number of candidate instructions and an unbounded accretion loss, which would be optimal without attention dilution (Section~\ref{sec:structure}).
(iii)~What censored feedback identifies: compliance and harm identify
different parameters, so deleting on non-compliance is value-blind and can
remove the better instruction (Section~\ref{sec:learning}).
(iv)~An evidence threshold for adding an instruction, and regret when a
reviewer accepts at most $m$ edits per period, with the edit-cap term
matched above and below (Section~\ref{sec:learning}).
(v)~Experiments in which the optimal file size stays flat as the
candidate pool grows from 200 to 3,200 items for two models, and a simulation matching the edit-cap
regret bounds (Section~\ref{sec:experiments}).

Section~\ref{sec:model} defines the model, Section~\ref{sec:structure}
gives the structural results, Section~\ref{sec:learning} develops the learning
results, and Section~\ref{sec:experiments} reports the experiments. Proofs are in
the appendix; Appendix~\ref{app:structure} also gives the results on
token budgets.
The numbered results, except those relying on the smoothness-to-bound
step in Assumption~\ref{ass:tied}, have complete formal proofs in Lean 4
(supplementary material).

\section{MODEL}
\label{sec:model}

\paragraph{The curation workflow.}
An agent runs sessions. After each session a summarizer, itself a
language model, reads the transcript and extracts candidate instructions:
things the agent should have known, or mistakes it should not repeat. The
candidates join a pool. A curator selects a subset
of the pool as the file that the agent loads at the start of every later
session. The pool grows with every session; the file should not.

\paragraph{Primitives.}
The pool is $[N] = \{1, \dots, N\}$. We view it as the first $N$ items of
an endless list of candidates, so that we can ask what happens as $N$
grows. Every bound below holds for all $N$. The file is a subset
$S \subseteq [N]$. Instruction $i$ has a \emph{token cost}
$c_i \ge c_{\min} > 0$, and $C(S) = \sum_{i \in S} c_i$ is the cost of the
file.

Instruction $i$ is \emph{relevant} in a session with probability $p_i$.
When it is relevant and the agent follows it, the session gains $r_i$ in
expectation. The gain is negative for a harmful instruction. Write
$w_i = p_i r_i$ for the per-session value of always following $i$, and
$\bar w = \max(0, \sup_i w_i) < \infty$, the supremum over all candidates.

Given the file $S$, the agent follows a relevant instruction $i \in S$
with probability $q_i(S) \in [0, 1)$, its \emph{compliance}. Compliance
depends on the whole file, not on $N$.

Every loaded instruction is charged once per session, when the session
starts and the file enters the prompt. Later turns of the session hit the
prompt cache. We call $\kappa > 0$ the \emph{token price}, so the
\emph{setup cost} of instruction $i$ is $\kappa c_i$ per session and the
file's setup cost is $\kappa C(S)$. The per-session value of $S$ is its
gross value minus its setup cost,
\begin{equation}
\label{eq:value}
R(S) = \Rz(S) - \kappa\, C(S), \qquad \Rz(S) = \sum_{i \in S} w_i\, q_i(S).
\end{equation}
The curator solves $\max\{R(S) : S \subseteq [N],\ C(S) \le B\}$ for a
budget $B \ge 0$. The empty file is feasible and $R(\emptyset) = 0$, so an
optimum $S^\star$ exists and $R(S^\star) \ge 0$.

An instruction with positive standalone value must be followed often enough to pay
its tokens. Call $\rho = \kappa c_{\min} / \bar w$ the \emph{break-even
compliance} ($\rho = \infty$ if $\bar w = 0$). If $R(\{i\}) > 0$, then
$q_i(\{i\}) > \rho$.

\paragraph{General compliance model.}
The compliance functions $q_i(S)$ are otherwise arbitrary, subject to the
following structural condition: adding instructions cannot increase the
compliance of an instruction already in the file.

\begin{assumption}[Monotone dilution]
\label{ass:mono}
$q_i(S \cup T) \le q_i(S)$ for every finite $S$, $T$ and every $i \in S$.
\end{assumption}

\paragraph{Shared-dilution subclass.}
The shared-dilution form is a parametric subclass of the general compliance
model. Each instruction has
an \emph{intrinsic compliance} $\theta_i > 0$ and an \emph{attention load}
$a_i > 0$; the file load is $A(S) = \sum_{j \in S} a_j$. A known
\emph{dilution curve} $g : [0, \infty) \to (0, 1]$, nonincreasing with
$g(0) = 1$, gives the common discount
\begin{equation}
\label{eq:sd}
q_i(S) = \theta_i\, g\big(A(S)\big), \qquad i \in S.
\end{equation}
Here $\theta_i$ measures how readily the agent follows an instruction, $a_i$
measures its attention load, and $g$ describes how attention is diluted. We
require $\theta_i g(a_i) < 1$ for every $i$. Since $g$ is nonincreasing,
\eqref{eq:sd} satisfies monotone dilution. It also gives the ratio
$q_i(S)/q_j(S) = \theta_i/\theta_j$, which does not depend on $S$, and a
single-item file has $q_i(\{i\}) = \theta_i g(a_i)$, which is the
compliance a curator observes when testing an instruction on its own.

Taking $S = \{i\}$, $q_i(S) \le q_i(\{i\})$ for every file that contains
$i$: no instruction is followed more often in company than alone. Under
\eqref{eq:sd}, equality is allowed when $g$ is flat on the added-load
interval; a strict drop occurs only when $g$ takes a smaller value.

\paragraph{Compliance mass and capacity.}
The total compliance of a file is $Q(S) = \sum_{i \in S} q_i(S)$: the
expected number of followed instructions when all are relevant. Its
maximum over files of $n$ items is the \emph{compliance mass}
$M(n) = \sup\{Q(S) : |S| = n\}$. We use two nested mass conditions.
The weaker one controls average compliance:

\begin{assumption}[Bounded average compliance]
\label{ass:bounded}
$\limsup_{n \to \infty} M(n)/n < \rho$.
\end{assumption}

The stronger condition is a uniform capacity bound:

\begin{assumption}[Attention capacity]
\label{ass:capacity}
$Q(S) \le \bar Q$ for every finite $S$, for a constant $\bar Q < \infty$.
\end{assumption}

The agent follows at most $\bar Q$ instructions per session on average,
however many it is given. Under~\eqref{eq:sd} the capacity is
$\bar Q = \chi \bar G$ with $\chi = \sup_i \theta_i / a_i$ and
$\bar G = \sup_{x \ge 0} x\, g(x)$, which is finite exactly when
$g(x) = O(1/x)$. Since $M(n) \le \bar Q$,
Assumption~\ref{ass:capacity} implies Assumption~\ref{ass:bounded}.
The main size bound uses capacity; the appendix also gives the weaker
result when capacity fails. Under the shared-dilution form,
Assumption~\ref{ass:bounded} holds whenever the
loads have a floor $a_i \ge a_{\min} > 0$ and
$\sup_i \theta_i \cdot \lim_{x \to \infty} g(x) < \rho$, in particular
whenever $g$ vanishes at infinity. The case
$\bar Q = 1$ is the single-choice case of classical assortment models:
the agent follows at most one instruction on average. Appendix~\ref{app:mnl}
records which classical choice models are instances of~\eqref{eq:sd}, and
the sharper constants they give. Assumption~\ref{ass:capacity} can fail
under~\eqref{eq:sd} while Assumption~\ref{ass:bounded} holds, for example
when $g$ decays like a power below one.

% \paragraph{Example.}
% In the file of Section~\ref{sec:intro} each instruction costs about $15$
% tokens. The \texttt{pytest} instruction is relevant in most sessions and
% has a positive gain; the changelog instruction is relevant only when
% behaviour changes; the conda instruction has a negative gain, so $w_i < 0$.
% Appending an instruction with load $a_j$ moves every other instruction's
% compliance from $\theta_i g(A(S))$ down to $\theta_i g(A(S) + a_j)$.
% Whether the appended instruction pays for itself is the question of the
% next section.

\section{STRUCTURAL PROPERTIES}
\label{sec:structure}

Each instruction incurs a setup cost, while the agent has a bounded total
compliance, so the value of a file cannot keep growing with its size. This section shows that the optimal file size is bounded independently of the candidate pool size. Adding every instruction that has positive standalone value can produce an arbitrarily large accretion loss. Without dilution, however, this policy is optimal whenever the token budget accommodates all such instructions. Proofs are in
Appendix~\ref{app:structure}.

\paragraph{The size bound.}
The bound is based on two observations. A file with nonnegative value earns at least its setup
cost: $R(S) \ge 0$ means $\kappa C(S) \le \sum_{i \in S} w_i q_i(S)$. And
each followed instruction is worth at most $\bar w$, while the agent
follows at most $\bar Q$ instructions per session
(Assumption~\ref{ass:capacity}). So the setup cost of a file worth
keeping is at most $\bar w \bar Q$, whatever the pool holds.

\begin{proposition}[File size bound]
\label{prop:size}
Under Assumption~\ref{ass:capacity}, every file with $R(S) \ge 0$, and
in particular every optimal file $S^\star$, satisfies
\[
\kappa\, C(S) \le \bar w\, \bar Q
\qquad\text{and}\qquad
|S| \le \frac{\bar w\, \bar Q}{\kappa\, c_{\min}} = \frac{\bar Q}{\rho}.
\]
Neither bound depends on the pool size $N$ or the budget $B$. If
$B \ge \bar w \bar Q / \kappa$, the budget never binds. Under~\eqref{eq:sd}
the capacity is $\bar Q = \chi \bar G$.
\end{proposition}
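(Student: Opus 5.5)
The plan is to chain three elementary inequalities, starting from $R(S)\ge 0$. Any file with nonnegative value satisfies
\[
\kappa\, C(S) \le \Rz(S) = \sum_{i\in S} w_i\, q_i(S).
\]
Each term of the right-hand side is at most $\bar w\, q_i(S)$, because $q_i(S)\ge 0$ and $w_i \le \max(0,\sup_j w_j)=\bar w$. Terms with $w_i<0$ are bounded by $0 \le \bar w\, q_i(S)$, so harmful instructions only make the bound looser and need no separate case. Summing over $S$ gives
\[
\kappa\, C(S) \le \bar w\, Q(S).
\]
Assumption~\ref{ass:capacity} then bounds $Q(S)\le\bar Q$, which yields the first claim $\kappa C(S)\le \bar w\bar Q$.

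For the cardinality bound, use $c_i\ge c_{\min}$ to get $C(S)\ge |S|\,c_{\min}$. Dividing by $\kappa c_{\min}>0$ gives
\[
|S|\le \frac{\bar w\bar Q}{\kappa c_{\min}}.
\]
By the definition $\rho=\kappa c_{\min}/\bar w$ this equals $\bar Q/\rho$. The degenerate case $\bar w=0$ needs a short remark. There $\rho=\infty$, and $\kappa C(S)\le 0$ forces $S=\emptyset$, consistent with the convention $\bar Q/\rho=0$.

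The optimal file is covered because $R(S^\star)\ge R(\emptyset)=0$, as already noted in Section~\ref{sec:model}. The bounds are independent of $N$ and $B$ because none of $\bar w$, $\bar Q$, $\kappa$, $c_{\min}$ depends on them. Here $\bar w$ is a supremum over the entire endless candidate list, and $\bar Q$ is a uniform bound over all finite files.

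The budget remark follows directly. Since $S^\star$ exists, it satisfies $C(S^\star)\le \bar w\bar Q/\kappa\le B$. More to the point, every file with $R(S)\ge 0$ already satisfies $C(S)\le B$. So the unconstrained maximum over $S\subseteq[N]$, which is attained among files of nonnegative value because $\emptyset$ competes, is feasible, and the constraint is slack.

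The last sentence, $\bar Q=\chi\bar G$ under~\eqref{eq:sd}, is the only step with real content, and I expect it to be the main obstacle. I will prove $\le$. Write
\[
Q(S)=g(A(S))\sum_{i\in S}\theta_i \le \chi\, g(A(S))\sum_{i\in S} a_i = \chi\, A(S)\, g(A(S)) \le \chi\bar G.
\]
Only this upper bound is needed, to show that the constant $\chi\bar G$ is a valid capacity, and the stated proposition uses $\bar Q$ only as such an upper bound. If equality as the least capacity is intended, I would approach the supremum by choosing items with $\theta_i/a_i$ near $\chi$ and replicating them so that $A(S)$ approaches a near-maximizer of $x g(x)$. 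That would require enough near-optimal candidates in the endless list, which is why I treat the identity as a valid-capacity statement rather than a tight one.
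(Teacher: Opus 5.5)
Your proof is correct and follows essentially the same route as the paper's. You bound $R_0(S) \le \bar w\, Q(S)$ using $w_i \le \bar w$ and $q_i(S) \ge 0$, then apply the capacity bound and $C(S) \ge c_{\min}|S|$, with the same treatment of $\bar w = 0$. The non-binding budget follows from the fact that every optimum has $R \ge 0$, and under shared dilution you check that $Q(S) = \Theta(S)\, g(A(S)) \le \chi\, A(S)\, g(A(S)) \le \chi \bar G$. The paper likewise shows only that $\chi \bar G$ is a valid capacity, not the least one, so your caveat is appropriate.
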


Appendix~\ref{app:structure} states the full version
(Proposition~\ref{prop:sizefull}), a bound under
Assumption~\ref{ass:bounded} when capacity fails, and a load bound on
optima that needs neither condition (Proposition~\ref{prop:load}).

The bound gives the curator a token budget that does not exclude any optimal file.
Suppose the best instruction in the pool is worth
$\bar w / \kappa = 10{,}000$ always-loaded tokens per session, and the
agent follows at most $\chi \bar G = 3$ instructions per session on average.
Then no file worth keeping costs more than $30{,}000$ tokens, and no
optimal file has more than $3/\rho$ instructions, however many sessions
have been summarized.

\paragraph{Accretion.}
A curator who tests each instruction alone and appends the ones with positive standalone value
is running \emph{accretion}. Write
$v_i^{\mathrm{solo}} = R(\{i\}) = w_i q_i(\{i\}) - \kappa c_i$ for the standalone value of
instruction $i$, which is $w_i \theta_i g(a_i) - \kappa c_i$
under~\eqref{eq:sd}. The accretion file is
$S_N^{\mathrm{acc}} = \{i \in [N] : v_i^{\mathrm{solo}} > 0\}$ and its closure is
$\overline{S}_N^{\mathrm{acc}} = \{i \in [N] : v_i^{\mathrm{solo}} \ge 0\}$. Every session adds candidates, and
every candidate with positive standalone value joins the file. But the file's
gross value is capped at $\bar w \chi \bar G$ while its setup cost grows with
every instruction. Past $\chi \bar G / \rho$ instructions, each addition increase the setup cost by
at least $\kappa c_{\min}$ per session.

\begin{corollary}[Accretion is unboundedly suboptimal]
\label{cor:accretion}
Assume~\eqref{eq:sd} with $\chi \bar G < \infty$, and let $S^\star_N$ be
optimal for the pool $[N]$. Every $S \subseteq [N]$ satisfies
$R(S) \le \bar w\, \chi \bar G - \kappa c_{\min} |S|$, hence
\[
R(S^\star_N) - R(S_N^{\mathrm{acc}}) \ \ge\ \kappa c_{\min}\, |S_N^{\mathrm{acc}}| - \bar w\, \chi \bar G,
\]
which tends to infinity whenever $|S_N^{\mathrm{acc}}| \to \infty$.
\end{corollary}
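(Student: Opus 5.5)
The proof comes down to one inequality: bound the gross value of any file by the capacity, and bound its setup cost from below by its size. I would first establish the uniform upper bound $R(S) \le \bar w\, \chi \bar G - \kappa c_{\min} |S|$ for every finite $S \subseteq [N]$, and then specialize.

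For the gross term, write $\Rz(S) = \sum_{i \in S} w_i q_i(S)$. Since $q_i(S) \ge 0$ and $w_i \le \bar w$ with $\bar w \ge 0$, each summand satisfies $w_i q_i(S) \le \bar w\, q_i(S)$. This holds even for harmful instructions, where the summand is nonpositive. Summing gives $\Rz(S) \le \bar w\, Q(S)$.

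Next I bound $Q(S)$ under~\eqref{eq:sd}. We have $Q(S) = g(A(S)) \sum_{i \in S} \theta_i$, and $\theta_i \le \chi a_i$ by the definition of $\chi$. Hence
\[
Q(S) \le \chi\, A(S)\, g(A(S)) \le \chi \bar G,
\]
which is exactly the capacity $\bar Q = \chi \bar G$ from Assumption~\ref{ass:capacity} as stated in the model section. For the setup term, $C(S) \ge c_{\min} |S|$ because every $c_i \ge c_{\min}$. Combining the two bounds in~\eqref{eq:value} yields the displayed inequality. No result beyond this elementary computation is needed; it is the same two observations that drive Proposition~\ref{prop:size}, kept in additive rather than ratio form.

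For the gap, I would take $S = S_N^{\mathrm{acc}}$ in the uniform bound, which gives $R(S_N^{\mathrm{acc}}) \le \bar w \chi \bar G - \kappa c_{\min} |S_N^{\mathrm{acc}}|$. Separately, $R(S^\star_N) \ge R(\emptyset) = 0$, since the empty file is feasible. Subtracting gives the claimed lower bound on $R(S^\star_N) - R(S_N^{\mathrm{acc}})$. Because $\bar w \chi \bar G$ is a finite constant independent of $N$, the right-hand side diverges whenever $|S_N^{\mathrm{acc}}| \to \infty$.

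The only subtle point is the budget. The gap is stated against $S^\star_N$, which is optimal under the budget $B$, while $S_N^{\mathrm{acc}}$ may violate it. This causes no difficulty: the argument uses only $R(S^\star_N) \ge 0$, and the uniform bound holds for all $S$, feasible or not.

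I expect no real obstacle. The one step that needs care is the sign handling in $w_i q_i(S) \le \bar w\, q_i(S)$ for negative $w_i$, and that is exactly why $\bar w$ is defined as $\max(0, \sup_i w_i)$.
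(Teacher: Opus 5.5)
Your proof is correct and is essentially the paper's argument: the paper gets the uniform bound from Proposition~\ref{prop:sizefull}(i), which is exactly your chain $\Rz(S) \le \bar w\, Q(S) \le \bar w\, \chi\, A(S)\, g(A(S)) \le \bar w\, \chi \bar G$ via $\theta_i \le \chi a_i$, combines it with $C(S) \ge c_{\min}|S|$, and then applies it to $S = S_N^{\mathrm{acc}}$ together with $R(S^\star_N) \ge R(\emptyset) = 0$. Your remark that feasibility of $S_N^{\mathrm{acc}}$ under the budget is never needed is also consistent with the paper, since the bound holds for every $S \subseteq [N]$.
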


The loss grows linearly in the size of the accretion file, at rate
$\kappa c_{\min}$ per instruction beyond $\chi \bar G / \rho$. The hypothesis
$|S_N^{\mathrm{acc}}| \to \infty$ is met as soon as infinitely many candidates with positive standalone value, which the summarizer of
Section~\ref{sec:model} supplies. As the pool grows, so does the
number of instructions with positive standalone value. Appending all of them leaves an agent that follows about
$\chi \bar G$ of them per session while paying for every one.

\paragraph{Without dilution.}
In this case, the token price alone does not make accretion suboptimal. If compliance did
not depend on the file, value would be additive and accretion would be
optimal whenever the file fits within the token budget. This is the limit $a_j \to 0$
of~\eqref{eq:sd}, which lies outside the model, and it isolates dilution
as the cause of the loss above.

\begin{proposition}[Without dilution, accretion is optimal]
\label{prop:nodilution}
Suppose $q_i(S) = q_i$ for every $S \ni i$, so that
$v_i^{\mathrm{solo}} = w_i q_i - \kappa c_i$. Then $R(S) = \sum_{i \in S} v_i^{\mathrm{solo}}$. If $B \ge C(S_N^{\mathrm{acc}})$, then $S_N^{\mathrm{acc}}$ is
optimal, and the optimal files are exactly the $S$ with
$S_N^{\mathrm{acc}} \subseteq S \subseteq \overline{S}_N^{\mathrm{acc}}$ and $C(S) \le B$.
\end{proposition}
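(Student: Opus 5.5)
The plan is to make value additive and then solve the resulting budgeted selection by a pointwise exchange argument. First, with $q_i(S)=q_i$ for every $S\ni i$, substitute into \eqref{eq:value}:
\[
R(S)=\sum_{i\in S}w_iq_i-\kappa\sum_{i\in S}c_i=\sum_{i\in S}\bigl(w_iq_i-\kappa c_i\bigr).
\]
Taking $S=\{i\}$ identifies each summand as $R(\{i\})=v_i^{\mathrm{solo}}$, which gives $R(S)=\sum_{i\in S}v_i^{\mathrm{solo}}$. No assumption beyond the hypothesis is needed here; monotone dilution holds trivially.

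Second, I bound every feasible value by the value of the accretion file. For any $S\subseteq[N]$,
\[
R(S)=\sum_{i\in S}v_i^{\mathrm{solo}}\le\sum_{i\in S,\,v_i^{\mathrm{solo}}>0}v_i^{\mathrm{solo}}\le\sum_{i\in S_N^{\mathrm{acc}}}v_i^{\mathrm{solo}}=R(S_N^{\mathrm{acc}}).
\]
The first inequality drops the nonpositive terms. The second adds the positive terms of $S_N^{\mathrm{acc}}\setminus S$. Since $C(S_N^{\mathrm{acc}})\le B$ by hypothesis, $S_N^{\mathrm{acc}}$ is feasible, and hence it is optimal.

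Third, I characterize all optima through the equality cases of the two inequalities above. Equality in the first requires that $S$ contain no $i$ with $v_i^{\mathrm{solo}}<0$, which means $S\subseteq\overline S_N^{\mathrm{acc}}$. Equality in the second requires that no $i$ with $v_i^{\mathrm{solo}}>0$ be omitted, which means $S_N^{\mathrm{acc}}\subseteq S$. Items with $v_i^{\mathrm{solo}}=0$ contribute nothing, so they are free to include or exclude. Conversely, any $S$ with $S_N^{\mathrm{acc}}\subseteq S\subseteq\overline S_N^{\mathrm{acc}}$ has $R(S)=R(S_N^{\mathrm{acc}})$, and it is optimal exactly when it is feasible, that is, when $C(S)\le B$.

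I do not expect a genuine obstacle. The only care needed is in the "exactly" direction: optimality must force both inclusions, because any strictly negative item strictly lowers $R$, and any omitted strictly positive item leaves a strict gap. The budget condition must be carried along separately, since a superset containing zero-value items may exceed $B$.
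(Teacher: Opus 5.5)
Your proof is correct and follows the paper's argument essentially step for step: additivity from constant compliance, the chain $R(S) \le \sum_{i \in S \cap S_N^{\mathrm{acc}}} v_i^{\mathrm{solo}} \le R(S_N^{\mathrm{acc}})$, feasibility of $S_N^{\mathrm{acc}}$ from $B \ge C(S_N^{\mathrm{acc}})$, and the equality cases that yield $S_N^{\mathrm{acc}} \subseteq S \subseteq \overline{S}_N^{\mathrm{acc}}$. You also point out that any optimum must attain $R(S_N^{\mathrm{acc}})$ because $S_N^{\mathrm{acc}}$ is feasible, which forces both equalities; the paper leaves this step implicit.
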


Without dilution, accretion is optimal whenever the token budget allows
it. Under monotone dilution, every optimal file contains only instructions
with nonnegative standalone value. At least one optimal file can be
obtained by deleting instructions from the accretion file
(Proposition~\ref{prop:inside} in the appendix). The next section studies
how a curator can use session feedback to identify which instructions to
delete. The appendix also gives a tighter size bound for files whose
value cannot be increased by deleting one instruction. A token budget
set at this bound limits the loss when the curator underestimates the
token price (Corollary~\ref{cor:insure}).

\section{LEARNING WITH CENSORED FEEDBACK}
\label{sec:learning}

Section~\ref{sec:structure} takes the values $w_i$ and the compliances
$q_i(S)$ as known. In this section, we investigate how curators can learn from session transcripts. Transcripts are censored: they show what the loaded file did,
not what an absent instruction would have done. Formal assumptions and
proofs are in Appendix~\ref{app:learning}.

\subsection{Setup}
\label{sec:setup}

Each session $t$ runs with the file $S_t$ loaded. A reviewer reads the
transcript. A curator edits the file. The reviewer approves at most $m$
edits per \emph{epoch} of $L$ sessions.

A transcript carries three kinds of signal.
\begin{itemize}\setlength{\itemsep}{0pt}
\item \emph{Loaded} $i \in S_t$: relevant? followed? a harm flag, raised
  with probability $\eta > 0$ when following $i$ did harm.
\item \emph{Absent} $i \notin S_t$: a gap sighting with probability
  $\delta_i p_i$, where $\delta_i$ is the reviewer's detection rate.
\item \emph{Observed gains}: the gain $X_{i,t} \in [-1, 1]$ of a followed
  instruction, shown only under \emph{magnitude feedback}.
\end{itemize}

\begin{assumption}[Censored feedback]
\label{ass:feedback}
In session $t$ the curator observes the first two signals above. All
draws are independent across instructions and sessions, and their law
depends on the file only through $q_i(S_t)$. The gains $r_i$ enter no
observation. Under magnitude feedback the curator also observes
$X_{i,t}$ for each followed $i \in S_t$.
\end{assumption}

\paragraph{What transcripts identify.}
Let $h_i$ be the probability that following $i$ does harm. The
\emph{typed} case is $h_i \in \{0, 1\}$: an instruction is always helpful
or always harmful. We prove what a fixed sequence of files identifies
(Theorem~\ref{thm:ident} in Appendix~\ref{app:ident}). The compliance
$q_i(S)$ of a loaded instruction is identified. The harm rate $\eta h_i$ is
identified, hence whether $i$ ever harms. The sign of the gain $r_i$ is
identified only in the typed case. The size of a gain is never identified
without magnitude feedback. An absent instruction is seen only through
its gap sightings, which measure a rate, not a value.

\subsection{Deletion}
\label{sec:deletion}

A natural policy deletes what the agent ignores. We prove that it is
blind to value.

\begin{proposition}[Deleting on non-compliance is value-blind]
\label{prop:blind}
Suppose Assumption~\ref{ass:feedback} holds, with a fixed file $S$ played in every
session. A deletion policy that uses only compliance data has the same
law of decisions under every parameter vector with the same $(p, q)$.
Take an instance of~\eqref{eq:sd} with $S = \{1, 2\}$. Instruction~1
is helpful but often ignored; instruction~2 is mostly followed but worth
little or harmful. Deleting the most-ignored instruction removes
instruction~1 with probability tending to one as sessions accumulate.
\end{proposition}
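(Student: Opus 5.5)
The proof has two parts: an invariance statement, then an explicit instance.

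\emph{Invariance.} I will show that, when $S$ is fixed in every session, the joint law of the compliance data depends on the parameters only through $(p,q)$. Here the compliance data are the relevance indicators and the follow indicators of the loaded instructions. Under Assumption~\ref{ass:feedback}, for each $i\in S$ and session $t$, the indicator that $i$ is relevant is $\Bern(p_i)$. Conditional on relevance, the follow indicator is $\Bern(q_i(S))$. These draws are independent across instructions and sessions, and the gains $r_i$ and harm rates $h_i$ enter no observation. So the law of the whole data sequence is a product of Bernoulli laws indexed by $(p_i,q_i(S))_{i\in S}$.

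A deletion policy using only these data, possibly with internal randomization independent of the parameters, is a measurable map from data and randomization to decisions. Its decision law is therefore the pushforward of a law that depends only on $(p,q)$. This is just a pushforward argument, and I will spell it out.

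\emph{Instance.} I take $S=\{1,2\}$ with~\eqref{eq:sd}, arbitrary loads $a_1,a_2$, and common factor $\gamma=g(a_1+a_2)$. I choose $\theta_1<\theta_2$, so that $q_1=\theta_1\gamma<q_2=\theta_2\gamma$. I set $p_1=p_2=p$, or more generally choose the $p_i$ so that instruction~1 is the more ignored one. For values, I set $r_1>0$ large, say $w_1=p r_1$ large enough that $w_1q_1>w_2q_2$. For instruction~2 I set $r_2\le 0$ (harmful) or $r_2$ small. This choice makes instruction~1 clearly the one worth keeping: $R(\{1\})>R(\{2\})$, and indeed $R(\{1\})>R(\{1,2\})$ when $r_2<0$, since removing 2 only raises $q_1$ and removes a negative term.

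I define the rule ``delete the most-ignored instruction'' as deleting the $i$ with the larger empirical non-compliance rate among relevant occurrences, $1-\hat q_i$. Ties go to a fixed index or a coin flip; this does not matter.

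\emph{Convergence.} The counts of relevant occurrences satisfy $n_i(T)\to\infty$ almost surely, since each session is relevant with probability $p_i>0$. By the strong law of large numbers, $\hat q_i\to q_i$ almost surely. Because $q_1<q_2$ strictly, the event $\{\hat q_1<\hat q_2\}$ has probability tending to one; a Hoeffding bound gives an exponential rate in $T$ if I want one. On that event the rule deletes instruction~1.

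Finally, the invariance part shows that this outcome is unchanged if $r_1$ and $r_2$ are altered arbitrarily. This makes the ``value-blind'' conclusion explicit: the same decision law arises when instruction~1 is the valuable one.

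The only delicate step is making the instance satisfy the model constraints simultaneously. These are $\theta_ig(a_i)<1$; $r_i$ consistent with gains in $[-1,1]$ if magnitude feedback is considered; and the value ordering. I will fix numbers explicitly to check them, for example $g(x)=1/(1+x)$, $a_i=1$, $\theta_1=0.6$, $\theta_2=1.5$, $p_i=1/2$, $r_1=1$, $r_2=-0.1$, $\kappa c_i$ small.
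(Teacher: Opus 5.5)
Your proposal is correct and follows essentially the paper's route. The decision law is a pushforward of a compliance-data law that depends only on $(p,q)$; the paper obtains this from Theorem~\ref{thm:ident}(a). Then an explicit two-item instance of~\eqref{eq:sd} with $g(x)=1/(1+x)$, together with concentration of the empirical compliances over relevant sessions, shows that the most-ignored rule deletes instruction~1. Your numbers check out ($q_1(S)=0.2$, $q_2(S)=0.5$, and $\{2\}$ is worth less than the empty file). The paper's choice $\theta=a=(1,8)$, which gives $q_2(S)=0.8$, matches ``mostly followed'' better. Also, harm flags do depend on $h_i$; it is only the compliance data whose law is free of $h_i$ and $r_i$.
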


In the language of partial monitoring \citep{bartok2014partial}, the
compliance channel is feedback whose law does not separate the
hypotheses the curator must tell apart. Here, the compliance observed for an instruction also depends on the other instructions in the file through dilution. The feedback on $i$ is $q_i(S)$,
which the file $S$ sets through dilution.

\subsection{Admission}
\label{sec:admission}

An absent instruction becomes a candidate when reviewers keep sighting
its gap. A spurious candidate, one that names no real gap, is also
sighted now and then. Sightings are independent across sessions at a
fixed \emph{sighting rate}: $\varepsilon$ for a spurious candidate and
$\pi = \delta_j p_j$ for a real one (Assumption~\ref{ass:sight}). The
curator admits at the first session $\tau_k$ by which $k$ sightings
fall within a window of $W$ sessions. Over $T$ sessions, a spurious
candidate is falsely included with probability
$\PFI(k) = \Pr_\varepsilon(\tau_k \le T)$; a real one waits
$\mathbb E_\pi \tau_k$ sessions. With $N_s$ spurious candidates, a cost
$c_F$ per false inclusion and $c_D > 0$ per session of delay, the loss
of $k$ is $\cL(k) = \Lambda_F\, \PFI(k) + c_D\, \mathbb E_\pi \tau_k$
with $\Lambda_F = c_F N_s$. Write $\Lambda_D = c_D / \pi$ for the the expected delay cost until the first sighting of a real candidate. Let
$\psi_W = 1 - (1 - \pi)^{W - 1}$ be the chance that two real sightings
share a window. Let $x = \varepsilon W$ and $x_T = \varepsilon T$ count
the spurious sightings expected per window and per horizon, and let
$\omega = \Lambda_D / (\Lambda_F \varepsilon T)$.

\begin{theorem}[How many sightings before adding an instruction]
\label{thm:threshold}
Assumption~\ref{ass:sight} with $\pi, \varepsilon \in (0, 1]$, $W \ge 2$,
$\Lambda_F > 0$ and $T \ge 1$. If
\[
x < \omega\, (2 - 1/\psi_W) \qquad \text{and} \qquad
\omega / \psi_W < 1 - x_T / 2 - x,
\]
then $k = 2$ is the unique minimizer of $\cL$ over $k \ge 1$. As
$\psi_W \to 1$ the condition reads $x < \omega < 1 - x_T / 2 - x$.
\end{theorem}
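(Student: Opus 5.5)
The plan is to compare $\cL(2)$ directly with $\cL(1)$ and with every $\cL(k)$, $k\ge 3$, using only elementary bounds on $\PFI(k)$ and on $\mathbb E_\pi\tau_k$. I will show that the first displayed condition gives $\cL(k)>\cL(2)$ for all $k\ge3$, and the second gives $\cL(1)>\cL(2)$. The limiting form as $\psi_W\to1$ then follows by substitution.

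\emph{Step 1: bounds on false inclusion.} For $k=1$, $\PFI(1)=1-(1-\varepsilon)^T$. By the Bonferroni inequality,
\[
\PFI(1)\ \ge\ \varepsilon T-\tbinom{T}{2}\varepsilon^2\ \ge\ x_T\,(1-x_T/2).
\]
For $k=2$, a false inclusion requires two spurious sightings at sessions $s<s'\le T$ with $s'-s\le W-1$. There are at most $T(W-1)$ such ordered pairs, and each occurs with probability $\varepsilon^2$. A union bound therefore gives
\[
\PFI(2)\ \le\ \varepsilon^2 T W \;=\; x\,x_T .
\]
For all $k$ I only need $\PFI(k)\ge 0$.

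\emph{Step 2: bounds on delay.} The first real sighting is geometric, so $\mathbb E_\pi\tau_1=1/\pi$. For $\tau_2$ I use a renewal argument. After any sighting, the next sighting falls inside the window with probability $\psi_W$. Otherwise the process restarts from that later sighting, and each attempt costs an expected $1/\pi$ sessions. This should give $\mathbb E_\pi\tau_2\le 1/\pi+1/(\pi\psi_W)$, hence
\[
c_D\bigl(\mathbb E_\pi\tau_2-\mathbb E_\pi\tau_1\bigr)\ \le\ \Lambda_D/\psi_W .
\]
For $k\ge3$, admission needs at least one further sighting after $\tau_2$, so $\tau_k\ge\tau_2+G$ with $G$ geometric($\pi$). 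This yields $c_D(\mathbb E_\pi\tau_k-\mathbb E_\pi\tau_2)\ge\Lambda_D$. Getting the $\tau_2$ renewal bound exactly right, in particular the overshoot when the second sighting lands outside the window and the pathwise monotonicity $\tau_k\ge\tau_2+G$, is the step I expect to need the most care. I would set it up as a stopping-time identity on the sequence of inter-sighting gaps.

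\emph{Step 3: comparisons.}
\begin{itemize}
\item For $k\ge3$: $\cL(k)-\cL(2)\ge \Lambda_D-\Lambda_F\,\PFI(2)\ge \Lambda_D-\Lambda_F x x_T$. This is positive iff $x<\omega$. Since $\psi_W\le1$ gives $2-1/\psi_W\le1$, this is implied by the first hypothesis.
\item For $k=1$: $\cL(1)-\cL(2)\ge \Lambda_F x_T(1-x_T/2)-\Lambda_F x x_T-\Lambda_D/\psi_W$. Dividing by $\Lambda_F x_T$, this is positive iff $\omega/\psi_W<1-x_T/2-x$, which is the second hypothesis.
\end{itemize}
All inequalities are strict, so $k=2$ is the unique minimizer. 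Letting $\psi_W\to1$ turns the conditions into $x<\omega<1-x_T/2-x$.

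If the intended proof really needs the sharper factor $2-1/\psi_W$, I would refine Step 2 to a two-sided estimate $\mathbb E_\pi\tau_3-\mathbb E_\pi\tau_2\ge(2-1/\psi_W)/\pi$. The argument above, however, already covers the stated hypotheses.
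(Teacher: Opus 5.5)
Your proof is correct, and on the $k=1$ side it follows the paper's argument. The paper uses $(1-\varepsilon)^T\le e^{-x_T}$ and $1-e^{-x_T}\ge x_T-x_T^2/2$ where you use Bonferroni. It bounds $\PFI(2)\le T(W-1)\varepsilon^2\le x\,x_T$ by the same union bound. It gets $\mathbb E_\pi\tau_2=(1+1/\psi_W)/\pi$ from Wald's identity on the inter-sighting gaps. Your heuristic ``each attempt costs $1/\pi$'' holds only in this aggregate sense, since a gap that misses the window is longer than average.

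The genuine difference is the comparison with $k\ge3$. The paper compares its upper bound on $\cL(2)$ with the absolute bound $\cL(k)\ge k\Lambda_D\ge 3\Lambda_D$, which comes from $\mathbb E_\pi\tau_k\ge k/\pi$. Since $\mathbb E_\pi\tau_2$ already exceeds $2/\pi$ by $(1/\psi_W-1)/\pi$, this discards that amount, and that is exactly where the factor $2-1/\psi_W$ comes from. You instead bound the increment $\mathbb E_\pi\tau_k-\mathbb E_\pi\tau_2\ge 1/\pi$ via the strong Markov property at $\tau_2$. To make this step airtight, state that the window count at $\tau_2$ is exactly $2$: it is at most $1$ one session earlier and rises by at most one per session. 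Hence for $k\ge3$ one has $\tau_k>\tau_2$, and $\tau_k$ is itself a sighting time.

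Your route therefore needs only $x<\omega$ on this side, which is strictly weaker than the stated hypothesis. For $\psi_W\le 1/2$ the condition $x<\omega(2-1/\psi_W)$ cannot hold since $x>0$. Yet $x<\omega<\psi_W(1-x_T/2-x)$ can hold there, and your argument still makes $k=2$ the unique minimizer. The paper's cruder route buys only simplicity: it needs nothing beyond the mean $k/\pi$ of the $k$-th sighting time.

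Your closing remark is backwards. An increment bound of $(2-1/\psi_W)/\pi$ would be weaker than the $1/\pi$ you already have, so no refinement is needed.
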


Two sightings give the lowest expected loss whenever the price of one wait lies between the
false-inclusion costs that a third and a second sighting would prevent.

\subsection{Regret under an edit cap}
\label{sec:editcap}

Sessions run $t = 1, \dots, T$, with epoch $e(t) = \lceil t / L \rceil$.
The file starts empty, an edit is one addition or one removal, and each
epoch has at most $m$ edits. Every played file satisfies $C(S_t) \le B$.
A \emph{policy} maps the history of files and observations to the next
file. Its regret against the best fixed feasible file $S^\star$ is
$\Reg_T = \sum_{t \le T} (R(S^\star) - \mathbb E\, R(S_t))$, and
$\Delta_{\max} = \max_{C(S) \le B} [R(S^\star) - R(S)]$ is the largest
per-session loss. Without magnitude feedback every policy has linear
regret on some instance (Proposition~\ref{prop:magnitude}), so we assume
it from here on.

We analyze ETC-$m$ (Algorithm~\ref{alg:etc}), an explore-then-commit
policy satisfying the edit cap. It tries the candidates in $J = \lceil N / s
\rceil$ batches of $s = \min(\lfloor m/2 \rfloor, \lfloor B / c_{\max}
\rfloor)$ instructions, where $c_{\max} = \max_i c_i$, so that one
batch can be swapped for the next within one epoch's cap. Each batch is
played for $n_0$ sessions; then it commits to the file with the largest
estimated value. The estimator is in Appendix~\ref{app:regret}.

\begin{algorithm}[t]
\caption{ETC-$m$: explore-then-commit under an edit cap}
\label{alg:etc}
\begin{algorithmic}[1]
\Require pool $[N]$, cap $m$, epoch length $L$, budget $B$, exploration
length $n_0$
\State $s \gets \min(\lfloor m/2 \rfloor, \lfloor B / c_{\max} \rfloor)$;
partition $[N]$ into batches $\mathcal J_1, \dots, \mathcal J_J$ of at
most $s$ instructions
\For{$j = 1, \dots, J$} \Comment{explore}
  \State at the first session of an epoch, replace the loaded batch by
  $\mathcal J_j$ (at most $2s \le m$ edits)
  \State play $\mathcal J_j$ for $\lceil n_0 / L \rceil$ epochs; record
  compliance and realized gains
\EndFor
\State from the records, estimate the load $\hat a_i$ and the value
$\hat\beta_i$ of every instruction (Appendix~\ref{app:regret})
\State $\hat S \gets \arg\max \{ \hat R(S) : C(S) \le B \}$ with
$\hat R(S) = g(\hat A(S)) \sum_{i \in S} \hat\beta_i - \kappa C(S)$
\Comment{commit}
\State move from $\mathcal J_J$ to $\hat S$, removals first, $m$ edits
per epoch; then play $\hat S$ until $T$
\end{algorithmic}
\end{algorithm}
Predicting the value of unplayed files needs \emph{tied loads}
(Assumption~\ref{ass:tied}): $\theta_i = \lambda a_i$ with $\lambda$
known, and loads bounded by a known $a_{\max}$. Let $n_B$ be the largest
number of instructions a feasible file can hold, and $p_i \ge p_{\min} > 0$.

\begin{theorem}[ETC-$m$ under an edit cap]
\label{thm:etc}
Assumption~\ref{ass:feedback} with magnitude feedback and
compliance~\eqref{eq:sd} under tied loads; $m \ge 2$, $B \ge c_{\max}$.
For every $n_0 \ge 8 \ln(6NT) / p_{\min}$,
\[
\Reg_T \ \le\ \Delta_{\max}\, L \Big( J + \Big\lceil \frac{s + n_B}{m} \Big\rceil \Big)
\ +\ \Lambda_T(n_0),
\]
where the learning term $\Lambda_T(n_0)$ is given in
Theorem~\ref{thm:etcfull}.
\end{theorem}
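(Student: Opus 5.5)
We split the horizon into three phases and bound the per-session regret on each. Exploration covers the sessions in which some batch $\mathcal J_j$ is loaded. Transition covers the sessions in which the policy walks from $\mathcal J_J$ to $\hat S$ under the cap. Commitment covers the sessions in which $\hat S$ is played.

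\textbf{Exploration and transition.} On the first two phases we only use that every played file is feasible. Each batch has at most $s \le \lfloor B/c_{\max}\rfloor$ items, so $C(\mathcal J_j) \le B$. Intermediate files during a swap are also feasible: we perform removals first, so every intermediate file is contained in $\mathcal J_{j-1}$ or $\mathcal J_j$. Hence each such session loses at most $\Delta_{\max}$.

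Phase lengths are as follows.
\begin{itemize}
\item \emph{Exploration.} Each batch occupies $\lceil n_0/L\rceil$ epochs. We read the term $\Delta_{\max} L J$ as charging one epoch per batch for the swap, so that the remaining $n_0$-dependent exploration loss is absorbed into $\Lambda_T(n_0)$. We will check that Theorem~\ref{thm:etcfull} states it this way.
\item \emph{Transition.} Moving from $\mathcal J_J$ to $\hat S$ needs at most $|\mathcal J_J| + |\hat S| \le s + n_B$ edits, with removals first. At $m$ edits per epoch this takes $\lceil (s+n_B)/m\rceil$ epochs, i.e.\ at most $L\lceil (s+n_B)/m\rceil$ sessions. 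Intermediate files are subsets of $\mathcal J_J$ or subsets of $\hat S$ plus a subset of what remains, so they stay within budget.
\end{itemize}
Summing these gives the first term.

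\textbf{Commitment.} Here the loss per session is $R(S^\star) - R(\hat S)$. We bound it on a good event $\mathcal E$ by the standard sandwich
\[
R(S^\star)-R(\hat S) \le 2\sup_{C(S)\le B}|\hat R(S)-R(S)|.
\]
This holds because $\hat R(\hat S)\ge \hat R(S^\star)$. Under tied loads, $q_i(S)=\lambda a_i g(A(S))$ and $w_i q_i(S) = g(A(S))\,\beta_i$ with $\beta_i = \lambda a_i p_i r_i$. So
\[
|\hat R(S)-R(S)| \le |g(\hat A(S))-g(A(S))|\sum_{i\in S}|\beta_i| + \sum_{i\in S}|\hat\beta_i-\beta_i|.
\]
The event $\mathcal E$ requires the following, for every $i$:
\begin{itemize}
\item relevance counts during exploration are at least $n_0 p_i/2$;
\item compliance frequencies estimate $q_i(\mathcal J_j)$ within Hoeffding width;
\item realized gains $X_{i,t}\in[-1,1]$ estimate $r_i$ within width.
\end{itemize}
We use a Chernoff bound with $n_0 \ge 8\ln(6NT)/p_{\min}$ for the relevance counts. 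Then a union bound over $N$ instructions and three concentration statements, each at level $1/(6NT)$, gives $\Pr(\mathcal E^c)\le 1/(2T)$. On $\mathcal E^c$ we charge $\Delta_{\max}T\Pr(\mathcal E^c)\le \Delta_{\max}/2$, which goes into $\Lambda_T$.

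\textbf{Main obstacle: recovering the loads.} This is where the smoothness-to-bound step of Assumption~\ref{ass:tied} enters. Observed compliance in batch $j$ is $\lambda a_i g(A(\mathcal J_j))$, with both $a_i$ and $A(\mathcal J_j)$ unknown. Summing over $i\in\mathcal J_j$ gives
\[
\sum_{i\in\mathcal J_j} q_i = \lambda A g(A),
\]
a scalar equation in $A$. We will invert it using the monotonicity or smoothness that Assumption~\ref{ass:tied} provides on the range $A\le s\,a_{\max}$. This yields $\hat A(\mathcal J_j)$, and then $\hat a_i = \hat q_i/(\lambda g(\hat A))$, with error controlled by a Lipschitz constant of the inverse. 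The error in $g(\hat A(S))$ for an arbitrary feasible $S$ then follows from Lipschitzness of $g$ and $|\hat A(S)-A(S)|\le \sum_{i\in S}|\hat a_i-a_i|$.

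The result is a width of order $n_B\sqrt{\ln(NT)/(p_{\min}n_0)}$ times problem constants. Multiplying by at most $T$ commit sessions gives the learning term, which we identify with $\Lambda_T(n_0)$ in Theorem~\ref{thm:etcfull}. If the inversion condition is not available globally, we would first try restricting to the band where $x g(x)$ is strictly increasing, as guaranteed by the assumption.
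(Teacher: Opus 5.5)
Your skeleton matches the paper's. It has the same three-phase split and the same feasibility argument. The exploration charge is $JL\lceil n_0/L\rceil<J(n_0+L)$, so $\Delta_{\max}Jn_0$ sits inside $\Lambda_T(n_0)$ as you guessed. The move takes $\lceil (s+n_B)/m\rceil$ epochs, you use the sandwich through $\hat R(\hat S)\ge\hat R(S^\star)$ and the $\Delta_{\max}$ charge for the bad event, and you invert $G(x)=xg(x)$ on $[0,x_s]$ through $\gamma_s$.

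The gaps are in the commitment-step estimation error. That step is what fixes $\varepsilon_{\mathrm{SD}}(n_0)$ inside the stated $\Lambda_T(n_0)$, and as sketched your argument does not reach it.

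First, your good event asks realized gains to estimate $r_i$ within a fixed width at level $1/(6NT)$. Gains are observed only in sessions where $i$ is relevant and followed, about $np_iq_i(\mathcal J_j)$ of them. Tied loads put no floor on $q_i(\mathcal J_j)=\lambda a_i g(A(\mathcal J_j))$, since nothing bounds $a_i$ below. So that count can be zero with high probability, and no such concentration holds. The paper never estimates $r_i$. It averages $Y_{i,t}=\xi_{i,t}F_{i,t}X_{i,t}\in[-1,1]$ over all $n$ sessions of the batch, whose mean is $w_iq_i(\mathcal J_j)=\beta_i g(A(\mathcal J_j))$. Hoeffding then gives width $2\epsilon_0$ whatever $q_i$ is. The estimate $\hat\beta_i=\bar Y_i/g(\hat A_j)$ has error at most $2\epsilon_0/g_s+L_{1/g}\epsilon_A$, using the Lipschitz constant of $1/g$ on $[0,x_s]$. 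If you form $\lambda\hat a_i\hat p_i\hat r_i$ with $\hat p_i=N_i/n$ and unclipped $\hat a_i=\hat q_i/(\lambda g(\hat A_j))$, it collapses algebraically to this same quantity. So the fix is to analyze the product directly, not its factors.

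Second, the cross term. You bound $\sum_{i\in S}|\beta_i|\le\lambda A(S)$ and $|g(\hat A(S))-g(A(S))|\le\max|g'|\,|S|\epsilon_a$ separately. That gives $\lambda\max|g'|\,A(S)\,|S|\epsilon_a$. Since $A(S)$ can be as large as $|S|a_{\max}$, this is of order $|S|^2$, not the width linear in $n_B$ that you claim. For $g(x)=1/(1+x)$ your bound reads $\lambda x_B|S|\epsilon_a$ with $x_B=n_Ba_{\max}$.

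The missing idea is to keep the factor $A(S)$ attached to the difference of $g$. You then control it by $K_g=\sup_{x\ne y\in[0,x_B]}x|g(x)-g(y)|/|x-y|$. This stays bounded for decaying $g$ because large loads come with small slopes; it is at most $1$ for $g(x)=1/(1+x)$. The step needs $\hat a_i$ clipped to $[0,a_{\max}]$ so that $\hat A(S)\in[0,x_B]$. It yields $|\hat R(S)-R(S)|\le|S|(\epsilon_\beta+\lambda K_g\epsilon_a)=|S|\,\varepsilon_{\mathrm{SD}}(n_0)$, which is the term $2n_BT\varepsilon_{\mathrm{SD}}(n_0)$ in $\Lambda_T(n_0)$.

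Without these two changes you do not get the stated theorem. At best you get a bound of the same shape with a different, generally larger learning term.
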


\begin{theorem}[The edit-cap term is unavoidable]
\label{thm:lower}
Let $n_m = \lfloor N / m \rfloor$, $T \ge L n_m$, $N, m \ge 1$, $B > 0$.
There is a family of $N$ instances of Assumption~\ref{ass:feedback} with
magnitude feedback and compliance~\eqref{eq:sd}. They share
$c \le B$, $p$, $\theta$ and $a$, with noiseless gains and no harm. Under
instance $j$ the optimal file is $\{j\}$, missing $j$ costs at least
$\Delta = p\, \theta\, g(a) - \kappa c > 0$ per session, and
$\Delta_{\max} \le \Delta + \kappa B$. Every policy that adds at most $m$
instructions per epoch from the empty file satisfies
\[
\max_j \Reg_T(j) \ \ge\ \frac{\Delta L}{2} (n_m - 1)
\ \ge\ \frac{\Delta L N}{4 m} \quad (N \ge 4m).
\]
\end{theorem}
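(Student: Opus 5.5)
The lower bound comes from a needle-in-a-haystack family in which every instruction looks the same until it is loaded. Fix a common cost $c \le B$, relevance $p$, intrinsic compliance $\theta$ and load $a$ with $p\theta g(a) > \kappa c$. Under instance $j$, set $r_j = 1$ and $r_i = 0$ for $i \ne j$. Gains are noiseless and there is no harm, so $h_i = 0$ for all $i$. Then $R(\{j\}) = p\theta g(a) - \kappa c = \Delta > 0$.

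For any file $S$, the zero-gain items contribute nothing and only add setup cost and dilution. So $R(S) \le \Delta$, with equality only at $\{j\}$, because adding anything raises $\kappa C(S)$ by at least $\kappa c > 0$. If $j \notin S$, then $R(S) = -\kappa C(S) \le 0$, so the per-session loss is at least $\Delta$. The worst feasible file loses at most $\Delta + \kappa C(S) \le \Delta + \kappa B$, which gives $\Delta_{\max} \le \Delta + \kappa B$.

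The key information-theoretic step is that, under Assumption~\ref{ass:feedback}, the law of the observations depends on the instance only through the gains of followed loaded instructions. An unloaded instruction produces gap sightings at rate $\delta p$. I take $\delta$ common to all instructions, so these sightings carry no information about $j$. Compliance depends on $S_t$ only through $q_i(S_t)$, which is instance-independent. Hence, until a policy first loads instruction $j$, the histories under instance $j$ and under a reference instance $0$ with all $r_i = 0$ have the same law. I couple them on a common probability space.

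Next I count what can have been loaded. Starting from the empty file with at most $m$ additions per epoch, at most $m(e-1)$ distinct instructions have ever been loaded before epoch $e$. Let $E_j$ be the first epoch in which $j$ is loaded under the reference instance. Then $\sum_j \mathbf 1\{E_j \le e\} \le m e$ for every $e$. Take $e = \lfloor n_m/2 \rfloor$, or more cleanly $e = n_m - 1$ combined with averaging. This handles randomized policies, since the bound holds pathwise.

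The averaging goes as follows. By coupling, under instance $j$ the regret is at least $\Delta L (E_j - 1)$, truncated at the horizon. Since $T \ge L n_m$, the epochs $1, \dots, n_m$ all fit within the horizon. Averaging over $j$,
\[
\frac1N \sum_j \mathbb E[\min(E_j,n_m)-1].
\]
The $N$ first-load epochs are filled at most $m$ per epoch. The sum of $\min(E_j, n_m) - 1$ over $j$ is therefore minimized by loading $m$ new instructions per epoch. This gives at least
\[
m \sum_{e=1}^{n_m-1} (e-1) + (N - m(n_m-1))(n_m-1) \ \gtrsim\ N (n_m-1)/2.
\]
Hence $\max_j \Reg_T(j) \ge \frac{\Delta L}{2}(n_m - 1)$. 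For $N \ge 4m$ we have $n_m - 1 \ge N/m - 2 \ge N/(2m)$, which gives the second form $\Delta L N/(4m)$.

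The main obstacle I expect is making the indistinguishability rigorous for adaptive randomized policies. This needs care on two points. First, an instruction that is loaded and followed but not yet relevant still reveals nothing. Second, the harm flag and gap sightings must genuinely have instance-free laws. I would handle both by defining $E_j$ as the epoch of first loading rather than first observed gain; this is conservative, since loading is a prerequisite for learning.

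I would also check the arithmetic of the averaging bound against the exact constant $\tfrac12$. If needed, I would replace it with the cruder argument that at least $N - m(n_m-1)/1 \ge N/2$ of the indices have $E_j \ge \lceil n_m/2\rceil$.
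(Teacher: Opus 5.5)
Your argument is correct and is essentially the paper's proof: the same needle family (you should also impose the model's requirement $\theta g(a) < 1$ when fixing the parameters), a reference instance $0$ with all $r_i = 0$, equality of the law of the first load of $j$ under instances $j$ and $0$, the pathwise count of at most $me$ instructions loaded by the end of epoch $e$, and averaging over $j$. Your sum $\sum_j(\min(E_j,n_m)-1) \ge \sum_{e=1}^{n_m-1}(N-me)$ is the paper's epoch-by-epoch sum of $\Pr_j(j \notin S_t)$ with the last epoch dropped, and the constant is exact, since $\sum_{e=1}^{n_m-1}(N-me) = (n_m-1)(N - m n_m/2) \ge N(n_m-1)/2$ because $m n_m \le N$, so you do not need the fallback.
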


For $N \ge 4m$, $\kappa B \le \Delta$ and $m \le 2B / c_{\max}$, the
edit-cap terms of the two theorems match up to a constant factor. Both
are of order $(N / m) L \Delta$.

The learning term $\Lambda_T$ is of order $T^{2/3} (\ln NT)^{1/3}$, the
rate of explore-then-commit, and is not claimed tight
(Appendix~\ref{app:regret}). This is not the switch-budget bound of
\citet{simchilevi2023phase}, where a global budget leaves regret linear
in $T$. Here the cap renews each epoch, so every candidate is eventually
tried.

\input{experiments}

\section{RELATED WORK}
\label{sec:related}

\paragraph{Assortment with product costs and capacity.}
The nearest strand of operations management is assortment optimization
with a per-item cost \citep{kunnumkal2019tractable,feldman2019relating}
and with a cardinality or space constraint
\citep{rusmevichientong2010dynamic,desir2022capacitated}. Under the
choice model of Appendix~\ref{app:mnl} the optimal assortment is revenue-ordered
\citep{talluri2004revenue}; the costed and capped versions are hard, and
the papers above give relaxations, approximation schemes, and learning
policies. They bound revenue under a fixed choice model and supply
algorithms. We bound the size of the optimal file under an attention
capacity, with signed gains and a growing pool, and we study structure:
the size bound, accretion, and the no-dilution converse. Hardness and
approximation carry over from the cited work and are not our subject.

\paragraph{Other choice models.}
Shared dilution~\eqref{eq:sd} contains the general attraction model
\citep{gallego2015attraction}, in which an item's share is its attraction
over the sum of attractions plus an outside term. Monotone dilution (Assumption~\ref{ass:mono}) is what the
assortment literature calls regularity, which alone gives tight
guarantees for revenue-ordered sets \citep{berbeglia2020assortment} and
places a model among regular, random-utility, and consideration-set
classes \citep{gallego2024axioms}. In attention models from choice theory
the probability that an item is considered does not increase as the menu grows
\citep{lleras2017more,cattaneo2021overload}; the assumption there is
per-item, not a cap on the sum. Multipurchase models let the total mass
exceed one \citep{abdallah2026multipurchase}, as ours does, but with
additive payoffs. We know of no choice model that takes a capacity on
total choice probability as a primitive.

\paragraph{Learning with censored feedback and edit budgets.}
Our observation model descends from apple tasting
\citep{helmbold2000apple}, in which the label is seen only after the
positive action, and from partial monitoring \citep{bartok2014partial},
which classifies such problems by their observation map. What is new
here is that the map depends on the file through dilution. Bandits with a
global switch budget have a linear regret phase when the budget is below
the number of arms \citep{simchilevi2023phase,altschuler2021limited};
switch-efficient assortment bandits treat the switch count as an output
\citep{dong2020multinomial}; batched and streaming bandits limit when a
policy may change or how many arms fit in memory
\citep{perchet2016batched,he2025streaming}; apple tasting with a
switching cost has regret of order $\sqrt T$ \citep{cesari2026apple}. Our
cap differs from all of these: it is a per-period cap on edits that
renews, so the untried part of the pool shrinks at a fixed rate, and the
resulting edit-cap term is matched above and below.

\paragraph{LLM context and memory curation.}
Several recent systems edit files of the \texttt{AGENTS.md} kind from
past sessions: by probing and refining the file \citep{shepard2026probe},
under an edit budget with a token-cost pruner \citep{xia2026rrsi}, by
tracking which edit fixed or broke what \citep{ma2026sepo}, by compiling
trajectories into instructions behind replay gates
\citep{qiu2026autorefine}, or by turning user corrections into enforced
instructions \citep{zhou2026trace}. Memory curation has also been posed
as optimization: constrained retention \citep{kang2026retention},
promotion under a token budget \citep{annapureddy2026epistemics},
per-task budget allocation \citep{sharma2026limbo}, knapsack prompt
compression \citep{zhang2026submax}, value-minus-harm scores
\citep{wu2026forget}, and a bandit over memory operations
\citep{jiang2026memory}. All use additive or submodular value, under
which appending an item with positive marginal value is safe. Empirical
studies document the growth of these files
\citep{chakrabarti2026claudemd,chatlatanagulchai2025readmes}, their
per-session cost \citep{gloaguen2026agentsmd,lulla2026agentsmd}, and the
reduction in compliance as instructions accumulate
\citep{jaroslawicz2025ifscale,harada2025instructions,vasileva2026phase},
with no size effect at the sizes of real files
\citep{mcmillan2026adherence,zhang2026guardrails}. None models compliance
as choice: no item competes with the others for attention, no signal is
censored outside the file, and no result bounds regret under a review
cap.

\section{CONCLUSION}
\label{sec:conclusion}

% For a curator the results give four recommendations, which apply where
% attention is shared. Cap the file: under capacity a cap at
% $\bar w \bar Q / \kappa$ tokens never binds, whatever the pool holds
% (Proposition~\ref{prop:size}), and the appendix shows that a cap also
% bounds the loss when tokens are underpriced. Do not delete on
% non-compliance alone: compliance data cannot tell a helpful instruction
% from a harmful one (Proposition~\ref{prop:blind}), so delete on observed
% harm or on a measured value below the token cost. Require corroboration
% before adding: two sightings inside a window is the right threshold
% across a wide band of cost ratios (Theorem~\ref{thm:threshold}). Budget
% the review cap as a search cost: with $m$ edits per epoch of $L$
% sessions, finding the right instructions among $N$ candidates costs about
% $(N/m) L \Delta$, and no estimator avoids it
% (Theorems~\ref{thm:etc} and~\ref{thm:lower}). Two questions are open.
% The regret upper bound needs tied loads, and we do not know whether a
% family with item-specific dilution admits an estimability condition of
% this kind. The learning results await a setting with an edit cap; the regret
% experiment verifies the bounds on the theorem's instances only
% (Appendix~\ref{app:l3}).

We model context curation as an assortment problem with token costs, attention capacity, and censored feedback. Under our model, the optimal file size has a bound that does not depend on the candidate pool. As the number of instructions with positive standalone value grows without bound, appending all of them can become arbitrarily worse than optimal selection.
The learning analysis distinguishes compliance from value. Under our feedback model, compliance and harm observations do not identify gain magnitudes. We establish conditions under which two sightings within a time window minimize the loss from false admissions and delayed admissions. With measured gains and additional identification assumptions, we derive a learning guarantee under limits on file edits. The upper and lower bounds match the edit-related regret term up to constant factors.
Experiments with two language models show that large instruction files can reduce gross value, even before the token penalty. The best tested files remain small as the candidate pool grows under the chosen penalty. The published benchmark also shows lower average compliance at high instruction counts. The simulations reproduce the predicted scaling of regret caused by edit limits within our model.
An open question is which weaker conditions permit value identification when instructions interact in different ways. Another is whether curation based on measured task gains improves agent performance under actual limits on file edits.

\label{endofmain}

\bibliography{references}

\clearpage
\appendix

\section{PROOFS AND FURTHER RESULTS FOR SECTION 3}
\label{app:structure}

Throughout this appendix $\Theta(S) = \sum_{i \in S} \theta_i$,
$\bar\theta = \sup_i \theta_i$, $a_{\min} = \inf_i a_i$,
$g(\infty) = \lim_{x \to \infty} g(x)$, and for $y \in (0, 1]$,
$g^{-1}(y) = \sup\{x \ge 0 : g(x) \ge y\}$, which is finite iff
$y > g(\infty)$. Under~\eqref{eq:sd}, $Q(S) = \Theta(S)\, g(A(S))$.
Section~\ref{app:proofs3} proves the results of
Section~\ref{sec:structure}. Section~\ref{app:closure} shows that optimal
files lie inside the accretion closure. Section~\ref{app:bounded} gives
the size bound under Assumption~\ref{ass:bounded} when capacity fails,
and a load bound that needs neither mass condition.
Section~\ref{app:sharp} sharpens the bound on files that no single
deletion improves and shows that a cap at the sharpened bound insures
against an underestimated token price.

\subsection{Proofs for Section~\ref{sec:structure}}
\label{app:proofs3}

\begin{propositionA}[File size bound, full version]
\label{prop:sizefull}
Assume~\eqref{eq:sd} with $\chi \bar G < \infty$.
(i) Every $S$ satisfies
$R(S) \le \bar w\, \Theta(S)\, g(A(S)) - \kappa C(S)
\le \bar w\, \chi \bar G - \kappa C(S)$, where $\Theta(S) = \sum_{i \in S} \theta_i$.
(ii) Every $S$ with $R(S) \ge 0$ satisfies
$\kappa C(S) \le \bar w\, \chi \bar G$ and $|S| \le \chi \bar G / \rho$.
(iii) If $B \ge \bar w \chi \bar G / \kappa$, the token budget never binds: the capped
optima are exactly the uncapped optima.
(iv) Every optimum $S^\star$ has $R(S^\star) \ge 0$, so (ii) applies to it
for every $N$ and every $B$.
(v) In the single-choice case $\chi \bar G = 1$, every $S$ with $R(S) \ge 0$
has $C(S) \le \bar w / \kappa$ and $|S| \le 1/\rho$.
The proof uses~\eqref{eq:sd} only through Assumption~\ref{ass:capacity}
with $\bar Q = \chi \bar G$; Proposition~\ref{prop:size} is (ii)-(iv) so
stated.
\end{propositionA}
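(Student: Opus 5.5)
The plan is to prove (i) directly from the definitions and then derive (ii)--(v) as short consequences. Throughout I use only two facts about~\eqref{eq:sd}: compliance is nonnegative, and $Q(S) = \Theta(S)\, g(A(S)) \le \chi \bar G$.

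For (i), I bound each term of $\Rz(S) = \sum_{i \in S} w_i q_i(S)$. Since $q_i(S) \ge 0$ and $w_i \le \bar w$ (with $\bar w \ge 0$), each term satisfies $w_i q_i(S) \le \bar w\, q_i(S)$, including when $w_i < 0$. Summing gives $\Rz(S) \le \bar w\, Q(S) = \bar w\, \Theta(S)\, g(A(S))$, which is the first inequality. For the second, write $\Theta(S) = \sum_{i \in S} (\theta_i/a_i)\, a_i \le \chi A(S)$. Then $\Theta(S) g(A(S)) \le \chi\, A(S) g(A(S)) \le \chi \bar G$ by the definition of $\bar G$ as $\sup_x x g(x)$. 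Subtracting $\kappa C(S)$ finishes (i), and this chain is precisely Assumption~\ref{ass:capacity} with $\bar Q = \chi\bar G$.

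For (ii), I combine $R(S) \ge 0$ with (i) to get $\kappa C(S) \le \bar w \chi \bar G$. Since every $c_i \ge c_{\min}$, we have $C(S) \ge c_{\min}|S|$, so $|S| \le \bar w\chi\bar G/(\kappa c_{\min}) = \chi \bar G/\rho$. If $\bar w = 0$, then $\rho = \infty$ and the bound forces $C(S) = 0$, hence $S = \emptyset$, which is consistent with the convention $\chi\bar G/\infty = 0$.

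For (iv), the empty file is always feasible with $R(\emptyset) = 0$, so any optimum has $R(S^\star) \ge 0$. Part (ii) then applies to it, and the constants do not involve $N$ or $B$.

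For (iii), suppose $B \ge \bar w\chi\bar G/\kappa$. Every $S$ with $R(S) \ge 0$ has $C(S) \le B$ by (ii), so the capped and uncapped problems share all files of nonnegative value. Both optima are attained among such files, since $\emptyset$ guarantees a value of $0$, and files with negative value are never optimal in either problem. Hence the two problems have the same optimal value and the same set of optimizers.

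For (v), I substitute $\chi\bar G = 1$ into (ii).

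The only step needing any care is the sign handling in (i): harmful instructions with $w_i < 0$ are bounded by $\bar w q_i(S)$ only because $\bar w$ is defined as $\max(0,\sup w_i)$. The other delicate point is the degenerate case $\bar w = 0$ in (ii). Nothing else is an obstacle. The closing remark, that only Assumption~\ref{ass:capacity} is used, follows from inspecting the argument: after the first line of (i), every step uses just $Q(S) \le \bar Q$.
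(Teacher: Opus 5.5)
Your proof is correct and follows the paper's argument essentially step for step. You use the termwise bound $w_i q_i(S) \le \bar w\, q_i(S)$ together with $\Theta(S) \le \chi A(S)$ for (i), $C(S) \ge c_{\min}|S|$ for (ii), and the empty file for (iv); the only cosmetic difference is in (iii), where you compare the sets of nonnegative-value files directly, while the paper shows that an uncapped optimum is feasible under the cap and then argues the converse, and these two arguments are equivalent.
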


\begin{proof}[Proof of Propositions~\ref{prop:size} and~\ref{prop:sizefull}]
(i) Since $w_i \le \bar w$, $q_i(S) \ge 0$ and $\bar w \ge 0$,
$\Rz(S) \le \bar w\, Q(S) = \bar w\, \Theta(S)\, g(A(S))$. As
$\theta_i \le \chi a_i$ for every $i$, $\Theta(S) \le \chi A(S)$, so
$\Theta(S)\, g(A(S)) \le \chi\, A(S)\, g(A(S)) \le \chi \bar G$.
(ii) $R(S) \ge 0$ and (i) give $\kappa C(S) \le \bar w \chi \bar G$, and
$C(S) \ge c_{\min} |S|$ gives $|S| \le \bar w \chi \bar G / (\kappa c_{\min})
= \chi \bar G / \rho$. (If $\bar w = 0$ then $R(S) = -\kappa C(S) \ge 0$
forces $S = \emptyset$, and $\chi \bar G / \rho = 0$.)
(iii) Let $S_\infty$ be an uncapped optimum. It has
$R(S_\infty) \ge R(\emptyset) = 0$, so $C(S_\infty) \le \bar w \chi \bar G /
\kappa \le B$ by (ii): $S_\infty$ is feasible under the cap, hence a
capped optimum. Conversely a capped optimum $S$ has
$R(S) \ge R(S_\infty)$, so it is an uncapped optimum.
(iv) $\emptyset$ is feasible for every $B \ge 0$ and $R(\emptyset) = 0$;
the constants $\bar w$, $\chi$ and $\bar G$ are taken over all candidates.
(v) Put $\chi \bar G = 1$ in (ii).
\end{proof}

\begin{proof}[Proof of Corollary~\ref{cor:accretion}]
Proposition~\ref{prop:sizefull}(i) with $C(S) \ge c_{\min} |S|$ gives the
bound on $R(S)$. Apply it to $S = S_N^{\mathrm{acc}}$ and use
$R(S^\star_N) \ge 0$ (Proposition~\ref{prop:sizefull}(iv)).
\end{proof}

The hypothesis $|S_N^{\mathrm{acc}}| \to \infty$ is consistent with~\eqref{eq:sd} and
$\chi \bar G < \infty$: any pool with infinitely many candidates satisfying
$w_i \theta_i g(a_i) > \kappa c_i$ has it, for example identical
candidates with $\theta_i = a_i = 1$ and $w_i g(1) > \kappa c_i$.

\begin{proof}[Proof of Proposition~\ref{prop:nodilution}]
Additivity is immediate from $q_i(S) = q_i$. As $v_i^{\mathrm{solo}} > 0$ exactly on
$S_N^{\mathrm{acc}}$ and $v_i^{\mathrm{solo}} \le 0$ off it,
$R(S) \le \sum_{i \in S \cap S_N^{\mathrm{acc}}} v_i^{\mathrm{solo}} \le R(S_N^{\mathrm{acc}})$, with equality iff
$S_N^{\mathrm{acc}} \subseteq S$ and $v_i^{\mathrm{solo}} = 0$ on $S \setminus S_N^{\mathrm{acc}}$, that is
$S \subseteq \overline{S}_N^{\mathrm{acc}}$; $S_N^{\mathrm{acc}}$ is feasible because $B \ge C(S_N^{\mathrm{acc}})$.
\end{proof}

The budget condition is needed: with $g = (1, 1)$, $c = (1, 1)$ and
$B = 1$ the optima are the two singletons, and neither contains $S_N^{\mathrm{acc}}$.

\subsection{Optimal files lie inside the accretion closure}
\label{app:closure}

Under monotone dilution accretion keeps too much but never too little:
an instruction with $w_i \le 0$ is in no optimal file, and one with negative standalone value never appear in optimal file. The statement and
its proof hold for any positive price in place of $\kappa$ and for any
budget.

\begin{propositionA}[Optimal files lie inside the accretion closure]
\label{prop:inside}
Under Assumption~\ref{ass:mono}, with $w_i$ of any sign, every optimal
$S^\star$ satisfies $S^\star \subseteq \{i : w_i > 0\} \cap \overline{S}_N^{\mathrm{acc}}$, and
$S^\star \cap S_N^{\mathrm{acc}}$ is also optimal.
\end{propositionA}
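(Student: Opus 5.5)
The plan is a single-deletion exchange argument that uses only Assumption~\ref{ass:mono} and the fact that the feasible set is closed under taking subsets. Deleting items never increases $C(S)$, so every subset of a feasible file is feasible. Fix an optimal $S^\star$ and any $i \in S^\star$, and compare $S^\star$ with $S^\star \setminus \{i\}$. Write
\[
R(S^\star) - R(S^\star\setminus\{i\}) = w_i q_i(S^\star) - \kappa c_i + \sum_{j \in S^\star\setminus\{i\}} w_j\big(q_j(S^\star) - q_j(S^\star\setminus\{i\})\big).
\]
Monotone dilution, applied with base set $S^\star\setminus\{i\}$ and $T = \{i\}$, makes every bracket nonpositive. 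The sign of each summand therefore depends on the sign of $w_j$. This sign dependence is the main obstacle: removing $i$ can raise the compliance of a harmful $j$ and so lower the value.

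To handle it, first show that $S^\star$ contains no item with $w_j \le 0$. Let $H = \{j \in S^\star : w_j \le 0\}$ and suppose $H \ne \emptyset$. Compare $S^\star$ with $S' = S^\star \setminus H$. Every $j \in S'$ has $w_j > 0$, and by monotone dilution (base $S'$, $T = H$) we get $q_j(S^\star) \le q_j(S')$. Hence
\[
R(S^\star) = \sum_{j\in S'} w_j q_j(S^\star) + \sum_{j\in H} w_j q_j(S^\star) - \kappa C(S^\star) \le \sum_{j \in S'} w_j q_j(S') - \kappa C(S') - \kappa C(H) < R(S').
\]
Here the terms for $H$ are $\le 0$, and $\kappa C(H) \ge \kappa c_{\min} > 0$ gives strictness. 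This contradicts optimality, since $S'$ is feasible. So $S^\star \subseteq \{i : w_i > 0\}$.

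With all $w_j > 0$ on $S^\star$, the cross terms in the first display are now $\le 0$. Optimality against the feasible $S^\star\setminus\{i\}$ gives $0 \le R(S^\star) - R(S^\star\setminus\{i\}) \le w_i q_i(S^\star) - \kappa c_i$. Monotone dilution with base $\{i\}$ gives $q_i(S^\star) \le q_i(\{i\})$, and $w_i > 0$ then yields $v_i^{\mathrm{solo}} = w_i q_i(\{i\}) - \kappa c_i \ge 0$. So $S^\star \subseteq \overline{S}_N^{\mathrm{acc}}$.

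For the last claim, let $Z = S^\star \setminus S_N^{\mathrm{acc}}$, the items in $S^\star$ with $v_i^{\mathrm{solo}} = 0$, and $S'' = S^\star \cap S_N^{\mathrm{acc}}$, which is feasible. Repeat the computation of the first paragraph with $H$ replaced by $Z$; all weights are now positive. We get
\[
R(S^\star) \le R(S'') + \sum_{i\in Z}\big(w_i q_i(S^\star) - \kappa c_i\big) \le R(S'') + \sum_{i \in Z} v_i^{\mathrm{solo}} = R(S''),
\]
using $q_i(S^\star) \le q_i(\{i\})$ and $w_i > 0$ once more. Optimality of $S^\star$ forces equality, so $S''$ is optimal.

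The argument never uses $\kappa$ beyond $\kappa > 0$, and it uses the budget only through closure under subsets. This is consistent with the remark that the result holds for any positive price and any budget.
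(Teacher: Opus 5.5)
Your proposal is correct and follows essentially the paper's route: your first step is the paper's inequality (L1), which removes all items with $w_j \le 0$ at once for a strict gain of at least $\kappa C(H)$, and your single-deletion bound $R(S^\star) - R(S^\star\setminus\{i\}) \le v_i^{\mathrm{solo}}$ is its (L2). The only difference is cosmetic: you strip the zero-standalone-value items in one batch with a single application of monotone dilution, whereas the paper deletes them one at a time using (L2); both give $R(S^\star \cap S_N^{\mathrm{acc}}) \ge R(S^\star)$.
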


\begin{proof}
Let $P = \{i : w_i > 0\}$; subsets of feasible files are feasible. Two
inequalities hold for every $S$ under Assumption~\ref{ass:mono}.
(L1) $R(S \cap P) \ge R(S) + \kappa C(S \setminus P)$: indeed
$R(S \cap P) - R(S) = \sum_{j \in S \cap P} w_j [q_j(S \cap P) - q_j(S)]
- \sum_{j \in S \setminus P} w_j q_j(S) + \kappa C(S \setminus P)$; the first
sum is $\ge 0$ by monotone dilution and $w_j > 0$, the second is $\le 0$
because $w_j \le 0$.
(L2) If $w_j \ge 0$ for all $j \in S$ and $i \in S$, then
$R(S) - R(S \setminus \{i\}) \le v_i^{\mathrm{solo}}$: indeed
$R(S) - R(S \setminus \{i\}) = w_i q_i(S)
+ \sum_{j \in S \setminus \{i\}} w_j [q_j(S) - q_j(S \setminus \{i\})]
- \kappa c_i$; each bracket is $\le 0$ and $w_i q_i(S) \le w_i q_i(\{i\})$.
Now let $S^\star$ be optimal. If $S^\star \not\subseteq P$, then
$C(S^\star \setminus P) > 0$ and (L1) gives
$R(S^\star \cap P) > R(S^\star)$, a contradiction; so $S^\star \subseteq P$.
If some $i \in S^\star$ had $v_i^{\mathrm{solo}} < 0$, (L2) would give
$R(S^\star \setminus \{i\}) > R(S^\star)$; so $S^\star \subseteq \overline{S}_N^{\mathrm{acc}}$.
Finally remove the items with $v_i^{\mathrm{solo}} = 0$ from $S^\star$ one at a time; each
intermediate set $S'$ satisfies $S' \subseteq P$, so (L2) gives
$R(S' \setminus \{i\}) \ge R(S')$, the value never drops, and the last set
is $S^\star \cap S_N^{\mathrm{acc}}$.
\end{proof}

Assumption~\ref{ass:mono} cannot be weakened to
$q_i(S) \le q_i(\{i\})$: take candidates $1, 2, 3$ with $w = (1, 10, 10)$,
$c_i = 1$, $\kappa = 1$ and $B = 3$, $q_1 = 0.1$ in every file, and
$q_2 = q_3 = 0.9$ in every file except $\{2, 3\}$, where both are $0.1$.
Then $g_1 = -0.9$, yet the unique optimum is $\{1, 2, 3\}$ with
$R = 15.1$: adding instruction $1$ raises the compliance of $2$ and $3$.

\subsection{Size bounds without capacity}
\label{app:bounded}

Proposition~\ref{prop:size} uses~\eqref{eq:sd} only through the capacity
$\bar Q = \chi \bar G$. When $\bar G = \infty$, for example when $g$ decays
like a power below one, two bounds remain. The first uses only
Assumption~\ref{ass:bounded} and does not require the shared-dilution
parameterization. Let
$\cN = \{n \ge 0 : \kappa c_{\min}\, n \le \bar w M(n)\}$ be the sizes at
which the compliance mass can still pay for the file, and
$M^*(n) = \max_{k \le n} M(k)$.

\begin{propositionA}[File size bound under bounded-average compliance]
\label{prop:sizegeneral}
(i) Every $S$ satisfies $R(S) \le \bar w M(|S|) - \kappa C(S)$.
(ii) If $R(S) \ge 0$, then $\kappa C(S) \le \bar w M(|S|)$ and $|S| \in \cN$.
(iii) Under Assumption~\ref{ass:bounded}, $\cN$ is finite. With
$\bar n = \max \cN$, every $S$ with $R(S) \ge 0$ has $|S| \le \bar n$ and
$C(S) \le \bar w M^*(\bar n) / \kappa$, and a cap
$B \ge \bar w M^*(\bar n) / \kappa$ never binds.
(iv) Every optimal $S^\star$ has $R(S^\star) \ge 0$, so (ii) and (iii)
apply to it, with bounds that depend on neither $N$ nor $B$.
\end{propositionA}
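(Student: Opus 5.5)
The argument follows the proof of Proposition~\ref{prop:sizefull}, with the uniform capacity $\bar Q$ replaced by the size-indexed mass $M(|S|)$.

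\emph{Part (i).} Since $q_i(S) \ge 0$, $w_i \le \bar w$ and $\bar w \ge 0$, we get $\Rz(S) \le \bar w\, Q(S)$. By definition of the compliance mass, $Q(S) \le M(|S|)$. Together these give $R(S) \le \bar w M(|S|) - \kappa C(S)$.

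\emph{Part (ii).} If $R(S) \ge 0$, part (i) gives $\kappa C(S) \le \bar w M(|S|)$. Combined with $C(S) \ge c_{\min}|S|$, this yields $\kappa c_{\min} |S| \le \bar w M(|S|)$, that is, $|S| \in \cN$. The degenerate case $\bar w = 0$ is handled as before: $R(S) = -\kappa C(S) \ge 0$ forces $S = \emptyset$, and $0 \in \cN$.

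\emph{Part (iii).} This is where Assumption~\ref{ass:bounded} enters, and it is the only step with content. Suppose $\bar w > 0$; if $\bar w = 0$ then $\cN = \{0\}$ and there is nothing to prove. The assumption gives some $\alpha < \rho$ and $n_0$ with $M(n) \le \alpha n$ for all $n \ge n_0$. For such $n \ge 1$,
\[
\bar w M(n) \le \bar w \alpha n < \bar w \rho n = \kappa c_{\min} n ,
\]
so $n \notin \cN$. Hence $\cN \subseteq \{0, \dots, n_0\}$ is finite and $\bar n = \max\cN$ exists ($0 \in \cN$, so $\cN$ is nonempty). Now take any $S$ with $R(S) \ge 0$. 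By (ii), $|S| \le \bar n$ and
\[
\kappa C(S) \le \bar w M(|S|) \le \bar w M^*(\bar n).
\]
The claim that the cap never binds then follows exactly as in Proposition~\ref{prop:sizefull}(iii). An uncapped optimum $S_\infty$ has $R(S_\infty) \ge 0$, so $C(S_\infty) \le \bar w M^*(\bar n)/\kappa \le B$, and it is therefore feasible under the cap. Conversely, any capped optimum has value at least $R(S_\infty)$, so it is an uncapped optimum.

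\emph{Part (iv).} This is Proposition~\ref{prop:sizefull}(iv) verbatim. The empty file is feasible with $R(\emptyset) = 0$, and $M$, $\bar w$, $c_{\min}$ are defined over all candidates, so $\cN$ and $\bar n$ do not depend on $N$ or $B$.

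The one point to check is that $M(n)$ is finite for each fixed $n$, so that the maximum defining $M^*(\bar n)$ makes sense. This holds because each $q_i < 1$, which gives $M(n) \le n$.
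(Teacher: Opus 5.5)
Your proof is correct and follows the paper's argument essentially step for step: bound $\Rz(S) \le \bar w\,Q(S) \le \bar w M(|S|)$, combine with $C(S) \ge c_{\min}|S|$, use $\limsup M(n)/n < \rho$ to exclude all large $n$ from $\cN$, and reuse the feasibility argument of Proposition~\ref{prop:sizefull}(iii) for the cap. Your extra checks are harmless additions: the finiteness $M(n) \le n$ is a correct observation, and in the case $\bar w = 0$ one only has $R(S) \le -\kappa C(S)$ rather than equality, which still forces $S = \emptyset$.
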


\begin{proof}
(i) Since $w_i \le \bar w$, $q_i(S) \ge 0$ and $\bar w \ge 0$,
$\Rz(S) \le \bar w\, Q(S) \le \bar w M(|S|)$.
(ii) Combine $R(S) \ge 0$ with (i) and $C(S) \ge c_{\min} |S|$.
(iii) If $\bar w = 0$, then $\cN = \{0\}$. Otherwise
Assumption~\ref{ass:bounded} gives $M(n) < \rho\, n$, that is
$n \notin \cN$, for all large $n$. Then
$\kappa C(S) \le \bar w M(|S|) \le \bar w M^*(\bar n)$. The cap never
binds by the argument of Proposition~\ref{prop:sizefull}(iii), with
$\bar w M^*(\bar n)$ in place of $\bar w \chi \bar G$.
(iv) $\emptyset$ is feasible and $R(\emptyset) = 0$; $\bar w$ and $M$ are
taken over all candidates.
\end{proof}

\begin{lemmaA}
\label{lem:mono}
Under Assumption~\ref{ass:mono}, $M(n+1)/(n+1) \le M(n)/n$ for $n \ge 1$.
Hence Assumption~\ref{ass:bounded} holds iff $M(n_1) < \rho\, n_1$ for some
$n_1 \ge 1$, and $\cN$ is an initial segment of $\mathbb N$.
\end{lemmaA}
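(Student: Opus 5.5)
The plan is a deletion argument on a file that attains $M(n+1)$, followed by averaging. Take any file $S$ with $|S| = n+1$ and write $Q(S) = \sum_{i \in S} q_i(S)$. For each $i \in S$, monotone dilution (Assumption~\ref{ass:mono}) applied to the file $S \setminus \{i\}$ with $T = \{i\}$ gives $q_j(S) \le q_j(S \setminus \{i\})$ for every $j \in S \setminus \{i\}$. Summing over $j$ gives
\[
Q(S) - q_i(S) \le Q(S \setminus \{i\}) \le M(n).
\]

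Now sum this over all $n+1$ choices of $i$. On the left, each $q_j(S)$ appears exactly $n$ times, so the left side totals $n\, Q(S)$. Hence
\[
n\, Q(S) \le (n+1)\, M(n).
\]
Taking the supremum over $|S| = n+1$ gives $n\, M(n+1) \le (n+1) M(n)$, which is the claimed monotonicity. Two side remarks: $M(n) \le n$ is finite since $q_i \in [0,1)$, and if the sup is not attained, the inequality passes to it directly.

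For the equivalence, note that $M(n)/n$ is nonincreasing, so its $\limsup$ equals its infimum, which is also its limit.
\begin{itemize}
\item If $M(n_1) < \rho\, n_1$ for some $n_1 \ge 1$, then $\lim_n M(n)/n \le M(n_1)/n_1 < \rho$, and Assumption~\ref{ass:bounded} holds.
\item Conversely, if the limit is below $\rho$, some $n_1$ has $M(n_1)/n_1 < \rho$.
\end{itemize}
The case $\rho = \infty$ (that is, $\bar w = 0$) is trivial.

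For the initial-segment claim, recall $\cN = \{n : \kappa c_{\min} n \le \bar w M(n)\}$. When $\bar w > 0$ and $n \ge 1$, membership reads $M(n)/n \ge \rho$. Since $M(n)/n$ is nonincreasing, if $n+1 \in \cN$ then $n \in \cN$. Also $0 \in \cN$, because $M(0) = 0$. When $\bar w = 0$, $\cN = \{0\}$. In every case $\cN$ is downward closed, hence an initial segment.

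The only delicate step is the double-counting in the first display. It needs monotone dilution with $S \setminus \{i\}$ as the base file, where $i$ is added and the remaining instructions are the ones whose compliance is compared. It also needs the convention that $M$ is a supremum over files drawn from the whole endless candidate list, so that $S \setminus \{i\}$ is admissible at size $n$.
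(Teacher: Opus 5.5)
Your proof is correct and follows the paper's argument: the paper deletes the single instruction of smallest compliance from an $(n+1)$-item file instead of averaging over all $n+1$ single deletions, which gives the same bound $n\,Q(S) \le (n+1)M(n)$ since a minimum is at most an average, and your treatment of the equivalence and of the initial-segment claim matches the paper's. Your closing caveat about the range of the supremum is not needed: $S\setminus\{i\}$ is a subset of $S$, so it is admissible at size $n$ under either convention.
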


\begin{proof}
Fix $S'$ with $|S'| = n + 1$ and let $j \in S'$ minimize $q_j(S')$. Removing
the smallest of $n + 1$ terms keeps at least a share $n/(n+1)$ of the sum,
and Assumption~\ref{ass:mono} gives $q_i(S' \setminus \{j\}) \ge q_i(S')$,
so
$M(n) \ge \sum_{i \in S' \setminus \{j\}} q_i(S' \setminus \{j\})
\ge \sum_{i \in S' \setminus \{j\}} q_i(S')
\ge \frac{n}{n+1} \sum_{i \in S'} q_i(S')$.
Take the supremum over $S'$. A nonincreasing sequence has limit superior
equal to its infimum, which is below $\rho$ iff some term is. For
$n \ge 1$, $n \in \cN$ iff $M(n)/n \ge \rho$, which is preserved as $n$
decreases, and $0 \in \cN$.
\end{proof}

\paragraph{Remarks.}
Without Assumption~\ref{ass:mono}, $\cN$ need not be an interval
(Lemma~\ref{lem:mono} needs the assumption), which is why $\bar n$ is
defined as a maximum: take identical candidates with
$w_i = c_i = 1$, $\kappa = 0.5$, and $q_i(S) = 0.1$ if $|S| \le 2$, $1$ if
$|S| = 3$, and $0$ if $|S| \ge 4$; then $\cN = \{0, 3\}$ and $R(S) = 1.5$
for $|S| = 3$. The strict inequality in Assumption~\ref{ass:bounded}
cannot be relaxed: for identical instructions with
$q(n) = \rho + (1 - \rho)/n^2$ and $0 < \rho < 1$, Assumption~\ref{ass:mono}
holds, $M(n)/n \to \rho$, every instruction is in $S_N^{\mathrm{acc}}$, and
$R(S) = \bar w (1 - \rho)/|S|$, so $R(S_N^{\mathrm{acc}}) \to 0$ while
$R(S^\star_N) = \bar w (1 - \rho)$ for $B \ge c_{\min}$.

\paragraph{Examples.}
Table~\ref{tab:examples} takes identical instructions ($w_i = \bar w > 0$,
$c_i = c_{\min}$) whose compliance $q(n)$ in a file of $n$ is
nonincreasing. Then Assumption~\ref{ass:mono} holds, $M(n) = n q(n)$,
$R(S) = \bar w\, [M(n) - \rho\, n]$ for $|S| = n$, and
$\bar n = \max \{n : q(n) \ge \rho\}$ ($0$ if $q(1) < \rho$). When $M$ is
bounded, the cost bound $\bar w \sup_n M(n) / \kappa$ is linear in
$\bar w / \kappa$. Under power dilution ($0 < \beta < 1$), $\bar n$ grows
like $(\bar w / \kappa)^{1/\beta}$ and, for $\alpha > \rho$, blows up as
$\beta \downarrow 0$, the no-dilution limit.

\begin{table}[h]
\caption{Identical instructions with compliance $q(n)$ in a file of $n$,
  where $0 < \alpha \le 1$, $b, u > 0$, $0 < \beta < 1$, $u = v / v_0$ and
  $\lfloor y \rfloor^+ = \max(0, \lfloor y \rfloor)$.
  Assumption~\ref{ass:bounded} holds in the first three rows for every
  $\rho$, and in the last iff $\alpha < \rho$. The $\bar n$ column assumes
  $\bar w > 0$; if $\bar w = 0$, then $\cN = \{0\}$ and $\bar n = 0$.}
\label{tab:examples}
\centering
\begin{tabular}{llll}
Dilution & $q(n)$ & $M(n)$ & $\bar n$ \\
\hline
Single choice & $\frac{u}{1 + nu}$ & $< 1$ & $\lfloor 1/\rho - 1/u \rfloor^+$ \\
Exponential & $\alpha e^{-bn}$ & $\le \frac{\alpha}{be}$ & $\lfloor \ln(\alpha/\rho)/b \rfloor^+$ \\
Power ($\beta < 1$) & $\alpha n^{-\beta}$ & $\alpha n^{1-\beta}$ & $\lfloor (\alpha/\rho)^{1/\beta} \rfloor$ \\
None & $\alpha$ & $\alpha n$ & $\infty$ if $\alpha \ge \rho$, else $0$ \\
\end{tabular}
\end{table}

If the budget does not bind, the optimal size maximizes $M(n) - \rho\, n$,
which is unimodal in each row, so it is $\lfloor n^* \rfloor$ or
$\lceil n^* \rceil$ for the continuous maximizer $n^*$, clipped at $0$:
$n^* = (\sqrt{u/\rho} - 1)/u$ (single choice), $(1 - W_0(e \rho / \alpha))/b$ with
$W_0$ the principal Lambert branch (exponential),
$((1 - \beta) \alpha / \rho)^{1/\beta}$ (power), and $\infty$ if
$\alpha > \rho$ (none). In the power row the optimal size is therefore about
a fraction $n^* / \bar n \approx (1 - \beta)^{1/\beta} < 1/e$ of $\bar n$ as
$\rho \to 0$. Power dilution with $\beta \ge 1$ is excluded from the table:
then $M(n) = \alpha n^{1-\beta}$ is nonincreasing for $n \ge 1$, so the
optimal size is $1$ if $\alpha > \rho$, $0$ if $\alpha < \rho$, and both
$0$ and $1$ are optimal if $\alpha = \rho$.

The second bound applies to optima only and uses the form~\eqref{eq:sd}
without capacity or bounded average compliance: every instruction in an optimal
file is followed often enough to cover its setup cost at the file's attention load,
which bounds the load.

\begin{propositionA}[Load bound on optima]
\label{prop:load}
Assume~\eqref{eq:sd} with $\bar w > 0$ and $\bar\theta < \infty$, and let
$S^\star$ be a nonempty optimum under any budget. Then $w_i > 0$ on
$S^\star$, $\kappa c_i \le w_i \theta_i\, g(A(S^\star))$ for every
$i \in S^\star$, and $g(A(S^\star)) \ge \rho / \bar\theta$. Hence, if
$\rho / \bar\theta > 1$ the only optimum is $\emptyset$; and if
$g(\infty) < \rho / \bar\theta \le 1$ then
$A(S^\star) \le g^{-1}(\rho / \bar\theta) < \infty$ and, when
$a_{\min} > 0$, $|S^\star| \le g^{-1}(\rho / \bar\theta) / a_{\min}$.
\end{propositionA}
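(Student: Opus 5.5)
The plan is to combine Proposition~\ref{prop:inside} with the single-deletion inequality (L2) from its proof, then translate the resulting per-item bound into a bound on $g(A(S^\star))$ and finally on the load and size.

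\textbf{Step 1: signs.} Shared dilution~\eqref{eq:sd} satisfies Assumption~\ref{ass:mono}, so Proposition~\ref{prop:inside} applies. Every optimum therefore lies in $\{i : w_i > 0\}$, which gives $w_i > 0$ on $S^\star$.

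\textbf{Step 2: per-item condition.} Fix $i \in S^\star$. Optimality gives $R(S^\star) \ge R(S^\star \setminus \{i\})$, and the smaller file is feasible since subsets of feasible files are feasible. Expand the difference as in (L2):
\[
0 \le R(S^\star) - R(S^\star \setminus \{i\}) = w_i q_i(S^\star) + \sum_{j \in S^\star \setminus \{i\}} w_j \bigl[q_j(S^\star) - q_j(S^\star \setminus \{i\})\bigr] - \kappa c_i .
\]
Each bracket is $\le 0$ by monotone dilution, and $w_j > 0$ by Step~1, so the sum is nonpositive. Dropping it leaves $\kappa c_i \le w_i q_i(S^\star) = w_i \theta_i\, g(A(S^\star))$, which is the second claim. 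The point is to keep the actual dilution $g(A(S^\star))$ here, rather than bounding by the solo compliance as (L2) does.

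\textbf{Step 3: bound on $g(A(S^\star))$.} Use $w_i \le \bar w$, $\theta_i \le \bar\theta$ and $c_i \ge c_{\min}$ to get $\kappa c_{\min} \le \bar w \bar\theta\, g(A(S^\star))$, that is, $g(A(S^\star)) \ge \rho/\bar\theta$. Since $g \le 1$, if $\rho/\bar\theta > 1$ no nonempty optimum can exist. Because $\emptyset$ is always feasible, it is then the only optimum.

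\textbf{Step 4: load and size.} Suppose $g(\infty) < \rho/\bar\theta \le 1$. Then $A(S^\star)$ belongs to $\{x : g(x) \ge \rho/\bar\theta\}$, so $A(S^\star) \le g^{-1}(\rho/\bar\theta)$. This quantity is finite because $y = \rho/\bar\theta > g(\infty)$: since $g$ is nonincreasing, $g(x) < y$ for all large $x$. Finally, $A(S^\star) \ge a_{\min}|S^\star|$ yields the cardinality bound.

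\textbf{Main obstacle.} The only delicate point is Step~2: discarding the cross terms requires $w_j \ge 0$ for the other items. This is why Step~1 must come first, since otherwise a harmful instruction $j$ with $w_j < 0$ would make those cross terms positive. Everything after that is bookkeeping with the definitions of $\rho$ and $g^{-1}$.
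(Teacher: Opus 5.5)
Your proof is correct and takes essentially the same route as the paper. Proposition~\ref{prop:inside} gives $w_i>0$ on $S^\star$, and the single-deletion comparison, with the nonpositive cross terms dropped, gives $\kappa c_i \le w_i\theta_i\, g(A(S^\star))$. The bounds on $g(A(S^\star))$, on the load, and on $|S^\star|$ then follow by the same bookkeeping; your remarks on why $\emptyset$ is the unique optimum and why $g^{-1}(\rho/\bar\theta)$ is finite only spell out what the paper leaves implicit.
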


\begin{proof}
By Assumption~\ref{ass:mono}, $w_i > 0$ on $S^\star$ by
Proposition~\ref{prop:inside}. Fix $i \in S^\star$; the file
$S^\star \setminus \{i\}$ is feasible. By Assumption~\ref{ass:mono} and
$w_l > 0$ for $l \ne i$,
$R(S^\star \setminus \{i\}) \ge \sum_{l \in S^\star \setminus \{i\}} w_l q_l(S^\star)
- \kappa C(S^\star) + \kappa c_i = R(S^\star) - w_i q_i(S^\star) + \kappa c_i$,
and optimality gives $w_i q_i(S^\star) \ge \kappa c_i$, that is
$\kappa c_i \le w_i \theta_i\, g(A(S^\star))$. Since $w_i \le \bar w$,
$\theta_i \le \bar\theta$ and $c_i \ge c_{\min}$,
$g(A(S^\star)) \ge \kappa c_{\min} / (\bar w \bar\theta) = \rho / \bar\theta$.
As $g \le 1$, this is impossible when $\rho / \bar\theta > 1$. When
$g(\infty) < \rho / \bar\theta \le 1$, $A(S^\star)$ belongs to
$\{x : g(x) \ge \rho / \bar\theta\}$, whose supremum $g^{-1}(\rho / \bar\theta)$
is finite; and $|S^\star|\, a_{\min} \le A(S^\star)$.
\end{proof}

\subsection{Sharpened size bound and cap insurance}
\label{app:sharp}

Proposition~\ref{prop:size} bounds every file with $R(S) \ge 0$. Files
that no single deletion improves obey a sharper bound, because the
deletion conditions bound the total setup cost by the file's own gross
value. Throughout this subsection~\eqref{eq:sd} holds. For a price
$\mu \ge 0$ let $R_\mu(S) = \Rz(S) - \mu C(S)$, so $R = R_\kappa$, and let
$R^\star_\mu = \max_{S \subseteq [N]} R_\mu(S)$ be the uncapped optimum at
price $\mu$. A nonempty file $S$ is \emph{deletion-stable at $\mu$} if
$R_\mu(S \setminus \{i\}) \le R_\mu(S)$ for every $i \in S$. Every nonempty
optimum under any budget is deletion-stable at $\kappa$, since deleting
an item keeps the file feasible, and so is every nonempty file returned
by a local search with deletion moves.

For a nonempty $S$ write $A = A(S)$, $W(S) = \sum_{i \in S} w_i \theta_i$,
$\phi = \Rz(S) = g(A)\, W(S)$, $a^S_{\max} = \max_{i \in S} a_i$, and
\[
\begin{aligned}
e_i(S) &= \frac{1}{\theta_i}\Big(\frac{1}{g(A)} - \frac{1}{g(A - a_i)}\Big) \ge 0, \\
E(S) &= \sum_{i \in S} \theta_i\, e_i(S), \\
X_g(S) &= \Theta(S)\, g(A - a^S_{\max})\, \big[1 - g(A) E(S)\big].
\end{aligned}
\]
The quantity $e_i(S)$ is nonnegative because $g$ is nonincreasing and
$a_i > 0$. Let $\bar X_g = \sup X_g(S)$ over nonempty finite $S$ with
$w_i > 0$ on $S$ ($\bar X_g = 0$ if there is no such $S$), and
$a_g = \bar w \bar X_g$.

\begin{lemmaA}[Single deletion]
\label{lem:single}
For nonempty $S$ and $i \in S$,
$\Rz(S) - \Rz(S \setminus \{i\}) = \theta_i\, g(A - a_i)\, [w_i - \phi\, e_i(S)]$.
\end{lemmaA}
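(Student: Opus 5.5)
We prove the identity by a direct computation under~\eqref{eq:sd}. Write $A = A(S)$ and $S' = S \setminus \{i\}$, so $A(S') = A - a_i$. Then
\[
\Rz(S) = g(A)\, W(S), \qquad \Rz(S') = g(A - a_i)\,\big(W(S) - w_i \theta_i\big).
\]
Subtracting gives
\[
\Rz(S) - \Rz(S') = w_i \theta_i\, g(A - a_i) - W(S)\,\big[g(A - a_i) - g(A)\big].
\]
This holds also when $S' = \emptyset$, since then $W(S) = w_i\theta_i$ and $\Rz(\emptyset) = 0$, while $g(0) = 1$ makes the formula consistent.

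The next step is to rewrite the bracket using $\phi$ and $e_i(S)$. We substitute $W(S) = \phi / g(A)$, which is legitimate because $g > 0$ everywhere, as $g$ takes values in $(0,1]$. The second term becomes
\[
\phi\, \frac{g(A - a_i) - g(A)}{g(A)}.
\]
This fraction equals
\[
g(A - a_i)\Big(\frac{1}{g(A)} - \frac{1}{g(A - a_i)}\Big) = g(A - a_i)\, \theta_i\, e_i(S)
\]
by the definition of $e_i(S)$. Factoring out $\theta_i g(A - a_i)$ then yields
\[
\Rz(S) - \Rz(S') = \theta_i\, g(A - a_i)\,\big[w_i - \phi\, e_i(S)\big],
\]
which is the claim.

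There is no real obstacle here. The only points to watch are the positivity of $g$, which the division by $g(A)$ and $g(A - a_i)$ needs, and the edge case $|S| = 1$, which the convention $g(0) = 1$ handles. Both are immediate from the standing assumptions on the dilution curve in~\eqref{eq:sd}. No monotonicity of $g$ is needed for the identity itself. Monotonicity only enters later, through $e_i(S) \ge 0$, when the lemma is combined with deletion stability.
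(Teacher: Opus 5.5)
Your proposal is correct and follows the paper's proof: expand $\Rz(S)=g(A)W(S)$ and $\Rz(S\setminus\{i\})=g(A-a_i)(W(S)-w_i\theta_i)$, subtract, and rewrite $W(S)\,[g(A-a_i)-g(A)]$ as $\phi\,\theta_i e_i(S)\,g(A-a_i)$ using $W(S)=\phi/g(A)$ and the definition of $e_i(S)$. One small remark: for $|S|=1$ the term $g(0)\,(W(S)-w_i\theta_i)$ is zero whatever the value of $g(0)$, so the normalization $g(0)=1$ is not needed there.
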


\begin{proof}
$\Rz(S) - \Rz(S \setminus \{i\}) = g(A) W(S) - g(A - a_i)(W(S) - w_i \theta_i)
= w_i \theta_i\, g(A - a_i) - W(S)[g(A - a_i) - g(A)]$. By the definition
of $e_i$, $g(A - a_i) - g(A) = \theta_i e_i(S)\, g(A)\, g(A - a_i)$, so the
last term is $\phi\, \theta_i e_i(S)\, g(A - a_i)$.
\end{proof}

\begin{propositionA}[Sharpened size bound]
\label{prop:sharp}
Assume~\eqref{eq:sd}, let $w_i$ be of any sign, and let $S$ be
deletion-stable at $\mu > 0$.
(i) $w_i > \phi\, e_i(S)$ for every $i \in S$, and
$W(S)\, [1 - g(A) E(S)] > 0$. If $g(A) E(S) < 1$, in particular if
$|S| = 1$, then $\phi > 0$ and $w_i > 0$ on $S$.
(ii) If $\phi > 0$, then $w_i > 0$ on $S$ and
\[
\mu\, C(S) \le \phi \cdot \frac{g(A - a^S_{\max})}{g(A)}\, \big[1 - g(A) E(S)\big]
\le \phi, \qquad \text{so } R_\mu(S) \ge 0 .
\]
(iii) If $\phi > 0$, then $\mu\, C(S) \le \bar w\, X_g(S) \le a_g$, and
$X_g(S) \le Q(S)$; under $\chi \bar G < \infty$,
$a_g \le \bar w \chi \bar G$.
Hence every optimum at any price $\mu > 0$ under any budget has
$\mu\, C(S^\star) \le a_g$, and a cap $B \ge a_g / \kappa$ never binds:
its optima are exactly the uncapped optima.
\end{propositionA}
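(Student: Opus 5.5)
The plan is to run everything off Lemma~\ref{lem:single} and the deletion-stability inequalities. Deletion-stability at $\mu$ means $R_\mu(S) - R_\mu(S\setminus\{i\}) \ge 0$, and by Lemma~\ref{lem:single} this reads
\[
\theta_i\, g(A - a_i)\,[w_i - \phi\, e_i(S)] \ge \mu c_i > 0 .
\]
Since $\theta_i > 0$ and $g > 0$, this gives $w_i > \phi\, e_i(S)$ for every $i \in S$, which is the first claim of (i). Multiplying by $\theta_i$ and summing over $S$ gives $W(S) > \phi E(S) = g(A) W(S) E(S)$, that is, $W(S)[1 - g(A)E(S)] > 0$.

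If $g(A)E(S) < 1$, this forces $W(S) > 0$, hence $\phi = g(A)W(S) > 0$. Then $w_i > \phi e_i \ge 0$ for each $i$. For $|S| = 1$ we have $A = a_i$ and $e_i = (1/\theta_i)(1/g(a_i) - 1)$, so $g(A)E(S) = 1 - g(a_i) < 1$. That completes (i).

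For (ii), if $\phi > 0$ then $w_i > \phi e_i \ge 0$ directly. Next I lower-bound the gain of each item: $g(A - a_i) \ge g(A - a^S_{\max})$, because $A - a_i \le A$ and $g$ is nonincreasing. The direction is the delicate point. What I actually need is an upper bound on $\mu c_i$, and from stability $\mu c_i \le \theta_i g(A - a_i)[w_i - \phi e_i]$, with the bracket positive. So I bound $g(A - a_i) \le g(A - a^S_{\max})$, which holds since $A - a_i \ge A - a^S_{\max}$.

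Summing over $i$ then gives
\[
\mu C(S) \le g(A - a^S_{\max})\,[W(S) - \phi E(S)] = g(A - a^S_{\max})\, W(S)\,[1 - g(A)E(S)] .
\]
Writing $W(S) = \phi/g(A)$ yields the first displayed inequality.

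For the second inequality $\le \phi$, I need $g(A - a^S_{\max})[1 - g(A)E(S)] \le g(A)$. Expanding $g(A)E(S) = \sum_i \bigl(1 - g(A)/g(A - a_i)\bigr)$, the term for the item $k$ attaining $a^S_{\max}$ alone gives $1 - g(A)E(S) \le g(A)/g(A - a_k)$, because the remaining terms are nonnegative. Multiplying by $g(A - a_k)$ gives the claim. Then $R_\mu(S) = \phi - \mu C(S) \ge 0$.

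For (iii), use $W(S) \le \bar w\, \Theta(S)$, valid since the $w_i$ are positive here. This gives $\mu C(S) \le \bar w\, \Theta(S)\, g(A - a^S_{\max})[1 - g(A)E(S)] = \bar w X_g(S) \le a_g$, because $S$ is admissible in the supremum defining $\bar X_g$. The same single-term estimate gives $X_g(S) \le \Theta(S)\, g(A) = Q(S)$. Under capacity, $Q(S) \le \chi\bar G$, as in Proposition~\ref{prop:sizefull}(i), so $a_g \le \bar w\chi\bar G$.

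For the final statements:
\begin{itemize}
\item A nonempty optimum at price $\mu$ is deletion-stable and has $R_\mu \ge 0$ with nonempty support. To get $\phi > 0$ I use $\phi \ge \mu C(S) > 0$, which follows from $R_\mu(S^\star) \ge R_\mu(\emptyset) = 0$. So (iii) applies, and the empty optimum is trivial.
\item The cap statement then repeats the argument of Proposition~\ref{prop:sizefull}(iii) with $a_g$ in place of $\bar w\chi\bar G$.
\end{itemize}

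The main obstacle is getting the monotonicity direction of $g(A - a_i)$ versus $g(A - a^S_{\max})$ right, together with the single-term estimate $1 - g(A)E(S) \le g(A)/g(A - a^S_{\max})$. Everything else is summation of the stability inequalities.
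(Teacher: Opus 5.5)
Your proof is correct and follows the paper's argument essentially step for step: you turn Lemma~\ref{lem:single} into the stability inequalities, sum them weighted by $\theta_i$, use the single-term bound $1-g(A)E(S)\le g(A)/g(A-a^S_{\max})$, and finish with the cap argument of Proposition~\ref{prop:sizefull}(iii); the only variation is that you get $\phi>0$ for a nonempty optimum from $\phi=R_\mu(S^\star)+\mu C(S^\star)\ge \mu C(S^\star)>0$ rather than from Proposition~\ref{prop:inside}, which is slightly more self-contained. Two small clean-ups: first, delete the backwards inequality $g(A-a_i)\ge g(A-a^S_{\max})$ that opens your treatment of (ii) before you correct it; second, in (iii), replacing $W(S)$ by $\bar w\,\Theta(S)$ is legitimate because the factor $1-g(A)E(S)$ is positive (by (i), since $W(S)=\phi/g(A)>0$), not because the $w_i$ are positive.
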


\begin{proof}
(i) Stability at $\mu$ and Lemma~\ref{lem:single} give
$\theta_i\, g(A - a_i)\, [w_i - \phi e_i(S)] \ge \mu c_i > 0$, and
$\theta_i g(A - a_i) > 0$, so $w_i > \phi e_i(S)$. Multiply by $\theta_i$
and sum over $S$: $W(S) > \phi E(S) = g(A) W(S) E(S)$, that is
$W(S)[1 - g(A)E(S)] > 0$. If $1 - g(A)E(S) > 0$ then $W(S) > 0$, so
$\phi = g(A) W(S) > 0$ and $w_i > \phi e_i(S) \ge 0$. For $S = \{i\}$,
$E(S) = 1/g(a_i) - 1$ and $g(A)E(S) = 1 - g(a_i) < 1$.
(ii) If $\phi > 0$ then $W(S) > 0$, so $1 - g(A)E(S) > 0$ by (i), and then
$w_i > \phi e_i(S) \ge 0$. Sum the stability inequalities over $S$, using
$g(A - a_i) \le g(A - a^S_{\max})$ (as $A - a_i \ge A - a^S_{\max}$ and $g$
is nonincreasing) and the positive brackets:
$\mu C(S) \le g(A - a^S_{\max}) \sum_{i \in S} \theta_i [w_i - \phi e_i(S)]
= g(A - a^S_{\max}) [W(S) - \phi E(S)]$, and $W(S) = \phi / g(A)$ gives the
first bound. For the second, every term of $E(S)$ is nonnegative, so
$E(S) \ge 1/g(A) - 1/g(A - a^S_{\max})$, hence
$1 - g(A)E(S) \le g(A)/g(A - a^S_{\max})$ and the first bound is at most
$\phi$.
(iii) $W(S) \le \bar w\, \Theta(S)$ since $w_i \le \bar w$, $\theta_i > 0$
and $\bar w \ge 0$. Insert this in
$\mu C(S) \le g(A - a^S_{\max}) [W(S) - \phi E(S)] = g(A - a^S_{\max})\, W(S)\,
[1 - g(A)E(S)]$, where the bracket is positive by (ii), to get
$\mu C(S) \le \bar w X_g(S)$; by (ii) $w_i > 0$ on $S$, so
$X_g(S) \le \bar X_g$. The inequality
$g(A - a^S_{\max})[1 - g(A)E(S)] \le g(A)$ of (ii) gives
$X_g(S) \le \Theta(S) g(A) = Q(S)$, and $Q(S) \le \chi \bar G$ as in
Proposition~\ref{prop:sizefull}(i).
For the consequences, a nonempty optimum $S^\star$ at price $\mu$ is
deletion-stable at $\mu$ and has $w_i > 0$ on $S^\star$ by
Proposition~\ref{prop:inside} at price $\mu$, so $\phi > 0$ and (iii)
gives $\mu C(S^\star) \le a_g$; the empty optimum has cost $0$. The cap
never binds by the argument of Proposition~\ref{prop:sizefull}(iii), with
$a_g$ in place of $\bar w \chi \bar G$.
\end{proof}

The case split in (i) is needed. Take $g(x) = e^{-x}$, two identical
instructions with $\theta_i = a_i = 1$, $w_i = -1$, $c_i = 1$, and
$\mu = 0.01$. Then $R_\mu(\{1, 2\}) = -2e^{-2} - 0.02 \approx -0.291$
exceeds $R_\mu(\{1\}) = -e^{-1} - 0.01 \approx -0.378$, so $\{1, 2\}$ is
deletion-stable with $\phi < 0$, both $w_i < 0$ and $R_\mu < 0$; here
$g(A) E(S) = 2(1 - e^{-1}) > 1$. Deleting a harmful instruction
un-dilutes the other harmful instruction. Optima are unaffected, since
they have $\phi > 0$. Under the model of Appendix~\ref{app:mnl},
$g(A)E(S) = Q(S) < 1$ for every file, so the sign conclusions of (i) hold for every deletion-stable
file, and $X_g(S)$ has the explicit form of Appendix~\ref{app:mnl}; a
closed form of $\bar X_g$ for a general $g$ is open, and the bound
$a_g \le \bar w \chi \bar G$ is what we use.

A cap also protects a curator who misestimates the token price. One who treats tokens
as free, as an appending tool does, can lose without bound; a cap at
$a_g / \kappa$ bounds that loss by $a_g$, and when the price is
overestimated the cap never binds, so it adds no loss of its own. The cap
must be set by whoever estimates the token price more accurately than the curator; set with
the curator's own underestimate it protects nothing.

\begin{lemmaA}[Envelope bound]
\label{lem:envelope}
Assume~\eqref{eq:sd}. For $0 < \kappa \le \mu$,
$R^\star_\kappa - R^\star_\mu \le a_g \ln(\mu / \kappa)$.
\end{lemmaA}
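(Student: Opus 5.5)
The plan is to treat $\mu \mapsto R^\star_\mu$ as a value function and apply an envelope argument. For each fixed $S$, $R_\mu(S) = \Rz(S) - \mu C(S)$ is affine in $\mu$ with slope $-C(S)$. The pool $[N]$ is finite, so $R^\star_\mu = \max_S R_\mu(S)$ is a maximum of finitely many affine functions. It is therefore convex, piecewise linear and nonincreasing. At every point where it is differentiable, its derivative equals $-C(S^\star_\mu)$ for any uncapped optimum $S^\star_\mu$ at price $\mu$. More robustly, for every $\nu$ and every optimum $S^\star_\nu$ at price $\nu$, the left and right derivatives lie between $-\max C(S^\star_\nu)$ and $-\min C(S^\star_\nu)$, the extremes taken over the optima at $\nu$.

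The first step supplies the key input, Proposition~\ref{prop:sharp}(iii). It gives $\nu\, C(S^\star_\nu) \le a_g$ for every optimum at every price $\nu > 0$ under any budget, in particular uncapped. Combining this with the envelope property gives, for almost every $\nu \in [\kappa, \mu]$,
\[
-\frac{d}{d\nu} R^\star_\nu = C(S^\star_\nu) \le \frac{a_g}{\nu}.
\]

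The second step integrates. Since $R^\star_\nu$ is convex in $\nu$, it is absolutely continuous on $[\kappa,\mu]$, so
\[
R^\star_\kappa - R^\star_\mu = \int_\kappa^\mu C(S^\star_\nu)\, d\nu \le a_g \int_\kappa^\mu \frac{d\nu}{\nu} = a_g \ln(\mu/\kappa).
\]

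Alternatively, one can avoid derivatives with a discrete version. Take the finitely many breakpoints $\kappa = \nu_0 < \nu_1 < \dots < \nu_K = \mu$ of the piecewise-linear function. On each piece a single file $S$ is optimal at both endpoints. The drop over the piece is $C(S)(\nu_{k+1} - \nu_k)$, and since $\nu_{k+1} C(S) \le a_g$ and $(\nu_{k+1}-\nu_k)/\nu_{k+1} \le \ln(\nu_{k+1}/\nu_k)$, it is at most $a_g \ln(\nu_{k+1}/\nu_k)$. Summing telescopes to $a_g \ln(\mu/\kappa)$. I would present this discrete version, since it needs only finiteness of the pool and the elementary inequality $1 - 1/r \le \ln r$.

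The main obstacle is making sure Proposition~\ref{prop:sharp}(iii) applies at every relevant price. It needs the optimum to be nonempty with $w_i > 0$, hence $\phi > 0$; the empty optimum has cost $0$ and contributes a flat piece, so it is harmless. The bound must also hold for optima at the right endpoint of each piece, which it does because it holds for all prices $\nu > 0$. A minor point is that $R^\star$ is the uncapped optimum, so no budget interacts with the argument.
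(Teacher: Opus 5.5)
Your proof is correct. It uses the same key input as the paper, Proposition~\ref{prop:sharp}(iii), but organizes the telescoping differently.

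The paper does not use the piecewise-linear structure of $\nu \mapsto R^\star_\nu$. It fixes a geometric grid $\mu_j = \kappa(\mu/\kappa)^{j/n}$ and takes an optimum $S_j$ at the \emph{left} end of each step. It then bounds the drop by $(\mu_{j+1}-\mu_j)C(S_j) \le a_g(\mu_{j+1}/\mu_j - 1)$. Since $r - 1 \ge \ln r$, each step overshoots the logarithm, so the paper must refine the grid and let $n \to \infty$.

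You instead cut $[\kappa,\mu]$ at the actual breakpoints, so a single file is optimal on each whole closed piece. That lets you apply the cost bound at the \emph{right} end. The resulting per-step factor $1 - \nu_k/\nu_{k+1} \le \ln(\nu_{k+1}/\nu_k)$ already lies below the logarithm, so the bound comes out exactly with no limit.

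Your route needs two things. First, finitely many affine pieces, which holds because $[N]$ is finite. Second, the fact that an affine function lying below another and touching it at an interior point of an interval coincides with it there; this is what justifies ``one file is optimal at both endpoints'' and deserves a sentence. The paper's grid argument needs only that an optimum exist at each grid price and never needs one file to stay optimal across a step.

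Your continuous envelope version is also sound. At differentiable points all optima share the same cost, and absolute continuity of the convex value function justifies the integration.
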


\begin{proof}
For $0 < \mu_1 \le \mu_2$ let $S_1$ be an uncapped optimum at $\mu_1$. Then
$R^\star_{\mu_1} = R_{\mu_2}(S_1) + (\mu_2 - \mu_1) C(S_1)
\le R^\star_{\mu_2} + (\mu_2 - \mu_1) a_g / \mu_1$, because
$\mu_1 C(S_1) \le a_g$ by Proposition~\ref{prop:sharp} applied at price
$\mu_1$ ($S_1$ is an optimum at $\mu_1$, or empty). Fix $n \ge 1$ and put
$\mu_j = \kappa (\mu/\kappa)^{j/n}$ for $j = 0, \dots, n$. Summing the
one-step bound over the $n$ steps,
$R^\star_\kappa - R^\star_\mu \le a_g \sum_{j < n} (\mu_{j+1}/\mu_j - 1)
= a_g\, n\, ((\mu/\kappa)^{1/n} - 1)$, which tends to
$a_g \ln(\mu / \kappa)$ as $n \to \infty$.
\end{proof}

\begin{corollaryA}[A cap insures against an underestimated token price]
\label{cor:insure}
Assume~\eqref{eq:sd}. Let the curator solve
$\max\{\Rz(S) - \hat\kappa C(S) : S \subseteq [N],\ C(S) \le B\}$ exactly
with an estimate $\hat\kappa \ge 0$ of the true price $\kappa > 0$, and
call $R^\star_\kappa - R(S)$ its loss.
(i) If $\hat\kappa \le \kappa$ and $B \ge a_g / \kappa$, the loss is at
most $(\kappa - \hat\kappa) \min(B, a_g / \hat\kappa)$, with
$a_g / 0 = \infty$; at $B = a_g / \kappa$ this is at most
$a_g (1 - \hat\kappa/\kappa) \le a_g$.
(ii) If $\hat\kappa \ge \kappa$ and $B \ge a_g / \kappa$, the cap is
slack ($C(S) \le a_g / \hat\kappa$, and $S$ is an uncapped optimum at
$\hat\kappa$), and the loss is at most
$\min\{a_g \ln(\hat\kappa / \kappa),\ R^\star_\kappa\}$.
(iii) Without a cap and with $0 < \hat\kappa \le \kappa$, the loss is at
most $a_g (\kappa / \hat\kappa - 1)$.
\end{corollaryA}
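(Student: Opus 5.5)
Let $S$ be the curator's file and $S^\star_\kappa$ an uncapped optimum at the true price. Both are capped at $B$. The basic identity I will use is
\[
R(S) = \Rz(S) - \hat\kappa C(S) - (\kappa - \hat\kappa) C(S) = \hat R(S) - (\kappa-\hat\kappa)C(S),
\]
where $\hat R = R_{\hat\kappa}$ restricted to the cap. The plan is to compare $\hat R(S)$ with $\hat R(S^\star_\kappa)$, and then translate back at the true price.

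For (i), take $B \ge a_g/\kappa$. By Proposition~\ref{prop:sharp}, $\kappa C(S^\star_\kappa) \le a_g$, so $S^\star_\kappa$ is feasible under the cap. Optimality of $S$ at $\hat\kappa$ then gives $\hat R(S) \ge \hat R(S^\star_\kappa) = R^\star_\kappa + (\kappa-\hat\kappa)C(S^\star_\kappa) \ge R^\star_\kappa$, using $\hat\kappa \le \kappa$. Hence
\[
R^\star_\kappa - R(S) \le (\kappa-\hat\kappa) C(S).
\]
It remains to bound $C(S) \le \min(B, a_g/\hat\kappa)$. The bound $C(S)\le B$ is feasibility. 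For $\hat\kappa>0$, $S$ is a capped optimum at price $\hat\kappa$, and Proposition~\ref{prop:sharp} applies to optima at any price under any budget, so $\hat\kappa C(S)\le a_g$. For the specialization $B = a_g/\kappa$, the minimum is at most $a_g/\kappa$, which gives $a_g(1-\hat\kappa/\kappa)$.

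For (iii), without a cap the same argument holds verbatim: $S^\star_\kappa$ is trivially feasible and $\hat\kappa C(S)\le a_g$. This yields $(\kappa-\hat\kappa)a_g/\hat\kappa = a_g(\kappa/\hat\kappa-1)$.

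For (ii), take $\hat\kappa\ge\kappa$. Proposition~\ref{prop:sharp} at price $\hat\kappa$ shows that the cap $B\ge a_g/\kappa \ge a_g/\hat\kappa$ never binds, so $S$ is an uncapped optimum at $\hat\kappa$ with $C(S)\le a_g/\hat\kappa$. Then
\[
R(S) = R^\star_{\hat\kappa} + (\hat\kappa-\kappa)C(S) \ge R^\star_{\hat\kappa},
\]
and Lemma~\ref{lem:envelope} with $\mu=\hat\kappa$ gives a loss of at most $a_g\ln(\hat\kappa/\kappa)$. The other term of the minimum follows from $R(S)\ge R^\star_{\hat\kappa}\ge R_{\hat\kappa}(\emptyset)=0$, so the loss is at most $R^\star_\kappa$.

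The step needing most care is (i): the claim that the curator's capped optimum at a possibly small $\hat\kappa$ obeys $\hat\kappa C(S)\le a_g$. This relies on the "any budget" clause of Proposition~\ref{prop:sharp}, namely that capped optima are deletion-stable with $w_i>0$ via Proposition~\ref{prop:inside}. It must also handle $\hat\kappa=0$ by the convention $a_g/0=\infty$, where only $C(S)\le B$ is used; Proposition~\ref{prop:inside} needs a positive price, so the case $\hat\kappa = 0$ must be separated out.
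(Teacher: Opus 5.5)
Your proposal is correct and follows the paper's proof essentially step for step. The shared moves are the decomposition $R(S') - R(S) = [R_{\hat\kappa}(S') - R_{\hat\kappa}(S)] + (\kappa-\hat\kappa)(C(S)-C(S'))$ with $S' = S^\star_\kappa$ for (i) and (iii), the bound $\hat\kappa C(S) \le a_g$ from Proposition~\ref{prop:sharp} for $\hat\kappa > 0$ (with $\hat\kappa = 0$ handled by $C(S) \le B$ alone), and, for (ii), slackness of the cap together with Lemma~\ref{lem:envelope} and $R^\star_{\hat\kappa} \ge 0$. The one blemish is your opening remark that both files are capped at $B$: this does not hold a priori for the uncapped optimum $S^\star_\kappa$, and it has no meaning in (iii). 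You do, however, establish feasibility correctly wherever it is actually needed.
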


\begin{proof}
Let $S$ be the curator's file, an optimum of the capped problem at price
$\hat\kappa$, and $S_\kappa$ an uncapped optimum at $\kappa$. When
$\hat\kappa > 0$ and $S \ne \emptyset$, $S$ is deletion-stable at
$\hat\kappa$ with $w_i > 0$ on $S$ (Proposition~\ref{prop:inside} at price
$\hat\kappa$), so $\hat\kappa C(S) \le a_g$ by
Proposition~\ref{prop:sharp}(iii); the same holds for $S = \emptyset$.
For every file $S'$,
\begin{equation}
\label{eq:telescope}
R(S') - R(S) = [R_{\hat\kappa}(S') - R_{\hat\kappa}(S)]
+ (\kappa - \hat\kappa)(C(S) - C(S')).
\end{equation}
(i) $C(S_\kappa) \le a_g / \kappa \le B$ by
Proposition~\ref{prop:sharp}, so $S_\kappa$ is feasible for the curator and
$R_{\hat\kappa}(S_\kappa) \le R_{\hat\kappa}(S)$. With $S' = S_\kappa$ and
$\hat\kappa \le \kappa$, \eqref{eq:telescope} gives
$R^\star_\kappa - R(S) \le (\kappa - \hat\kappa) C(S)$. Now $C(S) \le B$,
and if $\hat\kappa > 0$ then $C(S) \le a_g / \hat\kappa$; hence
$C(S) \le \min(B, a_g / \hat\kappa)$. At $B = a_g / \kappa$ the bound
is $(\kappa - \hat\kappa) a_g / \kappa = a_g (1 - \hat\kappa / \kappa)$.
(ii) $C(S) \le a_g / \hat\kappa$. An uncapped optimum $S_{\hat\kappa}$ at
$\hat\kappa$ has $C(S_{\hat\kappa}) \le a_g / \hat\kappa \le a_g / \kappa
\le B$, so it is feasible for the curator and
$R_{\hat\kappa}(S_{\hat\kappa}) \le R_{\hat\kappa}(S)$: $S$ is an uncapped
optimum at $\hat\kappa$, that is $R_{\hat\kappa}(S) = R^\star_{\hat\kappa}$.
Since $\hat\kappa \ge \kappa$, $R(S) = R_{\hat\kappa}(S) + (\hat\kappa -
\kappa) C(S) \ge R^\star_{\hat\kappa} \ge R_{\hat\kappa}(\emptyset) = 0$, so
the loss is at most $R^\star_\kappa$, and it is at most
$R^\star_\kappa - R^\star_{\hat\kappa} \le a_g \ln(\hat\kappa / \kappa)$
by Lemma~\ref{lem:envelope}.
(iii) Without a cap, $S_\kappa$ is feasible, and \eqref{eq:telescope} with
$S' = S_\kappa$ gives $R^\star_\kappa - R(S) \le (\kappa - \hat\kappa) C(S)
\le (\kappa - \hat\kappa) a_g / \hat\kappa = a_g (\kappa/\hat\kappa - 1)$.
\end{proof}

Under the model of Appendix~\ref{app:mnl} the bound in (iii) is attained
for every $\hat\kappa$, so the uncapped loss is unbounded
as $\hat\kappa \to 0$; whether it is attained for a general $g$ is open.

\section{THE MULTINOMIAL LOGIT CASE}
\label{app:mnl}

The multinomial logit model is shared dilution with $g(x) = 1/(1+x)$ and
$\theta_i = a_i = u_i = v_i / v_0$.

\begin{assumptionprime}[Multinomial logit]
\label{ass:mnl}
Instruction $i$ has weight $v_i > 0$, the agent's default behaviour has
weight $v_0 > 0$, and $q_i(S) = v_i / (v_0 + V(S))$ with
$V(S) = \sum_{i \in S} v_i$.
\end{assumptionprime}

Write $U(S) = \sum_{i \in S} u_i$ and $D(S) = 1 + U(S)$, so that
$q_i(S) = u_i / D(S)$ and $Q(S) = U(S)/D(S) = 1 - 1/D(S) < 1$. Every
instance satisfies the requirement $\theta_i g(a_i) = u_i/(1 + u_i) < 1$
of Section~\ref{sec:model}. Here $\chi = 1$ and $\bar G = \sup_x x/(1+x) = 1$,
so the capacity is $\bar Q = 1$: the agent follows at most one
instruction per session on average, which is the single-choice case of
Proposition~\ref{prop:sizefull}(v). The quantities of
Appendix~\ref{app:sharp} read, with $u_{\max} = \max_{i \in S} u_i$:
$W(S) = \sum_{i \in S} w_i u_i$, $\phi = \Rz(S) = W(S)/D(S)$,
$e_i(S) = 1$, $E(S) = U(S)$, $g(A)E(S) = Q(S) < 1$, and
\[
\Rz(S) - \Rz(S \setminus \{i\}) = \frac{u_i (w_i - \phi)}{D(S) - u_i},
\qquad
X_g(S) = \frac{U(S)}{D(S)\, (D(S) - u_{\max})} .
\]
Since $g(A)E(S) < 1$ for every file, Proposition~\ref{prop:sharp}(i)
gives $w_i > \phi > 0$ on every deletion-stable file, and
Proposition~\ref{prop:sharp}(ii) reads
$\mu C(S) \le \phi / (D(S) - u_{\max})$.

The capped problem is capacitated multinomial logit assortment with
product costs \citep{kunnumkal2019tractable}, NP-hard in general
\citep{desir2022capacitated}.

\section{PROOFS AND FURTHER RESULTS FOR SECTION 4}
\label{app:learning}

Section~\ref{app:assumptions} states the latent-variable model behind
Assumption~\ref{ass:feedback} and the tied-loads condition.
Section~\ref{app:ident} states and proves Theorem~\ref{thm:ident}.
Section~\ref{app:deletion} proves Proposition~\ref{prop:blind} and
gives the result on magnitude feedback. Section~\ref{app:admission} proves
Theorem~\ref{thm:threshold}. Section~\ref{app:regret} gives the
estimator of ETC-$m$, its error bound, the full regret bound with its
tuned form, and the proofs of Theorems~\ref{thm:etc} and~\ref{thm:lower}.

\subsection{Formal assumptions}
\label{app:assumptions}

\begin{assumptionA}[Observations]
\label{ass:obs}
The parameters are $\varpi = (p, q, h, r^+, r^-, \eta, \delta)$:
relevance $p_i \in [0, 1]$; a compliance function $q$ on pairs $(S, i)$
with $i \in S$, with values in $[0, 1]$; harm probability
$h_i \in [0, 1]$; gain $r_i^+ \ge 0$ given a helpful outcome and loss
$r_i^- \ge 0$ given a harmful one, so $r_i = (1 - h_i) r_i^+ - h_i r_i^-$;
harm-flag rate $\eta \in (0, 1]$; gap-detection rates
$\delta_i \in (0, 1]$. A \emph{design} is a fixed sequence of files
$(S_t)_{t \ge 1}$. In session $t$ each item $i$ has independent latent
draws $\xi_{i,t} \sim \Bern(p_i)$ (relevance),
$F_{i,t} \sim \Bern(q_i(S_t))$ (compliance), $\zeta_{i,t} \sim \Bern(h_i)$
(harmful outcome), $\vartheta_{i,t} \sim \Bern(\eta)$ (flag) and
$\gamma_{i,t} \sim \Bern(\delta_i)$ (gap detection), independent across
items and sessions, and a gain $X_{i,t}$ with
$\mathbb E[X_{i,t} \mid \zeta_{i,t} = 0] = r_i^+$ and
$\mathbb E[X_{i,t} \mid \zeta_{i,t} = 1] = -r_i^-$, independent of the other
draws given $\zeta_{i,t}$; the gains are independent across items and
sessions and, given $\zeta_{i,t}$, identically distributed. The curator
observes, for $i \in S_t$: $\xi_{i,t}$; if $\xi_{i,t} = 1$, $F_{i,t}$; if
$\xi_{i,t} F_{i,t} = 1$, the harm flag $H_{i,t} = \zeta_{i,t} \vartheta_{i,t}$.
For $i \notin S_t$ it observes only the gap flag
$\Gamma_{i,t} = \xi_{i,t} \gamma_{i,t}$. It never observes $X_{i,t}$.
\end{assumptionA}

The expected per-session gain of a loaded item is
$p_i q_i(S) r_i = w_i q_i(S)$, as in~\eqref{eq:value}. Under~\eqref{eq:sd}
the compliance function is $q_i(S) = \theta_i g(A(S))$ and its parameters
are $(\theta, a)$. The \emph{typed} case is $h_i \in \{0, 1\}$ with
$r_i^\pm > 0$.

\begin{assumptionA}[Sightings]
\label{ass:sight}
A candidate's sighting indicators $Y_t \in \{0, 1\}$, $t \ge 1$, are
i.i.d.\ $\Bern(\pi)$, with $\pi = \varepsilon$ for a spurious candidate
and $\pi = \delta_j p_j$ for a real one.
\end{assumptionA}

\begin{assumptionA}[Magnitude feedback]
\label{ass:mag}
In addition to Assumption~\ref{ass:obs}: for $i \in S_t$ with
$\xi_{i,t} F_{i,t} = 1$ the curator observes $X_{i,t} \in [-1, 1]$, so
$|w_i| \le 1$; $p_i \ge p_{\min} > 0$ for all $i$; and the curator has an
exact, measurable oracle for the capped problem with estimated
parameters (the maximization in ETC-$m$ below).
\end{assumptionA}

\begin{assumptionA}[Tied loads]
\label{ass:tied}
Compliance follows~\eqref{eq:sd} with $\theta_i = \lambda a_i$ for a
known $\lambda > 0$, the curve $g$ is known and continuously
differentiable, and $a_i \le a_{\max}$ for a known $a_{\max}$. With
$s$ the batch size of ETC-$m$, $x_s = s a_{\max}$ and $G(x) = x g(x)$,
\[
\gamma_s = \min_{x \in [0, x_s]} G'(x) > 0 .
\]
\end{assumptionA}

Assumption~\ref{ass:feedback} is Assumption~\ref{ass:obs}, and with
magnitude feedback it is Assumption~\ref{ass:mag}. Under tied loads
$Q(S) = \lambda A(S) g(A(S)) = \lambda G(A(S))$, and $\gamma_s > 0$ makes
$G$ injective on $[0, x_s]$, the range of the load of a batch. This is
an identification condition, not a regularity condition: for
$g(x) = e^{-\beta x}$, $G$ peaks at $1/\beta$, and if $x_s > 1/\beta$
two loads $A_1 < 1/\beta < A_2$ with $G(A_1) = G(A_2)$ give a batch the
same compliances under $a^{(1)}$ and under
$a^{(2)}_i = a^{(1)}_i A_2 / A_1$, while the two instances disagree on
larger files.

\paragraph{Observable rates.}
Under Assumption~\ref{ass:obs} the observations of different items and
sessions are independent, so the law of the observation process under a
design is the product of the laws of the single observations $O_{i,t}$.
For $i \in S_t$, $O_{i,t}$ takes the values $\xi = 0$,
$(\xi, F) = (1, 0)$, $(\xi, F, H) = (1, 1, 0)$ and $(1, 1, 1)$ with
probabilities $1 - p_i$, $p_i (1 - q_i(S_t))$,
$p_i q_i(S_t) (1 - \eta h_i)$ and $p_i q_i(S_t)\, \eta h_i$. For
$i \notin S_t$, $O_{i,t} = \Gamma_{i,t} \sim \Bern(\delta_i p_i)$. Hence two
parameter vectors give the same observation law iff, for every $(t, i)$
with $i \in S_t$, they agree on $p_i$, $p_i q_i(S_t)$ and
$p_i q_i(S_t) \eta h_i$, and for every $(t, i)$ with $i \notin S_t$ they
agree on $\delta_i p_i$. We call these the \emph{observable rates}. They
do not involve $(r^+, r^-)$, and they involve $q$ only through the
values $q_i(S_t)$ on played pairs.

\subsection{Identification}
\label{app:ident}

A parameter is \emph{identified} under a design if any two parameter
vectors with the same law of the observations agree on it.

\begin{theoremA}[What censored transcripts identify]
\label{thm:ident}
Assumption~\ref{ass:feedback} with compliance~\eqref{eq:sd}, under a
design $(S_t)$.
(a) For $i \in S_t$ with $p_i > 0$, $q_i(S_t)$ is identified, and nothing
else about compliance is: two compliance functions that agree on the
played pairs $(S_t, i)$ give the same law. The law of the compliance data
depends on the parameters only through $(p, q)$, so no test of $r_i > 0$
against $r_i < 0$ that uses compliance data alone has power above its
size.
(b) If $\{i\}$ is played and $p_i > 0$, then $\theta_i g(a_i)$ is
identified. A design of singleton files identifies neither $\theta_i$ nor
$a_i$. Within one played file, $\theta_i / \theta_j$ is identified when
$p_i, p_j > 0$.
(c) If $S$ and $S \cup \{i\}$ are played, $i \notin S$, and some $k \in S$
has $p_k > 0$, then $g(A(S) + a_i) / g(A(S))$ is identified, and $a_i$
is identified once $A(S)$ is, provided the map
$x \mapsto g(A(S) + x)$ is injective on the allowed interval for $a_i$.
$A(S)$ is identified when $p_j > 0$ and
$\theta_j = \lambda a_j$ on $S$ with $\lambda$ known, and $x \mapsto x g(x)$
is injective on an interval known to contain $A(S)$.
(d) For $i \in S_t$ with $p_i q_i(S_t) > 0$, $\eta h_i$ is identified,
hence whether $i$ ever harms. In the typed case $\sign(r_i)$ is
identified; in general it is not, and $|r_i|$ never is.
(e) For $i$ never played, the observations involving $i$ are sightings
with law $\Bern(\delta_i p_i)$ per session, and nothing separates
$\delta_i$ from $p_i$. If the sighting rate $\delta$ is common and some
$j_0$ with $p_{j_0} > 0$ is played in one session and left out in
another, then $\delta$ and every $p_i$ are identified.
\end{theoremA}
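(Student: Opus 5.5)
The whole argument runs through the observable-rates characterization stated just before the theorem. Two parameter vectors induce the same observation law under the design iff they agree on $p_i$, $p_i q_i(S_t)$ and $p_i q_i(S_t)\eta h_i$ for played pairs, and on $\delta_i p_i$ for unplayed pairs. So each part reduces to two tasks. For positive claims, I would show that the target quantity is a function of the observable rates. For negative claims, I would construct two parameter vectors that agree on every observable rate but differ on the target.

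For (a), $q_i(S_t) = (p_i q_i(S_t))/p_i$ when $p_i>0$. The converse direction is immediate from the characterization, since $q$ enters only through played pairs. For the testing statement I would marginalize to the compliance coordinates $(\xi_{i,t},F_{i,t})$. Their law depends only on $(p,q)$. Flipping $r_i^\pm$, for instance swapping $h_i$ between $0$ and $1$ while keeping $(p,q)$ fixed, changes the sign of $r_i$ but leaves this law unchanged. Hence any test based on compliance data has rejection probability equal under the two hypotheses, so its power is at most its size.

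For (b), $q_i(\{i\})=\theta_i g(a_i)$ is identified by (a). For the singleton non-identification, rescale: take $\theta_i' = \theta_i g(a_i)/g(a_i')$ with another $a_i'$ such that $g(a_i')>0$ and the constraint $\theta_i' g(a_i')<1$ still holds. The singleton rates are unchanged while $(\theta_i,a_i)$ moves. If $g$ is constant near $a_i$, I would instead vary $\theta_i$ jointly; I expect this edge case to need the most care, and I would choose $a_i'$ in a region where $g$ differs. The ratio claim follows because $q_i(S)/q_j(S)=\theta_i/\theta_j$ within one file. For (c), compute $q_k(S\cup\{i\})/q_k(S) = g(A(S)+a_i)/g(A(S))$. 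Given $A(S)$ and injectivity, solve for $a_i$. Under tied loads, $Q(S)=\lambda G(A(S))$, where $Q(S)=\sum_j q_j(S)$ is identified when every $p_j>0$. Injectivity of $G$ then recovers $A(S)$.

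For (d), divide the harm rate by $p_iq_i(S_t)$ to obtain $\eta h_i$, which is zero iff $h_i=0$. In the typed case $h_i\in\{0,1\}$, so $\sign(r_i)=+$ iff $\eta h_i=0$. In general, take $h_i\in(0,1)$ fixed and choose $r_i^\pm$ to give either sign of $(1-h_i)r_i^+-h_ir_i^-$. Scaling $(r^+,r^-)$ changes $|r_i|$ in every case, and the observable rates never involve $r$. For (e), the only rates involving $i$ are $\delta_ip_i$, so the pair $(\delta_i c,\,p_i/c)$ gives the same law. With a common $\delta$, playing $j_0$ identifies $p_{j_0}$, and leaving $j_0$ out identifies $\delta p_{j_0}$. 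This gives $\delta$, and then every $p_i$ from either $p_i$ itself (if played) or $\delta p_i$. The main obstacle I anticipate is the non-identification constructions in (b) and (d): each must respect all model constraints ($\theta_i g(a_i)<1$, the parameter ranges, and $\eta\in(0,1]$) while agreeing on every played rate of the design.
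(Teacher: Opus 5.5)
Your proposal is correct and follows the paper's proof essentially step for step: you reduce everything to the observable rates, read the positive claims off as ratios of rates, and build rate-preserving perturbations for the negative claims (swapping $h_i, r_i^\pm$ in (a) and (d), rescaling $(\theta_i, a_i)$ in (b), and using $(\delta_i c,\, p_i/c)$ in (e)). The edge case you flag in (b) is a real exception rather than a technicality: if $g$ is constant on $(0,\infty)$, e.g.\ $g \equiv 1$, singleton files do identify $\theta_i$, so your choice of $a_i'$ with $g(a_i') \ne g(a_i)$ needs $g$ to be nonconstant there, a caveat that the paper's own rescaling argument also leaves unstated.
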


Part (a) is the key fact: compliance carries no information about value,
however many sessions are observed. Testing an instruction alone measures
$\theta_i g(a_i)$, but not what the instruction takes from the others.
Separating the load $a_i$ from $\theta_i$ needs two files that differ in
$i$ and a known load for the smaller one; the tied-loads condition
supplies that load. Harm flags, unlike compliance, carry the sign of
value in the typed case (d). One instruction played both in and out of
the file calibrates the sightings of all the others (e).

\begin{proof}[Proof of Theorem~\ref{thm:ident}]
(a) If $p_i > 0$, the observable rates give $q_i(S_t) = p_i q_i(S_t)/p_i$.
Conversely the rates involve $q$ only through the played pairs, so two
compliance functions that agree there give the same law. The compliance
data $(\xi_{i,t}, F_{i,t})$ have the product law with parameters
$(p_i, q_i(S_t))$, which depend only on $(p, q)$. Given $\varpi$, let
$\varpi'$ keep $(p, q, \eta, \delta)$ and set $h_i' = 0$,
$r_i^{+\prime} = 1$, $r_i^{-\prime} = 0$ and, for $j \ne i$, $h_j' = 1$,
$r_j^{+\prime} = 0$, $r_j^{-\prime} = 1$; then $r_i' = 1 > 0 > -1 = r_j'$,
and $\varpi, \varpi'$ have the same compliance law; swapping the roles
gives a $\varpi''$ with $r_i'' < 0$. A test based on compliance data has
the same rejection probability under $\varpi$ and under a $\varpi'$ of
the opposite sign, so its power equals its size.
(b) By (a), $q_i(\{i\}) = \theta_i g(a_i)$ is identified. If only
singleton files are played, item $i$ enters the observable rates only
through $\theta_i g(a_i)$; for any $a' > 0$ the pair
$(\theta_i g(a_i) / g(a'),\, a')$ gives the same value, satisfies the
requirement $\theta' g(a') = \theta_i g(a_i) < 1$, and changes no other
rate, so neither $\theta_i$ nor $a_i$ is identified. Within a played
file $S \ni i, j$ with $p_i, p_j > 0$, $q_i(S)/q_j(S) = \theta_i/\theta_j$
by~\eqref{eq:sd}.
(c) By (a), $q_k(S)$ and $q_k(S \cup \{i\})$ are identified, and their
ratio is $g(A(S) + a_i) / g(A(S))$. Given $A(S)$, the injectivity condition
in (c) makes $a_i$ the unique solution. Under the stated condition every
$q_j(S)$, $j \in S$, is
identified, hence $Q(S) = \sum_{j \in S} \lambda a_j g(A(S)) = \lambda G(A(S))$
with $G(x) = x g(x)$, and $A(S)$ is the unique preimage of $Q(S)/\lambda$
on the known interval.
(d) If $p_i q_i(S_t) > 0$, the observable rates give $\eta h_i$, hence
$\mathbf 1\{h_i > 0\}$ as $\eta > 0$. In the typed case $h_i \in \{0, 1\}$
is read off, and $\sign(r_i) = +1$ if $h_i = 0$ and $-1$ if $h_i = 1$,
since $r_i^\pm > 0$. In general, $r_i = r_i^+ (1 - h_i (1 + \lambda_i))$
with $\lambda_i = r_i^-/r_i^+$ when $r_i^+ > 0$; the observation law does
not depend on $(r^+, r^-)$, and for $h_i \in (0, 1)$ the map
$\lambda \mapsto 1 - h_i(1 + \lambda)$ takes both signs on $[0, \infty)$,
so choosing $r_i^{+} = 1$ and $\lambda_i$ on either side of
$(1 - h_i)/h_i$ gives two parameter vectors with the same law and
opposite signs. Since no observable rate involves $(r^+, r^-)$, $|r_i|$
is not identified.
(e) For an item never played, the only observations involving it are the
$\Gamma_{i,t}$, i.i.d.\ $\Bern(\delta_i p_i)$, and no observable rate of another
item involves $(\delta_i, p_i)$; so the law depends on $(\delta_i, p_i)$
only through the product, and scaling $\delta_i$ by $c$ and $p_i$ by
$1/c$, for $c$ near $1$, leaves it unchanged. With common $\delta$ and a
bridge item $j_0$, the session with $j_0 \in S_t$ identifies $p_{j_0}$
(rate $p_{j_0}$), the session with $j_0 \notin S_t$ identifies
$\delta p_{j_0}$, so $\delta = \delta p_{j_0} / p_{j_0}$; then every
out-of-file $p_i = \delta p_i / \delta$, and every in-file $p_i$ is an
observable rate.
\end{proof}

\subsection{Deletion and the need for magnitudes}
\label{app:deletion}

\begin{proof}[Proof of Proposition~\ref{prop:blind}]
\emph{Instance and claims.} Take $q_1(S) = 0.1$, $q_2(S) = 0.8$,
$q_1(\{1\}) = 1/2$, $q_2(\{2\}) = 8/9$, $p_1 = p_2 = 1/2$,
$\kappa c_1 = \kappa c_2 = 0.001$, $r_1 = 1$ and $r_2 = \pm 0.05$, so that
$R(\{1\}) = 0.249 > R(\{2\}) \in \{0.021, -0.023\}$. We show: with
empirical non-compliance over each instruction's first $n$ relevant
sessions, the policy that deletes the most-ignored instruction, and the
policy that deletes every instruction above a threshold
$t_0 \in (0.2, 0.9)$, both delete instruction~1 and keep instruction~2
with probability tending to one as $n \to \infty$; with $r_2 = -0.05$ they
leave a file worth less than the empty one; and in the typed case with
$r_2 = -0.05$, the policy that deletes at the first harm flag never
deletes instruction~1 and deletes instruction~2 within $n'$ sessions with
probability $1 - (1 - 0.4\eta)^{n'}$.

\emph{Law of the decision.} By Theorem~\ref{thm:ident}(a) the compliance data of the first $t$
sessions have the same law under all parameter vectors with the same
$(p, q)$, so a measurable function of them has the same law. The
$\varpi'$ of that proof has $r_i' > 0$ and $r_j' < 0$ for $j \ne i$.
The instance satisfies~\eqref{eq:sd}: with $g(x) = 1/(1 + x)$ and
$\theta = a = (1, 8)$, $q_1(\{1\}) = 1/2$, $q_2(\{2\}) = 8/9$,
$q_1(S) = 1/10$, $q_2(S) = 8/10$, and $\theta_2 g(a_2) = 8/9 < 1$. Its
values are $R(\{1, 2\}) = 0.05 \pm 0.02 - 0.002$,
$R(\{1\}) = 0.5 \cdot 0.5 - 0.001 = 0.249$ and
$R(\{2\}) = \pm 0.05 \cdot 0.5 \cdot 8/9 - 0.001 \in \{0.021, -0.023\}$.
Under Assumption~\ref{ass:obs} the latent compliance draws $(F_{i,t})_t$
of item $i$ are i.i.d.\ $\Bern(q_i(S))$ and independent of its relevance
draws $(\xi_{i,t})_t$, so conditional on the relevance process, the draws
at the first $n$ relevant sessions are i.i.d.\ $\Bern(q_i(S))$ (each item
reaches $n$ relevant sessions almost surely since $p_i > 0$), and
Hoeffding's inequality gives, for the empirical compliance $\hat q_i$ over
those sessions, $\Pr(\hat q_i \ge q_i(S) + \epsilon) \le e^{-2n\epsilon^2}$
and $\Pr(\hat q_i \le q_i(S) - \epsilon) \le e^{-2n\epsilon^2}$. The argmax
policy keeps item~1 only if $\hat q_1 \ge \hat q_2$, which requires
$\hat q_1 \ge 0.45$ or $\hat q_2 \le 0.45$, each of probability at most
$e^{-2n(0.35)^2}$; so it deletes item~1 and keeps item~2 with probability
at least $1 - 2e^{-2n(0.35)^2} \to 1$. The threshold policy keeps item~1
iff $\hat q_1 \ge 1 - t_0 = 0.1 + (0.9 - t_0)$, of probability at most
$e^{-2n(0.9 - t_0)^2}$, and deletes item~2 iff
$\hat q_2 < 1 - t_0 = 0.8 - (t_0 - 0.2)$, of probability at most
$e^{-2n(t_0 - 0.2)^2}$; both tend to $0$. With $r_2 = -0.05$ both policies
end at $\{2\}$ with probability tending to $1$, and
$R(\{2\}) = -0.023 < 0 = R(\emptyset)$. For the harm-triggered policy in
the typed case with $h_1 = 0$, $h_2 = 1$: item~1 is never flagged, since
$\zeta_{1,t} = 0$; item~2 is flagged in a session with $S = \{1, 2\}$ with
probability $p_2 q_2(S) \eta = 0.4 \eta$, independently across sessions, so
it is deleted within $n'$ sessions with probability
$1 - (1 - 0.4\eta)^{n'}$.
\end{proof}

The next result is the reason Section~\ref{sec:editcap} assumes
magnitude feedback. Its setting is that of Section~\ref{sec:editcap}: a
policy is a probability kernel from the history of files and
observations to the next file, and given the history through session
$t - 1$ and $S_t$, the latent draws of session $t$ are independent of the
past and follow Assumption~\ref{ass:obs} with file $S_t$.

\begin{propositionA}[Magnitudes are necessary]
\label{prop:magnitude}
Assumption~\ref{ass:obs}, one candidate ($N = 1$), $\kappa > 0$,
$c_1 > 0$, $B \ge c_1$, $p_1 q_1(\{1\}) > 0$. There are parameter vectors
$\varpi, \varpi'$ with the same observation law such that $\emptyset$ is
optimal under $\varpi$ with margin
$\Delta = R_\varpi(\emptyset) - R_\varpi(\{1\}) > 0$, $\{1\}$ is optimal
under $\varpi'$ with margin
$\Delta' = R_{\varpi'}(\{1\}) - R_{\varpi'}(\emptyset) > 0$, and every
policy and every horizon $T$ have
$\max(\Reg_T(\varpi), \Reg_T(\varpi')) \ge T \min(\Delta, \Delta') / 2$.
\end{propositionA}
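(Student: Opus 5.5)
Without magnitude feedback the gains $r$ enter no observable rate, so two instances that differ only in $(r^+, r^-)$ produce the same law of observations under every policy. The plan is a two-point indistinguishability argument. We build $\varpi$ and $\varpi'$ that share $(p, q, h, \eta, \delta)$ and differ only in the gain parameters. Then we use the fact that, for a fixed policy kernel, the joint law of the file sequence $(S_t)_{t \le T}$ is the same under both. Finally we add the two regrets session by session.

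For the construction, keep $p_1$, $q_1(\{1\})$, $\eta$ and $\delta_1$ arbitrary, with $p_1 q_1(\{1\}) > 0$. Take $h_1 = 0$ under both instances, so no harm flag is ever raised and $r_1 = r_1^+$. Choose $r_1^+ = 0$ under $\varpi$, which gives $R_\varpi(\{1\}) = -\kappa c_1 < 0$ and margin $\Delta = \kappa c_1$. Choose $r_1^{+\prime}$ large enough that $p_1 q_1(\{1\}) r_1^{+\prime} > \kappa c_1$, which gives $\Delta' = p_1 q_1(\{1\}) r_1^{+\prime} - \kappa c_1 > 0$. Since $B \ge c_1$, both files $\emptyset$ and $\{1\}$ are feasible, and with $N = 1$ these are the only files. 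The gain distribution $X_{1,t}$ can be taken as a point mass at $r_1^+$; it is never observed under Assumption~\ref{ass:obs}.

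Next we show the laws coincide. By the observable-rates computation in Appendix~\ref{app:assumptions}, the conditional law of session $t$'s observations given $S_t$ depends only on $p_1$, $p_1 q_1(S_t)$, $p_1 q_1(S_t)\eta h_1$ and $\delta_1 p_1$. These are identical under the two instances. Inducting over $t$ with the policy kernel, the joint law of $(S_1, O_1, \dots, S_T, O_T)$ is the same under both instances. Let $n_t = \Pr(S_t = \{1\})$, which is common to both.

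Then the regrets are
\[
\Reg_T(\varpi) = \Delta \sum_{t \le T} n_t
\]
and
\[
\Reg_T(\varpi') = \Delta' \sum_{t \le T} (1 - n_t).
\]
Each is at least $\min(\Delta, \Delta')$ times its sum of weights. Adding the two gives
\[
\Reg_T(\varpi) + \Reg_T(\varpi') \ge T \min(\Delta, \Delta'),
\]
so the maximum of the two regrets is at least half of this, as claimed.

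The only delicate point is making the equal-law induction precise for adaptive policies. Conditionally on the history, each session's observation kernel is identical under the two instances, so the chain rule gives equal joint laws; this replaces the fixed-design statement of Theorem~\ref{thm:ident}.
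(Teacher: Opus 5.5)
Your proposal is correct and is essentially the paper's own proof. Both use the same two-point construction with $h_1 = 0$ and gains on either side of $\kappa c_1/(p_1 q_1(\{1\}))$, both note that no observable rate involves $r_1^+$ so every policy induces the same law of $(S_t)_{t \le T}$, and both average $\Delta \sum_t n_t$ with $\Delta' \sum_t (1 - n_t)$; the only difference is that the paper takes $r_{\mathrm{lo}} > 0$ (the typed helpful case, so even a known-helpful instruction cannot be priced), whereas your $r_1^+ = 0$ is also admissible under Assumption~\ref{ass:obs} and suffices for the statement.
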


\begin{proof}
Take the typed helpful case ($h_1 = 0$) and two gains
$r_{\mathrm{lo}} < \kappa c_1 / (p_1 q_1(\{1\})) < r_{\mathrm{hi}}$; let
$\varpi$ have $r_1^+ = r_{\mathrm{lo}}$ and $\varpi'$ have
$r_1^+ = r_{\mathrm{hi}}$, all else equal. No observation depends on
$r_1^+$, so the two observation laws coincide, and a policy induces the
same law of $(S_t)_{t \le T}$ under both; in particular
$T_1 = \mathbb E \sum_{t \le T} \mathbf 1\{S_t = \{1\}\}$ is the same. Under
$\varpi$, $R(\{1\}) = -\Delta < 0 = R(\emptyset)$ with
$\Delta = \kappa c_1 - p_1 q_1(\{1\}) r_{\mathrm{lo}} > 0$, so
$S^\star = \emptyset$ and $\Reg_T(\varpi) = \Delta T_1$. Under $\varpi'$,
$R(\{1\}) = \Delta' > 0$ with
$\Delta' = p_1 q_1(\{1\}) r_{\mathrm{hi}} - \kappa c_1$, so
$S^\star = \{1\}$ (feasible as $B \ge c_1$) and
$\Reg_T(\varpi') = \Delta' (T - T_1)$. The larger of the two is at least
their average,
$(\Delta T_1 + \Delta'(T - T_1))/2 \ge \min(\Delta, \Delta') T/2$.
\end{proof}

\subsection{Admission}
\label{app:admission}

The admission policy of Section~\ref{sec:admission} admits at
$\tau_k = \inf\{t \ge 1 : \sum_{s = t - W + 1}^{t} Y_s \ge k\}$, with
$Y_s = 0$ for $s \le 0$.

\begin{lemmaA}[Binomial tail]
\label{lem:binom}
$\Pr(\Bin(n, y) \ge j) \le \binom{n}{j} y^j \le (ny)^j / j!$.
\end{lemmaA}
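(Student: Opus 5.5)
The statement is Lemma~\ref{lem:binom}: $\Pr(\Bin(n,y)\ge j)\le\binom{n}{j}y^j\le (ny)^j/j!$. I plan a union bound over index sets, followed by an elementary estimate of the binomial coefficient. Throughout I take $n\ge 1$, $y\in[0,1]$ and $j\ge 0$ an integer. When $j=0$ both sides equal $1$, so the claim is trivial. When $j>n$ the left side is $0$ and $\binom{n}{j}=0$, so the first inequality holds; the second holds because the right side is nonnegative. I therefore assume $1\le j\le n$.

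\textbf{First inequality.} Write $\Bin(n,y)=\sum_{s=1}^n Y_s$ with $Y_s$ i.i.d.\ $\Bern(y)$. If at least $j$ of the indicators equal $1$, then some $j$-element subset $I\subseteq[n]$ has $Y_s=1$ for every $s\in I$. The event $\{\sum_s Y_s\ge j\}$ is thus contained in the union over the $\binom{n}{j}$ subsets $I$ of the events $\bigcap_{s\in I}\{Y_s=1\}$. By independence, each of these events has probability $y^j$, and the union bound gives $\Pr(\Bin(n,y)\ge j)\le\binom{n}{j}y^j$.

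\textbf{Second inequality.} I will use $\binom{n}{j}=\frac{n(n-1)\cdots(n-j+1)}{j!}\le \frac{n^j}{j!}$. This holds because each of the $j$ factors in the numerator is at most $n$. Multiplying by $y^j\ge 0$ gives $\binom{n}{j}y^j\le (ny)^j/j!$.

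I expect no real obstacle. The only delicate point is the containment of events in the union-bound step, which is exact here because the indicators are $\{0,1\}$-valued. In the later admission analysis the lemma will be applied with $n=W$, $y=\varepsilon$ to bound the probability of $k$ spurious sightings in a single window. A union over the at most $T$ window end-times then bounds $\PFI(k)$ by roughly $T(\varepsilon W)^{k-1}\varepsilon/(k-1)!$-type terms. That application, however, belongs to Theorem~\ref{thm:threshold} and not to this lemma.
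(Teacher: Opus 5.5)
Your proof is correct and follows the paper's argument exactly: a union bound over the $\binom{n}{j}$ subsets of trials that could all succeed, followed by $\binom{n}{j} \le n^j/j!$. One small correction to your closing aside: Lemma~\ref{lem:fi} applies the lemma with $n = W-1$ and $j = k-1$, to the sessions preceding a sighting at $t$ within its window, not with $n = W$ and $j = k$; this has no bearing on the lemma itself.
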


\begin{proof}
If at least $j$ of $n$ independent trials succeed, some $j$-subset of the
trials succeeds entirely; a union bound over the $\binom nj$ subsets gives
the first inequality, and $\binom nj \le n^j/j!$ the second.
\end{proof}

\begin{lemmaA}[False inclusion]
\label{lem:fi}
Under Assumption~\ref{ass:sight} with rate $\varepsilon$, $k \ge 1$,
$W \ge 1$: (1) $\PFI(1) = 1 - (1 - \varepsilon)^T$; (2)
$\PFI(k) \le T \varepsilon \Pr(\Bin(W - 1, \varepsilon) \ge k - 1)
\le T \varepsilon \binom{W-1}{k-1} \varepsilon^{k-1}
\le T \varepsilon ((W - 1)\varepsilon)^{k-1} / (k-1)!$.
\end{lemmaA}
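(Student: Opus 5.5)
For part (1), the window plays no role when $k = 1$: $\tau_1 \le T$ exactly when at least one of $Y_1, \dots, Y_T$ equals one. These are i.i.d.\ $\Bern(\varepsilon)$ by Assumption~\ref{ass:sight}, so the complementary event has probability $(1-\varepsilon)^T$, which gives $\PFI(1) = 1 - (1 - \varepsilon)^T$.

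For part (2), with $k \ge 2$, I would decompose the event $\{\tau_k \le T\}$ by the session at which the window first holds $k$ sightings. At $t = \tau_k$ the window sum reaches $k$ for the first time. Since the sum at $t-1$ was below $k$ and the window slides by one, the increment must come from the newly entering observation, so $Y_t = 1$. The remaining $W-1$ sessions of that window, $t-W+1, \dots, t-1$, then carry at least $k-1$ sightings. Hence
\[
\{\tau_k \le T\} \subseteq \bigcup_{t=1}^{T} \Big(\{Y_t = 1\} \cap \Big\{\textstyle\sum_{s=t-W+1}^{t-1} Y_s \ge k-1\Big\}\Big).
\]
For $t \le 1$ or $W = 1$ the inner sum runs over a convention-zero or empty range. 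The bound still holds there: $k - 1 \ge 1$ cannot be reached, or, when $W = 1$, the right-hand side $\Pr(\Bin(0,\varepsilon) \ge k-1)$ vanishes for $k \ge 2$, matching $\PFI = 0$.

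Next I would apply a union bound over $t$. For each $t$, $Y_t$ is independent of the earlier $Y_s$, so the $t$-th event has probability $\varepsilon \cdot \Pr(\sum \ge k-1)$. The window before $t$ consists of at most $W-1$ real draws, with zeros padding the indices $s \le 0$, so the sum is stochastically dominated by $\Bin(W-1,\varepsilon)$. Summing over the $T$ values of $t$ gives the first inequality. The remaining two inequalities are Lemma~\ref{lem:binom} with $n = W-1$, $j = k-1$ and $y = \varepsilon$.

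The case $k = 1$ needs no separate treatment in (2): the bound reads $T\varepsilon \Pr(\Bin \ge 0) = T\varepsilon$, which follows from the union bound, and the Lemma~\ref{lem:binom} forms equal $T\varepsilon$ trivially. The step needing the most care is justifying $Y_t = 1$ at the first crossing. The point is that the window sum changes by $Y_t - Y_{t-W}$, so it can increase only when $Y_t = 1$. A first crossing at $t = 1$ from the initial all-zero padding also fits this argument.
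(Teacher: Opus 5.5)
Your proposal is correct and follows the paper's proof. Part (1) identifies $\tau_1$ with the first sighting. Part (2) uses that $\{\tau_k = t\}$ forces $Y_t = 1$ and at least $k-1$ sightings among sessions $t-W+1,\dots,t-1$, then independence of $Y_t$, stochastic domination by $\Bin(W-1,\varepsilon)$, a union bound over $t \le T$, and Lemma~\ref{lem:binom}; your sliding-window increment argument just makes explicit the step the paper states without justification.
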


\begin{proof}
(1) $\tau_1$ is the first sighting. (2) The event $\tau_k = t$ requires
$Y_t = 1$ and at least $k - 1$ sightings among the at most $W - 1$ sessions
$t - W + 1, \dots, t - 1$, whose count is independent of $Y_t$ and
stochastically at most $\Bin(W - 1, \varepsilon)$; so
$\Pr(\tau_k = t) \le \varepsilon \Pr(\Bin(W-1, \varepsilon) \ge k - 1)$. A union
bound over $t \le T$ and Lemma~\ref{lem:binom} finish.
\end{proof}

\begin{lemmaA}[Delay]
\label{lem:delay}
Under Assumption~\ref{ass:sight} with rate $\pi \in (0, 1]$: (1)
$\mathbb E \tau_k \ge k/\pi$ for all $k, W$; (2) for $W \ge 2$,
$\mathbb E \tau_2 = \frac1\pi (1 + 1/\psi_W)$ with
$\psi_W = 1 - (1 - \pi)^{W-1}$; and $\mathbb E \tau_1 = 1/\pi$.
\end{lemmaA}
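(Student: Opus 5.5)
For part (1) I will bound $\tau_k$ below by the time of the $k$-th sighting overall. The event $\tau_k \le t$ needs at least $k$ sightings among $Y_1,\dots,Y_t$, since the window sum never exceeds the total count. So $\tau_k \ge \sigma_k$, where $\sigma_k$ is the session of the $k$-th sighting. The gaps between successive sightings are i.i.d.\ geometric with mean $1/\pi$, which gives $\mathbb E\sigma_k = k/\pi$. Hence $\mathbb E\tau_k \ge k/\pi$ for every $W$. The same remark gives $\tau_1 = \sigma_1$, because any single sighting fills a window, so $\mathbb E\tau_1 = 1/\pi$.

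For part (2) I will use a renewal argument. Let $\sigma_1$ be the first sighting. If the next sighting comes within $W-1$ further sessions, the two share a window and $\tau_2$ is that session. Otherwise the first sighting expires, and the process restarts at the later sighting, which now plays the role of $\sigma_1$. This holds because $\tau_2$ is exactly the first sighting whose predecessor lies within $W-1$ sessions.

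Write $G_1, G_2, \dots$ for the i.i.d.\ Geometric($\pi$) inter-sighting gaps. Then $\tau_2 = G_1 + G_2 + \dots + G_{K}$, where $K = \min\{j\ge 2 : G_j \le W-1\}$. Each gap $G_j$ with $j\ge2$ succeeds, meaning $G_j \le W-1$, with probability $\psi_W = 1-(1-\pi)^{W-1}$. So $K-1$ is geometric with mean $1/\psi_W$.

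Since $K$ is a stopping time for the gap sequence and $\mathbb E G_j = 1/\pi$, Wald's identity gives $\mathbb E\tau_2 = \mathbb E[K]/\pi = (1+1/\psi_W)/\pi$. The one point to check is that $K$ is a stopping time with finite mean. Both hold because $\psi_W>0$ when $W\ge2$ and $\pi>0$.

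The main obstacle is justifying the renewal decomposition rigorously, namely that "window of length $W$ contains two sightings" is equivalent to "consecutive sightings at distance at most $W-1$". If two sightings lie in a window, then the two consecutive ones among them do too. Conversely, two consecutive sightings at distance $d\le W-1$ fit in the window ending at the later one. Everything else is routine.
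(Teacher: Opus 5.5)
Your proposal is correct and follows the paper's proof essentially step for step. For (1) it uses the bound $\tau_k \ge \sigma_k$, and for (2) it uses the representation $\tau_2 = G_1 + \dots + G_K$ with $K - 1 \sim \mathrm{Geom}(\psi_W)$ followed by Wald's identity. The only cosmetic difference is that the paper splits off $Z_1$ and applies Wald to $(Z_j)_{j \ge 2}$, whereas you apply it to the whole sum with $\mathbb E K = 1 + 1/\psi_W$ and spell out the window/consecutive-gap equivalence a bit more explicitly.
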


\begin{proof}
(1) The policy cannot fire before the $k$-th sighting, whose time is a sum
of $k$ i.i.d.\ $\mathrm{Geom}(\pi)$ waiting times with mean $k/\pi$.
(2) Let $Z_1, Z_2, \dots$ be the i.i.d.\ $\mathrm{Geom}(\pi)$ times between
sightings ($Z_1$ is the first sighting time). The window count changes only
at sightings, and at the $j$-th sighting the window holds two sightings iff
$Z_j \le W - 1$. So the policy fires at the $j^*$-th sighting,
$j^* = \min\{j \ge 2 : Z_j \le W - 1\}$, and $\tau_2 = Z_1 + \dots + Z_{j^*}$.
Here $j^* - 1 \sim \mathrm{Geom}(\psi_W)$ with $\psi_W = \Pr(Z \le W - 1)$,
and $j^*$ is a stopping time for $(Z_j)_{j \ge 2}$ with finite mean, so
Wald's identity gives
$\mathbb E \tau_2 = \mathbb E Z_1 + \mathbb E[j^* - 1]\, \mathbb E Z
= 1/\pi + 1/(\psi_W \pi)$. Finally $\tau_1 = Z_1$.
\end{proof}

\begin{proof}[Proof of Theorem~\ref{thm:threshold}]
Write $\beta_\varepsilon = 1 - (1 - \varepsilon)^{W - 1}$ for the chance
that a spurious sighting has another inside its window. We first show that
$k = 2$ is the unique minimizer under the two conditions
\begin{align*}
\text{(A)}\ & \Lambda_F \big[1 - (1 - \varepsilon)^T - T \varepsilon\, \beta_\varepsilon\big]
  > \Lambda_D / \psi_W, \\
\text{(B)}\ & \Lambda_F\, T \varepsilon\, \beta_\varepsilon < \Lambda_D (2 - 1/\psi_W),
\end{align*}
which need only $\varepsilon \in [0, 1]$ and $\Lambda_F \ge 0$, and then
that the band implies them.
By Lemmas~\ref{lem:fi}(1) and~\ref{lem:delay},
$\cL(1) = \Lambda_F (1 - (1 - \varepsilon)^T) + \Lambda_D$. By
Lemma~\ref{lem:fi}(2) with $\Pr(\Bin(W-1, \varepsilon) \ge 1) =
\beta_\varepsilon$ and Lemma~\ref{lem:delay}(2),
$\cL(2) \le \Lambda_F T \varepsilon\, \beta_\varepsilon
+ \Lambda_D (1 + 1/\psi_W)$. By Lemma~\ref{lem:delay}(1),
$\cL(k) \ge \Lambda_D k \ge 3\Lambda_D$ for $k \ge 3$. Condition (A) says
exactly that the bound on $\cL(2)$ is below $\cL(1)$, and (B) that it is
below $3\Lambda_D$; so $\cL(2) < \cL(k)$ for every $k \ne 2$, $k \ge 1$.
For the band, use $(1 - \varepsilon)^T \le e^{-x_T}$ and
$\beta_\varepsilon \le (W - 1)\varepsilon \le x$: (A) follows from
$\Lambda_F (1 - e^{-x_T} - x_T x) > \Lambda_D / \psi_W$, and (B) from
$\Lambda_F x_T x < \Lambda_D (2 - 1/\psi_W)$. Dividing by
$\Lambda_F \varepsilon T > 0$, the second is $x < \omega (2 - 1/\psi_W)$, and
the first follows from $1 - e^{-x_T} \ge x_T - x_T^2/2$ and
$1 - x_T/2 - x > \omega/\psi_W$.
\end{proof}

In the regime $W\pi \gg k$ the delay is about $k/\pi$. When false inclusion
is rare, Lemma~\ref{lem:fi}(2) bounds $\PFI(k)$ by about
$\frac{T}{W} \frac{x^k}{(k-1)!}$, and counting only the $T/W$ disjoint
windows gives the lower estimate $\frac{T}{W} \frac{x^k}{k!}$, a factor $k$
smaller.

\subsection{Regret under an edit cap}
\label{app:regret}

\paragraph{Constants.}
Let $c_{\max} = \max_i c_i$,
$s = \min(\lfloor m/2 \rfloor, \lfloor B / c_{\max} \rfloor) \ge 1$ and
$J = \lceil N / s \rceil$. Let $\bar G = \sup_{x \ge 0} G(x)$ and
$B' = \min(B, \lambda \bar G / \kappa)$, read as $B$ when
$\bar G = \infty$; by Proposition~\ref{prop:sizefull}(ii), (iv) with
$\chi = \lambda$ and $\bar w \le 1$, some optimal file has $C(S^\star) \le B'$.
Let $\mathcal S' = \{S \subseteq [N] : C(S) \le B'\}$ and
$n_B = \max\{|S| : S \in \mathcal S'\}$. With $x_s = s a_{\max}$ and
$x_B = n_B a_{\max}$, put $g_s = g(x_s)$,
\[
L_{1/g} = \max_{x \in [0, x_s]} \frac{|g'(x)|}{g(x)^2},
\qquad
K_g = \sup_{x \ne y \in [0, x_B]} \frac{x\, |g(x) - g(y)|}{|x - y|}
\ \le\ x_B \max_{[0, x_B]} |g'| ,
\]
so that $1/g$ is $L_{1/g}$-Lipschitz on $[0, x_s]$. Finally
$\epsilon_0 = \sqrt{\ln(6NT) / (n_0 p_{\min})}$ and
\[
\varepsilon_{\mathrm{SD}}(n_0) = \epsilon_0 \Big[ \frac{2 + K_g}{g_s}
+ \frac{(1 + K_g)\, L_{1/g}\, s}{\lambda \gamma_s} \Big] .
\]

\paragraph{ETC-$m$.}
Fix an exploration length $n_0$, in sessions per batch. Partition $[N]$
into batches $\mathcal J_1, \dots, \mathcal J_J$ of at most $s$ items.
For $j = 1, \dots, J$, play $\mathcal J_j$ for $\lceil n_0 / L \rceil$
epochs, switching at the first session of an epoch ($\le 2s \le m$
edits); let $I_j$ be the set of sessions in which $\mathcal J_j$ is
played, $|I_j| = n \ge n_0$. For $i \in \mathcal J_j$ let $\hat q_i$ be
its empirical compliance over its relevant sessions in $I_j$ and
$\bar Y_i$ the mean over $I_j$ of $Y_{i,t} = \xi_{i,t} F_{i,t} X_{i,t}$.
Estimate the batch load and the item parameters by
\[
\hat A_j = G^{-1}\Big(\mathrm{clip}_{[0, G(x_s)]}
\Big(\frac{1}{\lambda} \sum_{l \in \mathcal J_j} \hat q_l\Big)\Big),
\qquad
\hat a_i = \mathrm{clip}_{[0, a_{\max}]}\Big(\frac{\hat q_i}{\lambda\, g(\hat A_j)}\Big),
\qquad
\hat\beta_i = \frac{\bar Y_i}{g(\hat A_j)},
\]
where $G^{-1}$ is the inverse of $G$ on $[0, x_s]$. For $S \in \mathcal S'$
let $\hat A(S) = \sum_{i \in S} \hat a_i$ and
\[
\hat R(S) = g(\hat A(S)) \sum_{i \in S} \hat\beta_i - \kappa C(S).
\]
Let
$\hat S$ maximize $\hat R$ over $\mathcal S'$. Move from
$\mathcal J_J$ to $\hat S$, removals first, $m$ edits per epoch at the
first session of each epoch, and play $\hat S$ thereafter.

The targets of the estimator are $\beta_i = w_i \theta_i = \lambda w_i a_i$
and $R(S) = g(A(S)) \sum_{i \in S} \beta_i - \kappa C(S)$, which
is~\eqref{eq:value} under~\eqref{eq:sd}.

\begin{lemmaA}[Batch errors]
\label{lem:batch}
Assumptions~\ref{ass:mag} and~\ref{ass:tied}. Let $\mathcal G_0$ be the
event that for every $i \in [N]$, with $j$ the batch of $i$: (G1)
$N_i = \sum_{t \in I_j} \xi_{i,t} \ge n p_i / 2$; (G2)
$|\hat q_i - q_i(\mathcal J_j)| \le \epsilon_0$; (G3)
$|\bar Y_i - \mathbb E Y_i| \le 2 \epsilon_0$, where
$\mathbb E Y_i = w_i q_i(\mathcal J_j) = \beta_i g(A(\mathcal J_j))$. If
$n_0 \ge 8 \ln(6NT) / p_{\min}$, then $\Pr(\mathcal G_0^c) \le 5 / (6T)$,
and on $\mathcal G_0$, for every $j$ and $i \in \mathcal J_j$,
\[
|\hat A_j - A(\mathcal J_j)| \le \frac{s \epsilon_0}{\lambda \gamma_s} =: \epsilon_A,
\qquad
|\hat a_i - a_i| \le \frac{\epsilon_0}{\lambda g_s} + \frac{L_{1/g}}{\lambda} \epsilon_A =: \epsilon_a,
\qquad
|\hat\beta_i - \beta_i| \le \frac{2 \epsilon_0}{g_s} + L_{1/g}\, \epsilon_A =: \epsilon_\beta .
\]
\end{lemmaA}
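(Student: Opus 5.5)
The plan has two parts: first bound $\Pr(\mathcal G_0^c)$ by concentration plus a union bound, then derive the three error bounds deterministically on $\mathcal G_0$ by propagating errors through $G^{-1}$ and $1/g$.

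\emph{Probability of the good event.} Fix $i$ in batch $j$ and recall $|I_j| = n \ge n_0$. The relevance draws $\xi_{i,t}$, $t \in I_j$, are i.i.d.\ $\Bern(p_i)$. The multiplicative Chernoff bound gives $\Pr(N_i < n p_i/2) \le e^{-n p_i/8} \le e^{-n_0 p_{\min}/8} \le 1/(6NT)$, using $n_0 \ge 8\ln(6NT)/p_{\min}$. This handles (G1).

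For (G2), condition on the relevance process. The compliance draws at the $N_i$ relevant sessions are i.i.d.\ $\Bern(q_i(\mathcal J_j))$, exactly as in the proof of Proposition~\ref{prop:blind}. Hoeffding gives $\Pr(|\hat q_i - q_i| > \epsilon) \le 2e^{-2N_i\epsilon^2}$. On (G1), $2N_i\epsilon_0^2 \ge n p_i \ln(6NT)/(n_0 p_{\min}) \ge \ln(6NT)$, so the failure probability is at most $2/(6NT)$.

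For (G3), the variables $Y_{i,t} \in [-1,1]$ are i.i.d.\ over $I_j$ with mean $w_i q_i(\mathcal J_j)$. Hoeffding at deviation $2\epsilon_0$ gives $2e^{-2n\epsilon_0^2} \le 2e^{-2\ln(6NT)} \le 2/(6NT)$, using $p_{\min} \le 1$. Summing the three pieces, $1+2+2 = 5$, and a union bound over the $N$ items gives $\Pr(\mathcal G_0^c) \le 5/(6T)$. The only care needed is to do the (G2) step conditionally on (G1) and then remove the conditioning by a union bound.

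\emph{Load error.} By tied loads, $\sum_{l \in \mathcal J_j} q_l(\mathcal J_j) = \lambda G(A(\mathcal J_j))$, and $A(\mathcal J_j) \le s a_{\max} = x_s$, so the true value $G(A(\mathcal J_j))$ lies in $[0, G(x_s)]$. By (G2), $\big|\tfrac1\lambda\sum_l \hat q_l - G(A)\big| \le s\epsilon_0/\lambda$. Clipping to $[0, G(x_s)]$ is a projection onto an interval containing the target, so it does not increase this distance. Since $G' \ge \gamma_s$ on $[0,x_s]$, the inverse $G^{-1}$ is $1/\gamma_s$-Lipschitz, which yields $\epsilon_A = s\epsilon_0/(\lambda\gamma_s)$.

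\emph{Item parameters.} Write $q_i = \lambda a_i g(A)$ and $\hat a_i^{\circ} = \hat q_i/(\lambda g(\hat A_j))$. Then
\[
|\hat a_i^{\circ} - a_i| \le \frac{|\hat q_i - q_i|}{\lambda g(\hat A_j)} + \frac{q_i}{\lambda}\,\Big|\frac{1}{g(\hat A_j)} - \frac{1}{g(A)}\Big| .
\]
The first term is at most $\epsilon_0/(\lambda g_s)$, because $\hat A_j \le x_s$ and $g$ is nonincreasing. The second is at most $L_{1/g}\epsilon_A/\lambda$, because $q_i \le 1$ and both loads lie in $[0,x_s]$, where $1/g$ is $L_{1/g}$-Lipschitz. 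Clipping to $[0,a_{\max}]$, which contains $a_i$, only helps, so $|\hat a_i - a_i| \le \epsilon_a$.

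The same decomposition works for $\hat\beta_i = \bar Y_i/g(\hat A_j)$ against $\beta_i = \mathbb E Y_i / g(A)$. Use (G3) for the first term, which gives $2\epsilon_0/g_s$, and $|\mathbb E Y_i| \le |w_i| q_i \le 1$ for the second, which gives $L_{1/g}\epsilon_A$. Together these give $\epsilon_\beta$.

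I expect the main obstacle to be the (G1)--(G2) interplay in the probability bound, since $N_i$ is random. The constants in the three bounds need to add up to exactly $5/(6T)$, which is tight. The deterministic propagation is routine once one checks that $\hat A_j$ and $A(\mathcal J_j)$ both lie in $[0,x_s]$.
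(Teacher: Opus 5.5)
Your proposal is correct and follows the paper's proof essentially step for step. It uses the same per-item budget of $1/(6NT)+1/(3NT)+1/(3NT)$ from Chernoff and Hoeffding, with the (G2) bound taken conditionally on the relevance draws, and the same deterministic propagation: the $1/\gamma_s$-Lipschitz $G^{-1}$, clipping as a projection onto an interval that contains the target, and the two-term decomposition controlled by $L_{1/g}$. You leave implicit two facts that the paper states. The exploration schedule is fixed in advance, so the draws over $I_j$ are i.i.d. And $X_{i,t}$ is independent of $(\xi_{i,t},F_{i,t})$, which is what gives $\mathbb E Y_i = w_i q_i(\mathcal J_j)$.
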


\begin{proof}
\emph{Probability.} The exploration schedule is fixed in advance, so for
$i \in \mathcal J_j$ the latent draws $(\xi_{i,t}, F_{i,t}, X_{i,t})_{t \in I_j}$
are i.i.d.\ across $t$ by Assumption~\ref{ass:obs}, and $X_{i,t}$ is
independent of $(\xi_{i,t}, F_{i,t})$ because it depends on the other
draws only through $\zeta_{i,t}$, which is independent of them; hence
$\mathbb E Y_i = p_i q_i(\mathcal J_j) r_i = w_i q_i(\mathcal J_j)$. By the
multiplicative Chernoff bound,
$\Pr(N_i < n p_i/2) \le e^{-n p_i/8} \le e^{-n_0 p_{\min}/8} \le 1/(6NT)$.
Conditional on the relevance draws, the compliance draws at the relevant
sessions are i.i.d.\ $\Bern(q_i(\mathcal J_j))$, so on (G1) Hoeffding's
inequality gives $\Pr(|\hat q_i - q_i(\mathcal J_j)| > \epsilon_0 \mid \xi)
\le 2 e^{-2 N_i \epsilon_0^2} \le 2 e^{-n p_i \epsilon_0^2}
\le 2 e^{-\ln(6NT)} = 1/(3NT)$. The $Y_{i,t}$ are i.i.d.\ in $[-1, 1]$, so
Hoeffding's inequality gives
$\Pr(|\bar Y_i - \mathbb E Y_i| > 2\epsilon_0) \le 2 e^{-2 n \epsilon_0^2}
\le 1/(3NT)$. A union bound over the $N$ items gives
$\Pr(\mathcal G_0^c) \le N (1/(6NT) + 2/(3NT)) = 5/(6T)$.

\emph{Load.} Write $A_j = A(\mathcal J_j) \le x_s$ and
$Q_j = Q(\mathcal J_j) = \lambda G(A_j)$, so $Q_j / \lambda \in [0, G(x_s)]$.
By (G2), $|\sum_{l} \hat q_l / \lambda - Q_j / \lambda| \le s \epsilon_0 / \lambda$,
and clipping to $[0, G(x_s)]$, which contains $Q_j/\lambda$, does not
increase the distance. On $[0, x_s]$, $G' \ge \gamma_s > 0$, so $G$ is
increasing there and $G^{-1}$ is $1/\gamma_s$-Lipschitz on
$[0, G(x_s)]$; hence $|\hat A_j - A_j| \le \epsilon_A$.

\emph{Items.} Since $\hat A_j, A_j \in [0, x_s]$, $g(\hat A_j) \ge g_s$ and
$|1/g(\hat A_j) - 1/g(A_j)| \le L_{1/g} \epsilon_A$. With
$q_i = q_i(\mathcal J_j) = \lambda a_i g(A_j) \le 1$,
\[
\frac{\hat q_i}{\lambda g(\hat A_j)} - a_i
= \frac{\hat q_i - q_i}{\lambda g(\hat A_j)}
+ \frac{q_i}{\lambda} \Big( \frac{1}{g(\hat A_j)} - \frac{1}{g(A_j)} \Big),
\]
of absolute value at most $\epsilon_a$; clipping to
$[0, a_{\max}] \ni a_i$ does not increase the error. Likewise, with
$|\mathbb E Y_i| \le 1$,
$\hat\beta_i - \beta_i = (\bar Y_i - \mathbb E Y_i)/g(\hat A_j)
+ \mathbb E Y_i\, (1/g(\hat A_j) - 1/g(A_j))$, of absolute value at most
$\epsilon_\beta$.
\end{proof}

\begin{lemmaA}[Plug-in stability]
\label{lem:plugin}
Assumptions~\ref{ass:mag} and~\ref{ass:tied}. On $\mathcal G_0$, every
$S \subseteq [N]$ with $|S| \le n_B$ satisfies
$|\hat R(S) - R(S)| \le |S|\, \varepsilon_{\mathrm{SD}}(n_0)$.
\end{lemmaA}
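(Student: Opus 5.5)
I would prove the bound by splitting the plug-in error into a gross-value part and a dilution part, and then apply the per-item bounds of Lemma~\ref{lem:batch} term by term. The setup cost $\kappa C(S)$ is known exactly and cancels, so
\[
\hat R(S) - R(S) = g(\hat A(S)) \sum_{i \in S} \hat\beta_i - g(A(S)) \sum_{i \in S} \beta_i .
\]
I would add and subtract $g(\hat A(S)) \sum_{i \in S} \beta_i$. This writes the difference as
\[
g(\hat A(S)) \sum_{i \in S} (\hat\beta_i - \beta_i) \;+\; \big[g(\hat A(S)) - g(A(S))\big] \sum_{i \in S} \beta_i .
\]
Since $0 < g \le 1$ and, on $\mathcal G_0$, $|\hat\beta_i - \beta_i| \le \epsilon_\beta$, the first term is at most $|S|\, \epsilon_\beta$ in absolute value.

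For the second term I would bound the coefficients. From $\beta_i = \lambda w_i a_i$ and $|w_i| \le 1$ (Assumption~\ref{ass:mag}) we get $|\sum_{i\in S} \beta_i| \le \lambda A(S)$. The term is then at most $\lambda A(S)\, |g(\hat A(S)) - g(A(S))|$. This is exactly the form controlled by $K_g$, provided both loads lie in $[0, x_B]$:
\[
A(S)\, |g(\hat A(S)) - g(A(S))| \le K_g\, |\hat A(S) - A(S)| \le K_g\, |S|\, \epsilon_a .
\]
The membership check is routine. We have $A(S) \le |S| a_{\max} \le n_B a_{\max} = x_B$, and since $\hat a_i \in [0, a_{\max}]$ by clipping, also $\hat A(S) \le x_B$. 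If the two loads coincide the term vanishes. So the second term is at most $\lambda K_g |S| \epsilon_a$.

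It then remains to show $\epsilon_\beta + \lambda K_g \epsilon_a \le \varepsilon_{\mathrm{SD}}(n_0)$. Substituting the definitions from Lemma~\ref{lem:batch} gives
\[
\epsilon_\beta + \lambda K_g\epsilon_a = \frac{(2+K_g)\epsilon_0}{g_s} + (1+K_g) L_{1/g}\, \epsilon_A ,
\]
and $\epsilon_A = s\epsilon_0/(\lambda\gamma_s)$ turns the second summand into $\epsilon_0 (1+K_g) L_{1/g} s/(\lambda\gamma_s)$. This matches $\varepsilon_{\mathrm{SD}}(n_0)$ exactly, so the bound should close as an identity.

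The main obstacle I expect is the range bookkeeping in the $K_g$ step. $K_g$ is defined as a supremum over $x \ne y$ with the factor $x$ evaluated at the true load, so I must put $x = A(S)$, not $\hat A(S)$. I would also confirm that the sets with $|S| \le n_B$ used here are exactly those for which $A(S) \le x_B$ holds.
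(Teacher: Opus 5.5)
Your proposal is correct and follows the paper's proof essentially step for step. It uses the same add-and-subtract decomposition, the bound $|S|\,\epsilon_\beta$ on the first term via $g \le 1$, the bound $\lambda A(S)\,|g(\hat A(S)) - g(A(S))| \le \lambda K_g |S|\,\epsilon_a$ on the second term with $x = A(S)$ in the definition of $K_g$, and the identity $\epsilon_\beta + \lambda K_g \epsilon_a = \varepsilon_{\mathrm{SD}}(n_0)$. Your closing remark asks for more than is needed: only the implication $|S| \le n_B \Rightarrow A(S), \hat A(S) \in [0, x_B]$ is used, and you already verified it, whereas the equivalence you mention is false in general.
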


\begin{proof}
Both $A(S) = \sum_S a_i$ and $\hat A(S) = \sum_S \hat a_i$ lie in
$[0, x_B]$, and $|\hat A(S) - A(S)| \le |S| \epsilon_a$ by
Lemma~\ref{lem:batch}. Decompose
\[
\hat R(S) - R(S)
= g(\hat A(S)) \sum_{i \in S} (\hat\beta_i - \beta_i)
+ \big( g(\hat A(S)) - g(A(S)) \big) \sum_{i \in S} \beta_i .
\]
The first term is at most $|S| \epsilon_\beta$ in absolute value, as
$g \le 1$. For the second, $|\sum_S \beta_i| \le \lambda \sum_S |w_i| a_i
\le \lambda A(S)$, and by the definition of $K_g$,
$A(S)\, |g(\hat A(S)) - g(A(S))| \le K_g |\hat A(S) - A(S)|
\le K_g |S| \epsilon_a$. So
$|\hat R(S) - R(S)| \le |S| (\epsilon_\beta + \lambda K_g \epsilon_a)$, and
$\epsilon_\beta + \lambda K_g \epsilon_a = \varepsilon_{\mathrm{SD}}(n_0)$ by
the definitions of $\epsilon_a$, $\epsilon_\beta$ and $\epsilon_A$.
\end{proof}

The factor $A(S)$ in front of the difference of $g$ is what keeps the
commit error linear in $n_B$: a per-item bound on
$|\hat q_i(S) - q_i(S)|$ would carry $n_B$ once more through
$\hat A(S)$.

\begin{theoremA}[ETC-$m$ under an edit cap, full form]
\label{thm:etcfull}
Assumptions~\ref{ass:obs}, \ref{ass:mag} and~\ref{ass:tied}; $m \ge 2$,
$B \ge c_{\max}$, $c_i > 0$, $\kappa > 0$, $N, L, T \ge 1$. If
$n_0 \ge 8 \ln(6NT) / p_{\min}$, then
\[
\Reg_T \le \Delta_{\max} L \Big( J + \Big\lceil \frac{s + n_B}{m} \Big\rceil \Big)
+ \Lambda_T(n_0),
\qquad
\Lambda_T(n_0) = \Delta_{\max} J n_0 + 2 n_B T\, \varepsilon_{\mathrm{SD}}(n_0) + \Delta_{\max} .
\]
\end{theoremA}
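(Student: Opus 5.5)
The plan is to split the horizon into three phases and bound the regret of each separately, working on the good event $\mathcal G_0$ of Lemma~\ref{lem:batch} and paying for its complement. Since $n_0 \ge 8\ln(6NT)/p_{\min}$, Lemma~\ref{lem:batch} gives $\Pr(\mathcal G_0^c) \le 5/(6T)$. On $\mathcal G_0^c$ the per-session loss is at most $\Delta_{\max}$ over $T$ sessions, so this event contributes at most $\Delta_{\max}$ to expected regret. That is the final $+\Delta_{\max}$ in $\Lambda_T$. Every played file is feasible: exploration batches have $C(\mathcal J_j) \le s c_{\max} \le B$, $\hat S \in \mathcal S'$, and the transition removes first, so intermediate files are subsets of a feasible file or of $\hat S$. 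Hence the per-session loss is always at most $\Delta_{\max}$.

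\emph{Exploration phase.} Batch $j$ is played for $\lceil n_0/L\rceil$ epochs, i.e.\ at most $L\lceil n_0/L\rceil \le n_0 + L$ sessions. The switch into $\mathcal J_j$ uses at most $2s \le m$ edits and is done at the start of an epoch. Summing over $J$ batches gives at most $J(n_0+L)$ exploration sessions, each costing at most $\Delta_{\max}$. This yields $\Delta_{\max} J n_0 + \Delta_{\max} L J$.

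\emph{Transition phase.} Moving from $\mathcal J_J$ to $\hat S$ takes at most $|\mathcal J_J| + |\hat S| \le s + n_B$ edits. At $m$ per epoch this is $\lceil (s+n_B)/m\rceil$ epochs, i.e.\ at most $L\lceil (s+n_B)/m\rceil$ sessions at loss $\le \Delta_{\max}$. (If $\hat S$ is reached mid-count, the bound only overcounts.)

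\emph{Commit phase.} On $\mathcal G_0$ we need $R(S^\star) - R(\hat S) \le 2 n_B \varepsilon_{\mathrm{SD}}(n_0)$, which over at most $T$ sessions gives the term $2n_B T\varepsilon_{\mathrm{SD}}$. By the constants paragraph, Proposition~\ref{prop:sizefull}(ii),(iv) with $\chi=\lambda$ and $\bar w \le 1$ lets us take an optimal file $S^\star \in \mathcal S'$. Then $|S^\star|, |\hat S| \le n_B$, and Lemma~\ref{lem:plugin} applies to both. Using $\hat R(\hat S) \ge \hat R(S^\star)$, the standard sandwich gives
\[
R(S^\star) - R(\hat S) \le [R(S^\star)-\hat R(S^\star)] + [\hat R(\hat S) - R(\hat S)] \le 2n_B\varepsilon_{\mathrm{SD}}(n_0).
\]
We also need $R(S^\star)$ to be the benchmark over all $C(S)\le B$. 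This holds because the uncapped optimum already has $C \le \lambda\bar G/\kappa$, so restricting to $\mathcal S'$ loses nothing.

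Adding the three phases and the failure term gives exactly the displayed bound, and Theorem~\ref{thm:etc} is its restatement. The main obstacle is the bookkeeping around the benchmark: showing that some optimum lies in $\mathcal S'$ (hence has size $\le n_B$, so Lemma~\ref{lem:plugin} applies), and checking that conditioning on $\mathcal G_0$ interacts correctly with expectations. For the latter, I would bound regret as $\mathbb E[\cdot\,\mathbf 1_{\mathcal G_0}] + T\Delta_{\max}\Pr(\mathcal G_0^c)$, noting that the exploration schedule is fixed in advance, so $\mathcal G_0$ does not depend on the policy's adaptive choices.
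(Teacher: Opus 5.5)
Your proposal is correct and follows the paper's proof essentially step for step. It uses the same feasibility check, the same count of at most $J(n_0+L)$ exploration sessions and $L\lceil (s+n_B)/m\rceil$ transition sessions at loss $\Delta_{\max}$ each, the same sandwich through Lemma~\ref{lem:plugin} on $\mathcal G_0$ with an optimum in $\mathcal S'$, and the same charge $T\Delta_{\max}\cdot 5/(6T)\le\Delta_{\max}$ for $\mathcal G_0^c$.

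The only imprecision is your closing justification, which appeals to the uncapped optimum. What is needed is that the capped benchmark itself satisfies $R(S^\star)\ge 0$, hence $\kappa C(S^\star)\le\lambda\bar G$ and $S^\star\in\mathcal S'$. Your earlier appeal to Proposition~\ref{prop:sizefull}(ii),(iv) already gives exactly this.
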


\begin{proof}
\emph{Feasibility.} A batch costs at most $s c_{\max} \le B$, the sets
played during the move are subsets of the last batch or of $\hat S$, and
$C(\hat S) \le B' \le B$; so every played file is feasible and every
session's expected regret is at most $\Delta_{\max}$. A batch switch uses
at most $2s \le m$ edits and each epoch of the move at most $m$.

\emph{Step 1 (decomposition).} Exploration lasts
$J L \lceil n_0/L \rceil < J(n_0 + L)$ sessions and the move
$L \lceil (s + n_B)/m \rceil$ sessions, in all at most
$\Delta_{\max} [L(J + \lceil (s + n_B)/m \rceil) + J n_0]$ regret. Every
later session costs $R(S^\star) - R(\hat S)$.

\emph{Step 2 (commit error).} Some optimal file lies in $\mathcal S'$ and
every $S \in \mathcal S'$ has $|S| \le n_B$. On $\mathcal G_0$,
Lemma~\ref{lem:plugin} gives
$|\hat R(S) - R(S)| \le n_B \varepsilon_{\mathrm{SD}}(n_0)$ for all
$S \in \mathcal S'$, so
$R(S^\star) - R(\hat S) \le [R(S^\star) - \hat R(S^\star)]
+ [\hat R(S^\star) - \hat R(\hat S)] + [\hat R(\hat S) - R(\hat S)]
\le 2 n_B \varepsilon_{\mathrm{SD}}(n_0)$, the middle bracket being
$\le 0$ by the choice of $\hat S$.

\emph{Step 3 (expectation).} The commit phase lasts at most $T$ sessions,
so its regret is at most $2 n_B T \varepsilon_{\mathrm{SD}}(n_0)$ on
$\mathcal G_0$ and at most $T \Delta_{\max}$ off $\mathcal G_0$, whose
probability is at most $5/(6T) \le 1/T$ by Lemma~\ref{lem:batch}; in
expectation this adds at most $\Delta_{\max}$. Summing with Step~1 gives
the bound.
\end{proof}

\begin{corollaryA}[Tuned exploration]
\label{cor:tuned}
In the setting of Theorem~\ref{thm:etcfull}, let
$n_{\min} = \lceil 8 \ln(6NT) / p_{\min} \rceil$,
$C_1 = 2 n_B \big[ (2 + K_g)/g_s + (1 + K_g) L_{1/g}\, s / (\lambda \gamma_s) \big]
\sqrt{\ln(6NT) / p_{\min}}$, so that
$2 n_B \varepsilon_{\mathrm{SD}}(n_0) = C_1 n_0^{-1/2}$, and
$n_0 = \max\{n_{\min}, \lceil (T C_1 / (2 J \Delta_{\max}))^{2/3} \rceil\}$,
reading $y/0 = 0$. Then
\[
\Lambda_T(n_0) \le 1.9\, (J \Delta_{\max})^{1/3} (C_1 T)^{2/3}
+ J \Delta_{\max} (n_{\min} + 1) + \Delta_{\max} .
\]
\end{corollaryA}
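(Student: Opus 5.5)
The plan is to optimize the explicit expression $\Lambda_T(n_0) = \Delta_{\max} J n_0 + C_1 n_0^{-1/2} + \Delta_{\max}$ from Theorem~\ref{thm:etcfull}. The identity $2 n_B \varepsilon_{\mathrm{SD}}(n_0) = C_1 n_0^{-1/2}$ follows from the definitions, since $\epsilon_0 = \sqrt{\ln(6NT)/(n_0 p_{\min})}$ factors as $\sqrt{\ln(6NT)/p_{\min}}\, n_0^{-1/2}$. We then evaluate at the stated $n_0$. Write $n^\ast = (T C_1/(2J\Delta_{\max}))^{2/3}$, which is the stationary point of the continuous function $f(n) = J\Delta_{\max} n + T C_1 n^{-1/2}$: we have $f'(n) = J\Delta_{\max} - \tfrac12 T C_1 n^{-3/2}$.

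We split into two cases according to which term attains the maximum in $n_0$.

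\emph{Case $n_0 = \lceil n^\ast\rceil \ge n_{\min}$.} Since $n_0 \ge n^\ast$, we have $T C_1 n_0^{-1/2} \le T C_1 (n^\ast)^{-1/2}$. Also $J\Delta_{\max} n_0 \le J\Delta_{\max}(n^\ast + 1)$. At $n^\ast$ the two terms of $f$ evaluate, using $J\Delta_{\max} n^{\ast 3/2} = T C_1/2$, to
\[
J\Delta_{\max} n^\ast = (J\Delta_{\max})^{1/3}(TC_1/2)^{2/3},
\qquad
T C_1 (n^\ast)^{-1/2} = 2 (J\Delta_{\max})^{1/3}(TC_1/2)^{2/3}.
\]
Their sum is $3\cdot 2^{-2/3}(J\Delta_{\max})^{1/3}(C_1T)^{2/3} \approx 1.89\,(J\Delta_{\max})^{1/3}(C_1T)^{2/3} \le 1.9\,(\cdot)$. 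The leftover $J\Delta_{\max}$ from the ceiling is absorbed in $J\Delta_{\max}(n_{\min}+1)$, since $n_{\min} \ge 0$.

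\emph{Case $n_0 = n_{\min} > \lceil n^\ast \rceil$.} Here the exploration term is exactly $J\Delta_{\max} n_{\min}$. For the learning term, $n_{\min} \ge n^\ast$ gives $T C_1 n_{\min}^{-1/2} \le T C_1 (n^\ast)^{-1/2} = 2^{1/3}(J\Delta_{\max})^{1/3}(C_1T)^{2/3} \le 1.9\,(\cdot)$.

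In both cases we add the final $\Delta_{\max}$ and obtain the claimed bound. The condition $n_0 \ge 8\ln(6NT)/p_{\min}$ of Theorem~\ref{thm:etcfull} holds because $n_0 \ge n_{\min}$.

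The degenerate cases need separate checks. If $\Delta_{\max} = 0$, then with $y/0 = 0$ we get $n_0 = n_{\min}$. Every feasible file is then optimal, so the regret is $0$ directly and the bound holds trivially; we use that $\Lambda_T$ only enters through a regret bound and not through its formal value. If $C_1 = 0$, then $n^\ast = 0$ and $n_0 = n_{\min}$, and the bound is immediate. The main point to verify carefully is the numerical constant $3\cdot 2^{-2/3} \approx 1.8899 < 1.9$, together with the ceiling bookkeeping; both are routine.
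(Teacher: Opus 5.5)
Your proposal is correct and follows the paper's proof essentially line for line. It uses the same stationary point $n^\ast$, the same two cases according to which term attains the maximum in $n_0$, the same constants $3\cdot 2^{-2/3}$ and $2^{1/3}$, and the same dismissal of $\Delta_{\max}=0$ via zero regret (you also handle $C_1=0$, which the paper leaves implicit); only fix the typo in your first display, where the middle term of $\Lambda_T(n_0)$ should be $T C_1 n_0^{-1/2}$, as you use afterwards.
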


\begin{proof}
If $\Delta_{\max} = 0$ then $\Reg_T \le T \Delta_{\max} = 0$ and there is
nothing to prove. Otherwise the first two terms of $\Lambda_T(n_0)$ are
$\mathfrak a n_0 + \mathfrak b n_0^{-1/2}$ with $\mathfrak a = J \Delta_{\max}$
and $\mathfrak b = C_1 T$; put $n^* = (\mathfrak b/(2\mathfrak a))^{2/3}$, so
$\mathfrak a n^* = 2^{-2/3} \mathfrak a^{1/3} \mathfrak b^{2/3}$ and
$\mathfrak b (n^*)^{-1/2} = 2^{1/3} \mathfrak a^{1/3} \mathfrak b^{2/3}$.
If $n_0 = \lceil n^* \rceil \ge n_{\min}$, then
$\mathfrak a n_0 + \mathfrak b n_0^{-1/2} \le \mathfrak a (n^* + 1)
+ \mathfrak b (n^*)^{-1/2} = 3 \cdot 2^{-2/3} \mathfrak a^{1/3} \mathfrak b^{2/3}
+ \mathfrak a \le 1.9\, \mathfrak a^{1/3}\mathfrak b^{2/3} + \mathfrak a$.
If $n_0 = n_{\min} > n^*$, then $\mathfrak a n_0 = \mathfrak a n_{\min}$ and
$\mathfrak b n_0^{-1/2} < \mathfrak b (n^*)^{-1/2} \le 1.9\, \mathfrak a^{1/3} \mathfrak b^{2/3}$.
In both cases the two terms are at most
$1.9\, \mathfrak a^{1/3} \mathfrak b^{2/3} + \mathfrak a (n_{\min} + 1)$.
\end{proof}

The tuning uses $\Delta_{\max}$, which the curator does not know. Tuning
with any known $\bar\Delta \ge \Delta_{\max}$ in its place, for instance
$\bar\Delta = n_B + \kappa B$ (as $|w_i| \le 1$ and $q_i \le 1$), gives
the same bound with $\bar\Delta$ in place of $\Delta_{\max}$ in the first
two terms, by the same proof with $\mathfrak a \le J \bar\Delta$.

\begin{proof}[Proof of Theorem~\ref{thm:etc}]
The hypotheses of Theorem~\ref{thm:etc} are those of
Theorem~\ref{thm:etcfull}, with $n_B$ the largest size of a file in
$\mathcal S'$, so the bound is Theorem~\ref{thm:etcfull}. Under the
tuning of Corollary~\ref{cor:tuned}, $C_1$ is
$O(\sqrt{\ln(NT)})$ and $n_{\min}$ is $O(\ln(NT))$ with the other
parameters fixed, so
$\Lambda_T = O(T^{2/3} (\ln NT)^{1/3})$. For the edit-cap term, if
$m \le 2B/c_{\max}$ then $s = \lfloor m/2 \rfloor \ge m/3$, so
$J \le 3N/m + 1$, and $n_B \le N$ gives
$\lceil (s + n_B)/m \rceil \le N/m + 2$; hence the term is at most
$\Delta_{\max} L (4N/m + 3)$.
\end{proof}

\begin{proof}[Proof of Theorem~\ref{thm:lower}]
\emph{Instance.} Fix $c \in (0, B]$, $p \in (0, 1]$, $\theta > 0$ and
$a > 0$ with $\theta g(a) < 1$ and $\kappa c < p \theta g(a)$; such
values exist since $c$ may be taken small. All items share $c$, $p$,
$\theta_i = \theta$ and $a_i = a$; harm $h_i = 0$; gap detection
$\delta$; under instance $j$ the gains are $r_j = 1$ and $r_i = 0$ for
$i \ne j$, observed without noise ($X_{i,t} \equiv r_i$). Then
$R(S) = p\, \theta\, g(|S| a)\, \mathbf 1\{j \in S\} - \kappa c |S|$. If
$j \in S$ this is at most $p \theta g(a) - \kappa c = R(\{j\})$, as $g$
is nonincreasing, and if $j \notin S$ it is $-\kappa c |S| \le 0 < R(\{j\})$.
So $S^\star_j = \{j\}$, and $R(S^\star_j) - R(S_t) \ge \Delta$ whenever
$j \notin S_t$; also $R(S) \ge -\kappa C(S) \ge -\kappa B$ for feasible
$S$, so $\Delta_{\max} \le \Delta + \kappa B$.

\emph{Reduction.} Let instance $0$ have $r_i = 0$ for all $i$. For a file
$S \not\ni j$, the observation law of a session with file $S$ is the same
under instance $j$ and instance $0$: in-file items have the same
compliance $\theta g(|S| a)$ and the same gain $0$ under both, and $j$
contributes only $G_{j,t} \sim \Bern(\delta p)$. This holds for every
joint law of the flags within a session, since the flag laws do not
depend on the gains. Let $\sigma_j = \min\{t : j \in S_t\}$ be the first
session that tries $j$. The law of a history
$(S_1, O_1, \dots, S_t, O_t)$ is the product of the policy kernels
$\mathsf P(S_s \mid \text{past})$ and the session laws
$\mathsf K(O_s \mid S_s)$; on the event $\{\sigma_j > t\}$ every factor
agrees under instances $j$ and $0$, so
$\Pr_j(\sigma_j > t) = \Pr_0(\sigma_j > t)$.

\emph{Counting.} From $S_0 = \emptyset$ with at most $m$ additions per
epoch, $|\bigcup_{s \le t} S_s| \le \sum_{s \le t} |S_s \setminus S_{s-1}|
\le m\, e(t)$ almost surely, so
$\sum_j \Pr_0(\sigma_j \le t) \le \mathbb E_0 |\bigcup_{s \le t} S_s|
\le m\, e(t)$, and
$\sum_j \Pr_j(j \notin S_t) \ge \sum_j \Pr_0(\sigma_j > t) \ge N - m\, e(t)$.

\emph{Summing.} $\Reg_T(j) \ge \Delta \sum_{t \le T} \Pr_j(j \notin S_t)$,
so, using $T \ge L n_m$ and $N - m e \ge 0$ for $e \le n_m$,
\[
\sum_j \Reg_T(j) \ge \Delta \sum_{e = 1}^{n_m} L (N - m e)
= \Delta L \Big(N n_m - \frac{m n_m (n_m + 1)}{2}\Big)
\ge \Delta L\, \frac{N (n_m - 1)}{2},
\]
because $m n_m \le N$. The maximum over $j$ is at least the average,
$(\Delta L/2)(n_m - 1)$. For $N \ge 4m$, $n_m - 1 > N/m - 2 \ge N/(2m)$.
\end{proof}

\section{EXPERIMENTAL DETAILS}
\label{app:experiments}

The plan of each experiment is paraphrased below, and every deviation
from the plan is listed.
Model calls go to hosted Claude models through their command-line
client; the context file is the whole system prompt, with no tools and
prompt caching off. Intervals are 95\% percentile intervals of a
task-cluster bootstrap with 2,000 replicates unless stated.

\subsection{Size sweep}
\label{app:smain}

\paragraph{Purpose.}
Proposition~\ref{prop:size} bounds the optimal file size independently of
the pool, and Corollary~\ref{cor:accretion} says the loss from loading
the whole pool grows with the pool. The size sweep grows a pool of real
context-file instructions from 200 to 3,200 items and measures value
against file size at each pool size, for two models on the same files.

\paragraph{Data.}
Candidates are root context-file list items from a public corpus of
coding-agent configuration files \citep{galster2026configs} (CC~BY~4.0
compilation; file contents keep their repository licenses, so only
permissive repositories are used and item text stays out of the released
data). Tasks are ManyIFEval prompts \citep{harada2025instructions}
(CC~BY~4.0); verifiers are the IFEval \citep{zhou2023ifeval} and IFBench
\citep{pyatkin2025ifbench} checkers (Apache-2.0). The subjects are Claude
Haiku~4.5 and Claude Sonnet~5.5.

\paragraph{Design.}
The pool holds 3,200 items drawn from the tier whose verbatim text cannot
apply to a reply, in one seeded order: one item per repository for the
first 1,901 items and a second item per repository after that. $N \in
\{200, 800, 3200\}$ takes prefixes. Every fifth item stays verbatim as a
zero-value candidate (640 items). The other 2,560 get a verifier on the
item's rarest word across the corpus (document frequency at most 1\%):
two thirds use IFEval \texttt{keywords:existence}, one third
\texttt{keywords:frequency} with a count of 2 or 3. Every fifth
verifiable item is genre-free; the rest carry one of six genre triggers
(``When the reply is an email, \dots''). Claude Sonnet~5.5 rewrites each
verifiable item's body, keeping the original wording; the trigger and
the verifier arguments are fixed by code, and a body that lacks the
keyword falls back to a template (3 items). Tasks are 60 ManyIFEval
prompts that name no genre; the genre is assigned round-robin (ten each)
and stated in the user turn with a 900--1,200 word range.

The file of size $b$ holds the first $b$ verifiable items of the $N$
pool in a second seeded order, zero-value items last. Files are
therefore nested in $b$, the file $b = N$ is the whole pool, and
$n^\star(N)$ is the optimum over these nested files. The grid is $N \in
\{200, 800, 3200\}$ and $b \in \{5, 10, 20, 40, 80, 200, 400, 800, 1600,
N\}$ with $b \le N$, 60 tasks, 2 replicates that differ in item order:
24 cells and 2,880 runs per model. Both models see the same files. Every
run is scored on every verifiable item of its pool that is relevant to
its genre, loaded or not.

A keyword also appears unprompted. Its chance rate is the share of
relevant unloaded verifiers that pass, per model, pool and task, over
the files of at most 40 items. It is $0.045$, $0.024$ and $0.016$ at $N
= 200$, $800$ and $3200$ for Haiku and within $0.003$ of that for
Sonnet. Per cell, $\Rz(S_b)$ is the mean number of followed relevant
loaded items minus the chance rate times their number. A singleton
screen loads an item alone on 6 tasks; $q_i(\{i\})$ is the mean over
them. It covers the 452 items loaded in some file of size at most $200$,
once per model (2,712 runs each). The load $\ell(S_b)$ is the sum of the
subject's singleton rates over the relevant loaded items; an unscreened
item takes the mean rate of the screened items of its kind and verifier
family, which is unbiased because files are seeded prefixes. The mean
singleton rate is $0.58$ for Haiku and $0.80$ for Sonnet. $C(S_b)$ is
the file's tokens. The marginal relevant compliance between consecutive
sizes is $\mathrm{MC}(b) = [\Rz(S_{b'}) - \Rz(S_b)] / [\ell(S_{b'}) - \ell(S_b)]$. The
value at price $\rho_{\mathrm{exp}}$ is $R_{\rho_{\mathrm{exp}}}(S_b) = \Rz(S_b) - (\rho_{\mathrm{exp}} /
c_{\mathrm{med}}) C(S_b)$ with $c_{\mathrm{med}} = 39$ tokens, the median
verifiable item, so $\rho_{\mathrm{exp}}$ is the normalized cost per
median-length item; it is distinct from the theoretical break-even
compliance $\rho$. From one task-cluster bootstrap we take
$n^\star(N) = \arg\max_b R_{\rho_{\mathrm{exp}}}(S_b)$ and the accretion loss
$R_{\rho_{\mathrm{exp}}}(S_{n^\star}) - R_{\rho_{\mathrm{exp}}}(S_N)$.

\paragraph{Details.}
At $b = 3200$ the chance hits are about 14 of the 853 relevant loaded
items, most of the raw count of followed items (14.3 for Haiku, 18.1 for
Sonnet); Tables~\ref{tab:smaincellshaiku}
and~\ref{tab:smaincellssonnet} give the raw counts next to the corrected
ones. The relevant load at $b = N$ is about $0.27 N$ items (53, 213,
853). The singleton rate of a genre-free item is pooled over genres.
ManyIFEval has too few prompts of each genre, so the genre is assigned
and stated in the user turn. The task's own instruction list is not
used. The median item costs 39 tokens after rewriting, against the
corpus median of 15. No run was truncated or empty.

\paragraph{Results.}
Tables~\ref{tab:smaincellshaiku} and~\ref{tab:smaincellssonnet} give the
cells and Table~\ref{tab:smainopt} the measured optima. At the top
step, $b = 1600 \to 3200$, $\mathrm{MC}$ is $-0.007$ $[-0.013, -0.002]$ for Haiku
and $-0.037$ $[-0.052, -0.023]$ for Sonnet. At $N =
3200$ gross value peaks at $b = 200$, at $5.0$ followed instructions for
Haiku and $23.8$ for Sonnet, and falls to $0.6$ $[0.0, 1.3]$ and $4.0$
$[0.5, 9.5]$ for the whole pool (Figure~\ref{fig:smain}). Without the
chance correction the top-step $\mathrm{MC}$ is $0.029$ $[0.022, 0.035]$ for Haiku
and $-0.013$ $[-0.028, 0.001]$ for Sonnet. Over the three pools the
optimum at $\rho_{\mathrm{exp}} = 0.1$ is $10$, $5$, $10$ for Haiku and $40$, $20$,
$40$ for Sonnet: it does not grow with the pool. From $N = 800$ to $N =
3200$ the accretion loss grows from $76.2$ to $316.7$ for Haiku and from
$75.9$ to $320.4$ for Sonnet, a factor $4.2$ for each. Sonnet follows about five times as many instructions as Haiku
before capacity binds, and its optimal file is four times as large.

The curves are not smooth, because unranked files differ in the
difficulty of their items. For Sonnet at $N = 800$ gross value is $28.4$
at $b = 400$ and $3.7$ for the whole pool, a step of $-0.46$ $[-0.56,
-0.36]$. Loaded items of another genre are followed at $0.02$--$0.05$.
Replies fall inside the word range in $76\%$ of runs for both models.
Figure~\ref{fig:smainvalue} shows the value curves at $\rho_{\mathrm{exp}} = 0.1$ and
Figure~\ref{fig:smainmarg} the marginal compliance against the load.
Figures~\ref{fig:smainrhohaiku} and~\ref{fig:smainrhosonnet} show the
value curves at the other prices: the optimum moves left as $\rho_{\mathrm{exp}}$
rises, and it sits at $b = N$ only for Sonnet at $N = 200$ and $\rho_{\mathrm{exp}} =
0.05$.

\paragraph{Monotone dilution, item by item.}
Assumption~\ref{ass:mono} is a statement about each instruction: its
compliance does not rise when the file grows. Files are nested in $b$
within a pool, so every relevant item of $S_b$ is in every larger file,
and the per-item labels test the assumption directly. For each pool we
follow the relevant items of the $b = 40$ file, 12 to 16 per task,
through the larger files (Figures~\ref{fig:smainmonohaiku}
and~\ref{fig:smainmonosonnet}). At $N = 3200$ their compliance falls
from $0.27$ to $0.05$ for Haiku and from $0.91$ to $0.05$ for Sonnet. At
$N = 800$ it falls from $0.29$ to $0.04$ and from $0.47$ to $0.07$, and
at $N = 200$ from $0.25$ to $0.11$ and from $0.55$ to $0.31$. These are
raw rates, with the chance rate of $0.02$--$0.05$ as their floor.

Order matters inside large files, which the set-level model does not
represent: at $b = 800$ of the largest pool the first fifth of the file
is followed about twice as often as the rest by both models ($0.05$
against $0.02$ for Haiku, $0.14$ against $0.06$ for Sonnet).

Against their singleton rates, the screened items are followed less
often in a file than alone in 23 of the 24 cells for Haiku and 22 for
Sonnet, by $0.07$ to $0.66$. In the other three cells, all files of at
most 40 items, they are followed as often or more often, by at most
$0.14$.

\begin{figure}[h]
\centering
% Numbers: data/v2/smain-sonnet/processed/cells.csv and optimum.csv (rho = 0.1).
\includegraphics[width=\textwidth]{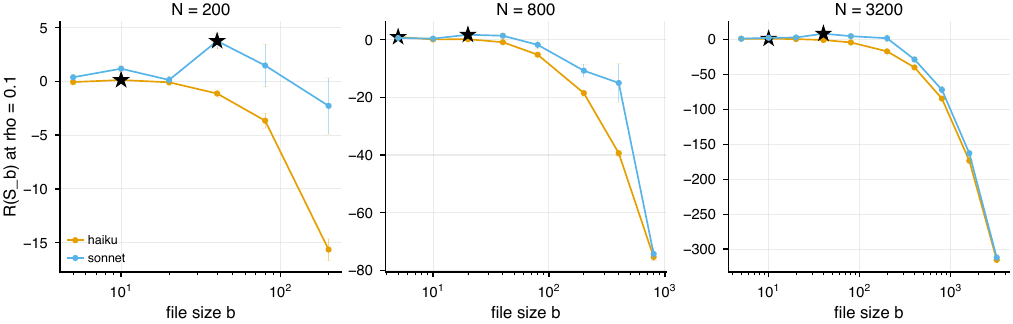}
\caption{Size sweep: value $R_{\rho_{\mathrm{exp}}}(S_b)$ at price $\rho_{\mathrm{exp}} = 0.1$ against
  file size $b$ for both models, one panel per pool size $N$. Points are
  means over the 60 tasks; bars are $\pm 1.96$ Newey--West standard
  errors over the task sequence. The stars mark the measured optima
  $n^\star(N)$.}
\label{fig:smainvalue}
\end{figure}

\begin{figure}[h]
\centering
\includegraphics{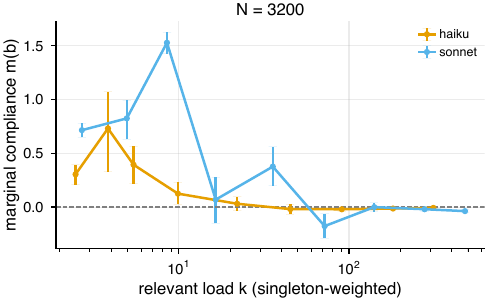}
\caption{Size sweep: marginal relevant compliance $\mathrm{MC}(b)$ against relevant
  load at $N = 3200$ for both models, with 95\% task-cluster bootstrap
  bars.}
\label{fig:smainmarg}
\end{figure}

\begin{table}[h]
\caption{Size-sweep cells for Claude Haiku~4.5: gross value $\Rz(S_b)$
  (followed relevant instructions per task, net of chance), the raw
  count, the singleton-weighted relevant load $\ell(S_b)$, and the marginal
  compliance $\mathrm{MC}$ of the step ending at $b$ with its 95\% task-cluster
  bootstrap interval.}
\label{tab:smaincellshaiku}
\centering
\small
% Numbers: data/v2/smain-sonnet/processed/cells.csv and steps.csv (model = haiku)
\begin{tabular}{lrrrrl}
$N$ & $b$ & $\Rz(S_b)$ & raw & $\ell(S_b)$ & $\mathrm{MC}$ (step to $b$) \\
\hline
200 & 5 & 0.53 & 0.60 & 1.33 &  \\
 & 10 & 1.19 & 1.34 & 1.58 & 2.67 [1.91, 3.88] \\
 & 20 & 2.05 & 2.35 & 3.39 & 0.48 [0.37, 0.57] \\
 & 40 & 3.18 & 3.89 & 8.69 & 0.21 [0.13, 0.28] \\
 & 80 & 5.11 & 6.49 & 14.00 & 0.36 [0.23, 0.50] \\
 & 200 & 3.84 & 6.25 & 25.06 & $-0.11$ [$-0.22$, $-0.01$] \\
800 & 5 & 1.37 & 1.43 & 2.00 &  \\
 & 10 & 1.01 & 1.10 & 2.53 & $-0.68$ [$-1.49$, $-0.14$] \\
 & 20 & 2.16 & 2.30 & 3.53 & 1.16 [0.92, 1.41] \\
 & 40 & 3.33 & 3.62 & 7.03 & 0.33 [0.21, 0.44] \\
 & 80 & 3.65 & 4.18 & 13.53 & 0.05 [$-0.03$, 0.14] \\
 & 200 & 3.41 & 4.92 & 32.50 & $-0.01$ [$-0.06$, 0.05] \\
 & 400 & 4.05 & 7.19 & 62.93 & 0.02 [$-0.01$, 0.05] \\
 & 800 & 2.65 & 7.69 & 98.36 & $-0.04$ [$-0.07$, $-0.01$] \\
3200 & 5 & 1.55 & 1.57 & 1.56 &  \\
 & 10 & 2.11 & 2.17 & 3.39 & 0.30 [0.21, 0.40] \\
 & 20 & 2.76 & 2.88 & 4.28 & 0.73 [0.33, 1.07] \\
 & 40 & 3.65 & 3.88 & 6.56 & 0.39 [0.22, 0.56] \\
 & 80 & 4.49 & 4.95 & 13.19 & 0.13 [0.03, 0.23] \\
 & 200 & 5.04 & 6.12 & 30.86 & 0.03 [$-0.04$, 0.10] \\
 & 400 & 4.50 & 6.63 & 59.47 & $-0.02$ [$-0.06$, 0.03] \\
 & 800 & 3.24 & 7.51 & 121.98 & $-0.02$ [$-0.04$, 0.00] \\
 & 1600 & 1.67 & 10.16 & 240.79 & $-0.01$ [$-0.02$, 0.00] \\
 & 3200 & 0.63 & 14.33 & 385.89 & $-0.01$ [$-0.01$, 0.00] \\
\end{tabular}
\end{table}

\begin{table}[h]
\caption{Size-sweep cells for Claude Sonnet~5.5, as in
  Table~\ref{tab:smaincellshaiku}.}
\label{tab:smaincellssonnet}
\centering
\small
% Numbers: data/v2/smain-sonnet/processed/cells.csv and steps.csv (model = sonnet)
\begin{tabular}{lrrrrl}
$N$ & $b$ & $\Rz(S_b)$ & raw & $\ell(S_b)$ & $\mathrm{MC}$ (step to $b$) \\
\hline
200 & 5 & 0.97 & 1.04 & 1.56 &  \\
 & 10 & 2.25 & 2.39 & 3.19 & 0.78 [0.67, 0.88] \\
 & 20 & 2.27 & 2.56 & 5.28 & 0.01 [$-0.09$, 0.12] \\
 & 40 & 8.07 & 8.74 & 11.28 & 0.97 [0.83, 1.11] \\
 & 80 & 10.24 & 11.55 & 19.00 & 0.28 [0.05, 0.51] \\
 & 200 & 17.21 & 19.45 & 35.22 & 0.43 [0.25, 0.60] \\
800 & 5 & 1.02 & 1.08 & 2.28 &  \\
 & 10 & 1.29 & 1.39 & 2.78 & 0.53 [$-0.09$, 1.00] \\
 & 20 & 3.71 & 3.87 & 4.42 & 1.48 [1.31, 1.66] \\
 & 40 & 5.58 & 5.91 & 10.61 & 0.30 [0.13, 0.47] \\
 & 80 & 7.07 & 7.66 & 18.42 & 0.19 [0.02, 0.38] \\
 & 200 & 11.18 & 12.85 & 46.44 & 0.15 [0.07, 0.23] \\
 & 400 & 28.40 & 31.86 & 93.65 & 0.36 [0.26, 0.48] \\
 & 800 & 3.72 & 9.27 & 147.59 & $-0.46$ [$-0.56$, $-0.36$] \\
3200 & 5 & 1.60 & 1.62 & 1.50 &  \\
 & 10 & 3.31 & 3.38 & 3.89 & 0.71 [0.65, 0.78] \\
 & 20 & 5.05 & 5.17 & 6.00 & 0.82 [0.63, 0.99] \\
 & 40 & 12.72 & 12.95 & 11.03 & 1.53 [1.42, 1.63] \\
 & 80 & 13.48 & 13.96 & 21.92 & 0.07 [$-0.15$, 0.28] \\
 & 200 & 23.82 & 24.94 & 49.42 & 0.38 [0.20, 0.56] \\
 & 400 & 15.88 & 18.07 & 94.43 & $-0.18$ [$-0.29$, $-0.06$] \\
 & 800 & 15.90 & 20.28 & 186.93 & 0.00 [$-0.04$, 0.04] \\
 & 1600 & 12.32 & 21.05 & 369.21 & $-0.02$ [$-0.04$, 0.00] \\
 & 3200 & 4.01 & 18.09 & 591.98 & $-0.04$ [$-0.05$, $-0.02$] \\
\end{tabular}
\end{table}

\begin{table}[h]
\caption{Size sweep: measured optimum $n^\star(N)$ and accretion loss
  $R_{\rho_{\mathrm{exp}}}(S_{n^\star}) - R_{\rho_{\mathrm{exp}}}(S_N)$ with 95\% task-cluster bootstrap
  interval, per model, pool size and price $\rho_{\mathrm{exp}}$.}
\label{tab:smainopt}
\centering
\footnotesize
\setlength{\tabcolsep}{4pt}
% Numbers: data/v2/smain-sonnet/processed/optimum.csv
\resizebox{\linewidth}{!}{%
\begin{tabular}{llcccc}
 & $N$ & $\rho_{\mathrm{exp}} = 0.05$ & $\rho_{\mathrm{exp}} = 0.1$ & $\rho_{\mathrm{exp}} = 0.2$ & $\rho_{\mathrm{exp}} = 0.4$ \\
\hline
Haiku & 200 & 40; 6.9 [5.9, 8.0] & 10; 15.8 [14.6, 16.8] & 5; 34.5 [33.3, 35.5] & 5; 72.3 [71.1, 73.3] \\
 & 800 & 40; 37.5 [36.4, 38.7] & 5; 76.2 [74.9, 77.3] & 5; 153.7 [152.4, 154.8] & 5; 308.7 [307.4, 309.8] \\
 & 3200 & 20; 159.1 [158.4, 159.9] & 10; 316.7 [316.0, 317.4] & 5; 632.2 [631.5, 632.9] & 5; 1263.5 [1262.8, 1264.1] \\
Sonnet & 200 & 200; 0.0 [0.0, 1.3] & 40; 6.0 [3.7, 8.3] & 10; 21.9 [19.3, 24.5] & 5; 59.3 [56.7, 62.0] \\
 & 800 & 400; 41.9 [37.9, 47.6] & 20; 75.9 [73.1, 78.4] & 5; 152.3 [149.5, 154.6] & 5; 307.3 [304.4, 309.6] \\
 & 3200 & 200; 166.9 [161.3, 173.0] & 40; 320.4 [315.1, 324.2] & 40; 632.1 [626.7, 635.9] & 5; 1260.2 [1254.7, 1263.7] \\
\end{tabular}}
\end{table}

\begin{figure}[h]
\centering
\includegraphics[width=\linewidth]{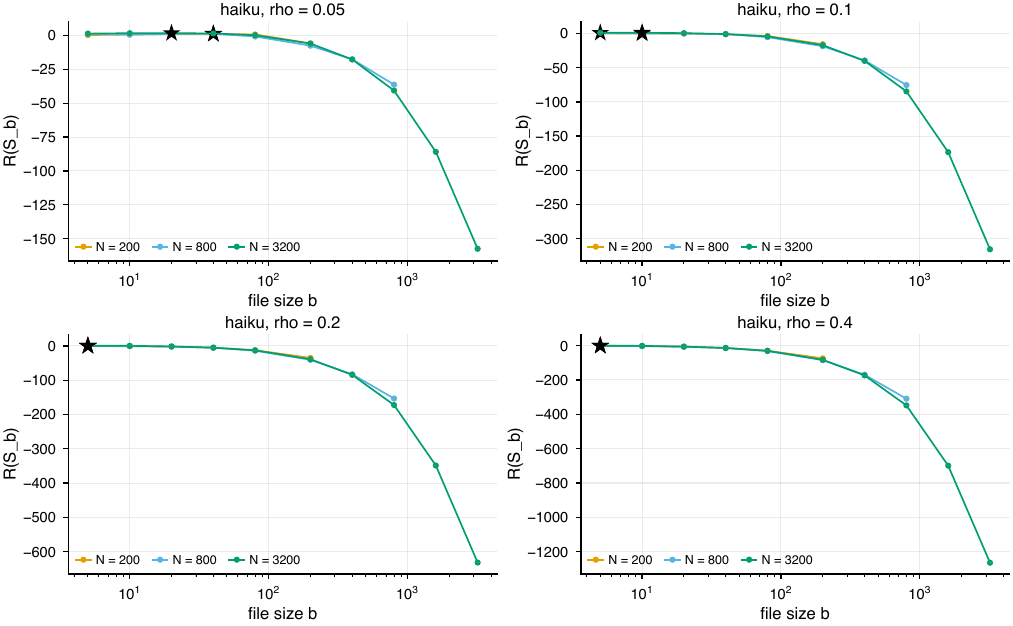}
\caption{Size sweep, Claude Haiku~4.5: value $R_{\rho_{\mathrm{exp}}}(S_b)$ against file
  size at each price $\rho_{\mathrm{exp}}$, one series per pool size $N$, with
  Newey--West error bars and stars at $n^\star(N)$ as in
  Figure~\ref{fig:smainvalue}.}
\label{fig:smainrhohaiku}
\end{figure}

\begin{figure}[h]
\centering
\includegraphics[width=\linewidth]{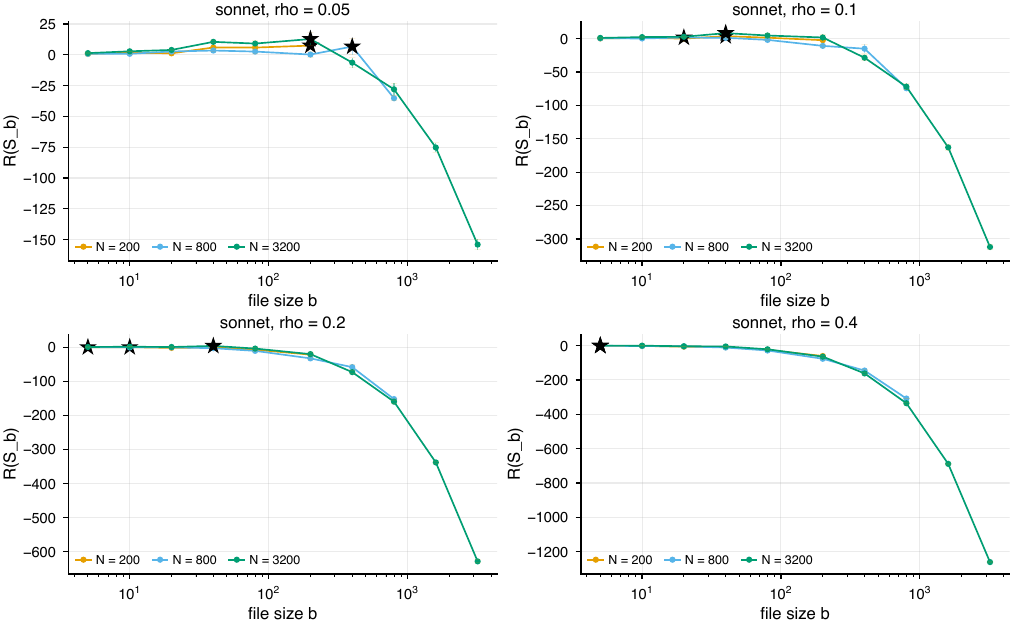}
\caption{Size sweep, Claude Sonnet~5.5: value $R_{\rho_{\mathrm{exp}}}(S_b)$ against file
  size at each price $\rho_{\mathrm{exp}}$, as in Figure~\ref{fig:smainrhohaiku}.}
\label{fig:smainrhosonnet}
\end{figure}

\begin{figure}[h]
\centering
% Numbers: data/v2/smain-sonnet/processed/mono_base.csv (run.py mono).
\includegraphics[width=\textwidth]{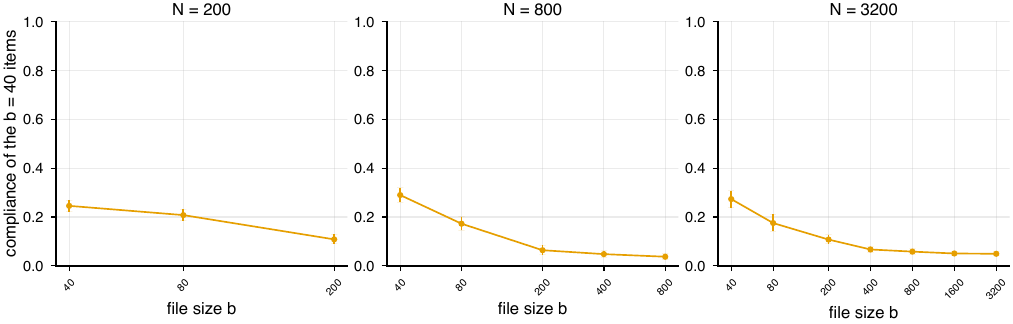}
\caption{Monotone dilution, item by item, Claude Haiku~4.5: compliance of
  the relevant items of the $b = 40$ file in that file and in every
  larger file of the same pool, with 95\% task-cluster bootstrap bars.}
\label{fig:smainmonohaiku}
\end{figure}

\begin{figure}[h]
\centering
\includegraphics[width=\textwidth]{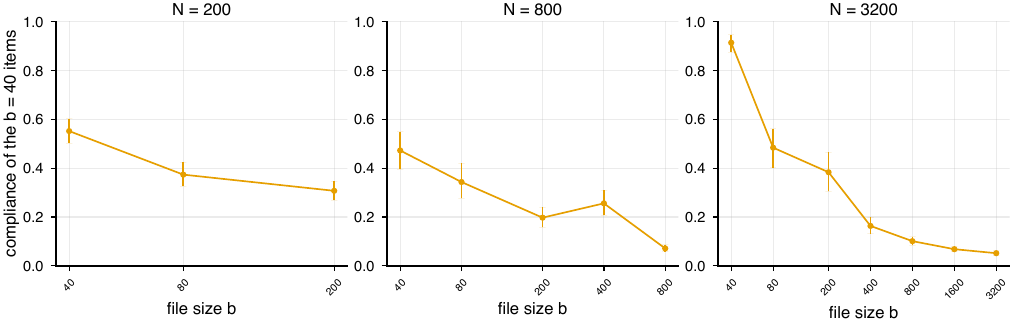}
\caption{Monotone dilution, item by item, Claude Sonnet~5.5, as in
  Figure~\ref{fig:smainmonohaiku}.}
\label{fig:smainmonosonnet}
\end{figure}

\clearpage
\subsection{Dilution}
\label{app:scal}

\paragraph{Purpose.}
Assumption~\ref{ass:mono} says that compliance with one instruction
does not increase as the file grows. The dilution experiment measures the compliance
curve $q(n)$ of 24 model configurations on a published
instruction-density benchmark, with no model calls of our own.

\paragraph{Data.}
The IFScale generations \citep{jaroslawicz2025ifscale}: 5,993 reports from
24 model configurations with $n \in \{10, \dots, 500\}$ keyword
instructions and 5 seeds. The repository has no license and the paper is
CC~BY-NC-SA~4.0, so the raw generations stay local and only aggregates
are reported.

\paragraph{Design.}
Every generation is re-scored with our labels; 52 generations that paste
the keyword list are dropped. Per configuration we fit $M(n) = n\, q(n)$,
the number of followed instructions, and report $M(500)/500$, the
compliance at $n = 500$. The fitted capacity $K$ of the corrected curve
fit sets the class: weak $K < 100$, mid $100$--$350$, strong $K > 350$.

\paragraph{Preregistered call.}
Descriptive only: $M(500)/500$ is expected between $0.07$ and $0.69$ in
all 24 configurations, with the late/early slope ratio of $M(n)$ below
$0.1$ in the four weak configurations and the hyperbola $g(x) = 1/(1+x)$
the AIC-best form only in the weak class.

\paragraph{Deviations.}
The fitted $K$ places 6 configurations in the weak class, 5 in the mid
class and 13 in the strong class; the plan projected 4, 8 and 12. The
slope ratio is below $0.1$ in 2 of the 6 weak configurations, and the
hyperbola is AIC-best in 4 of the 6 weak configurations and in none of
the others. One configuration, \texttt{llama-4-scout}, falls just
below the planned $0.07$ floor at $0.067$.

\paragraph{Results.}
Compliance at $n = 10$ is $0.92$--$1.00$ in every configuration.
$M(500)/500$ lies in $[0.067, 0.689]$ and is below $1$ in 24 of 24
configurations (Table~\ref{tab:scal}): no configuration follows its
instructions in proportion to their number at $n = 500$, and the
strongest configuration follows fewer than $0.7$ of them. The curve
$q(n)$ is nonincreasing for every class (Figure~\ref{fig:scal}). The
hyperbola of the logit case fits only the weak configurations; the
monotone dilution Assumption~\ref{ass:mono} needs no particular curve and holds
throughout.

\begin{table}[h]
\caption{Dilution: fitted capacity $K$, class, AIC-best curve form and
  compliance $M(500)/500$ at $n = 500$, per configuration.}
\label{tab:scal}
\centering
\small
% Numbers: data/v2/scal/processed/scal_summary.csv
\begin{tabular}{lrllc}
configuration & $K$ & class & best form & $M(500)/500$ \\
\hline
llama-4-scout & 34 & weak & power & 0.067 \\
claude-3.5-haiku & 52 & weak & hyperbola & 0.085 \\
gpt-4o-mini & 61 & weak & hyperbola & 0.104 \\
gpt-4o & 81 & weak & hyperbola & 0.154 \\
llama-4-maverick & 87 & weak & power & 0.096 \\
gpt-4.1-nano & 99 & weak & hyperbola & 0.162 \\
qwen3-235b-a22b & 161 & mid & exponential & 0.209 \\
gpt-4.1-mini & 202 & mid & exponential & 0.272 \\
deepseek-r1-0528 & 229 & mid & exponential & 0.309 \\
gemini-2.5-flash-preview & 281 & mid & exponential & 0.342 \\
grok-3-mini-beta & 329 & mid & exponential & 0.364 \\
gpt-4.5-preview & 376 & strong & exponential & 0.430 \\
claude-opus-4 & 428 & strong & linear & 0.446 \\
claude-sonnet-4 & 504 & strong & linear & 0.429 \\
gpt-4.1 & 526 & strong & linear & 0.489 \\
claude-opus-4 (high reasoning) & 602 & strong & exponential & 0.521 \\
o4-mini & 611 & strong & logistic & 0.344 \\
claude-3.7-sonnet & 613 & strong & linear & 0.527 \\
claude-sonnet-4 (high reasoning) & 648 & strong & logistic & 0.399 \\
o3 & 845 & strong & logistic & 0.516 \\
grok-3-beta & 1012 & strong & linear & 0.619 \\
gemini-2.5-pro-preview & 1274 & strong & linear & 0.666 \\
gemini-2.5-pro-preview (high reasoning) & 1333 & strong & linear & 0.689 \\
o3 (high reasoning) & 1607 & strong & logistic & 0.628 \\
\end{tabular}
\end{table}

\begin{figure}[h]
\centering
\includegraphics[width=\linewidth]{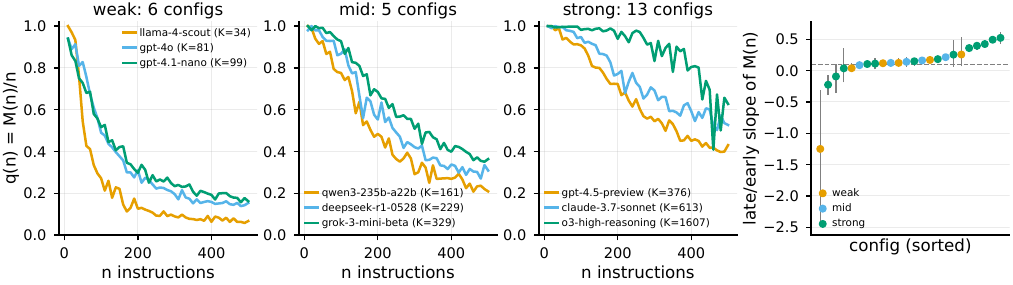}
\caption{Dilution: $q(n) = M(n)/n$ against $n$ for the lowest-, median- and
  highest-$K$ configuration of each class (three left panels), and the
  24 late/early slope ratios sorted, with 95\% seed-bootstrap bars and
  the $0.1$ reference line (right).}
\label{fig:scal}
\end{figure}

\subsection{Regret}
\label{app:l3}

\paragraph{Purpose.}
Theorem~\ref{thm:etc} bounds the edit-cap term of ETC-$m$ from above
and Theorem~\ref{thm:lower} bounds every policy from below, both of order
$(N/m) L$. The regret experiment runs ETC-$m$ on the instances of
Theorem~\ref{thm:lower} and checks both bounds and the order. It is a
verification of the theorems, not an empirical test; no dataset has an
edit cap. No data and no model calls.

\paragraph{Design.}
The instances are those of Theorem~\ref{thm:lower}: $N$ identical items,
$a = 0.02$, $\theta g(a) = 0.9$, $p = 0.3$, $c = 1$, $\kappa = 0.004$,
$B = 10$, $r_1 = 1$, noiseless gains, with $g(x) = 1/(1+x)$ of
Appendix~\ref{app:mnl}; $S^\star = \{1\}$ and every cell is exact. Grid:
$N \in \{16, 32, 64, 128, 256\}$, $m \in \{2, 4, 8, 16\}$, $L \in \{10,
20, 40\}$, 60 cells. ETC-$m$ is Algorithm~\ref{alg:etc} with the
estimator of Appendix~\ref{app:regret}, under tied loads $\theta_i =
\lambda a_i$. Regret is exact wherever the schedule fixes the file, so
the only random element is the committed file. The cap term is the
regret of the first epoch of every exploration batch plus the
transition; the learning term is the later exploration epochs plus the
commit phase. $n_0 = \lceil 8 \ln(6NT) / p_{\min} \rceil$ and $T = L(2J
+ \lceil (s + n_B)/m \rceil) + 4 n_0 J$ are solved as a fixed point.
The lower bound is $(\Delta L / 2)(n_m - 1)$ with $n_m = \lfloor N/m
\rfloor$ and $\Delta = R(S^\star)$.

\paragraph{Preregistered call.}
The ratio of the cap term to the lower bound lies in $[1, 4
(\Delta_{\max}/\Delta)(1 + m (1 + n_B/s)/(2N))]$ in every cell with $N
\ge 4m$, and the log-log slope of the cap term on $N/m$ is $1.0 \pm
0.1$ at each $L$. A ratio below $1$ refutes the bound or the simulator;
a slope outside the band refutes the order $N/m$.

\paragraph{Deviations.}
None. Gains are observed without noise and $\eta = 1$; the pool has no
spurious candidates, so $\varepsilon$ is unused.

\paragraph{Results.}
Both calls pass. Over the 51 cells with $N \ge 4m$ the ratio to the
lower bound is $4.08$--$5.38$, inside the interval in every cell
(Table~\ref{tab:l3a}); the cap term is under the upper bound in all 60
cells, and ETC-$m$ committed to $S^\star$ in every cell. The log-log
slope of the cap term on $N/m$ is $1.058$ $[1.029, 1.088]$ with $R^2 =
0.997$ at each $L$, over a span of a factor $192$ in the cap term
(Figure~\ref{fig:l3}). The ratio does not depend on $L$ and falls toward
$4$ as $N$ grows, the constant of Theorem~\ref{thm:etc}.

The three parameters move the cap term as the bounds predict. More
candidates increase the cost, because the policy must test more
instructions. A larger edit cap decreases the cost. Longer epochs
increase the time that the policy must keep each test batch.

\begin{table}[h]
\caption{Regret experiment: cap-term regret of ETC-$m$ over the lower
  bound $(\Delta L/2)(n_m - 1)$, with the plan's upper limit
  $4 (\Delta_{\max}/\Delta)(1 + m(1 + n_B/s)/(2N))$ in parentheses. The
  ratio does not depend on $L$. Cells with $N < 4m$ are outside the call.}
\label{tab:l3a}
\centering
\small
% Numbers: data/v2/l3/processed/l3_familyA.csv (ratio, ratio_max), identical across L
\begin{tabular}{lcccc}
$N$ & $m = 2$ & $m = 4$ & $m = 8$ & $m = 16$ \\
\hline
16 & 4.35 (7.77) & 4.83 (8.05) & $N < 4m$ & $N < 4m$ \\
32 & 4.20 (6.18) & 4.42 (6.33) & 5.02 (6.61) & $N < 4m$ \\
64 & 4.13 (5.39) & 4.26 (5.46) & 4.57 (5.61) & 5.38 (5.90) \\
128 & 4.09 (5.00) & 4.19 (5.03) & 4.40 (5.10) & 4.87 (5.25) \\
256 & 4.08 (4.80) & 4.15 (4.82) & 4.32 (4.85) & 4.66 (4.93) \\
\end{tabular}
\end{table}

\end{document}

%% file: experiments.tex
\section{EXPERIMENTS}
\label{sec:experiments}

We use three experiments to study instruction files.
The first measures how file size affects compliance and value.
The second measures compliance in responses from a published benchmark.
The third uses simulations to measure the effect of limits on file edits.
Appendix~\ref{app:experiments} gives the detailed designs, results,
and analysis plans.

\paragraph{Size sweep.}
The first experiment tests whether a larger candidate pool requires a
larger file.
The pool holds 3,200 items from real repository context files.
Claude Sonnet~5.5 rewrites 2,560 of them as instructions about the
words in a response, so that an automatic checker can score each
instruction.
Most of these instructions apply only to one response format, such as
an email.
The other 640 items stay unchanged and have zero value in our score.
The tasks are 60 ManyIFEval writing tasks
\citep{harada2025instructions}, ten in each of six response formats.
The file is the system prompt, and the task is the user prompt.

We vary the pool size $N \in \{200,800,3200\}$ and the file size
\[
b \in \{5,10,20,40,80,200,400,800,1600,N\}, \qquad b \le N.
\]
The file $S_b$ holds the $b$ instructions that the model receives.
Pools and files are nested: each larger pool contains the smaller
pools, and each larger file contains the smaller files.
Claude Haiku~4.5 and Claude Sonnet~5.5 receive the same files and
tasks, for 2,880 runs each.

We report three quantities.
The gross value $\Rz(S_b)$ is the mean number of relevant instructions
that the model follows, minus the number of keyword matches expected
by chance.
The relevant load $\ell(S_b)$ adds, over the relevant instructions in the
file, the compliance of each instruction when it is loaded alone
(Appendix~\ref{app:smain}).
The value subtracts a token penalty from the gross value,
\[
R_{\rho_{\mathrm{exp}}}(S_b) = \Rz(S_b) - \frac{\rho_{\mathrm{exp}}}{c_{\mathrm{med}}}\,C(S_b),
\]
where $C(S_b)$ is the token count of the file and
$c_{\mathrm{med}}\approx39$ tokens is the median instruction length.
Here $\rho_{\mathrm{exp}}$ is a normalized cost per median-length item,
distinct from the theoretical break-even compliance $\rho$.
At $\rho_{\mathrm{exp}}=0.1$, one median-length instruction costs $0.1$ units of
gross value.
The measured optimum $n^\star(N)$ is the size of the highest-value
file among the files we test.

\begin{figure*}[t]
\centering
\includegraphics[width=\textwidth]{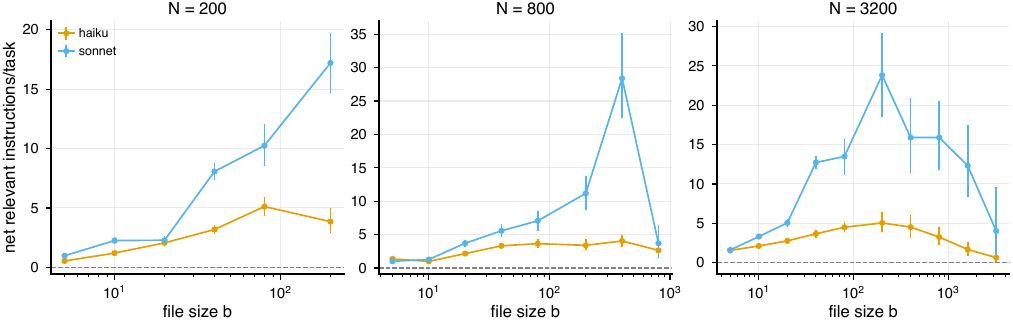}
\caption{
Gross value against file size for Claude Haiku~4.5 and
Claude Sonnet~5.5.
Gross value counts relevant instructions that the model follows,
after adjustment for chance keyword matches.
Each panel uses a different candidate pool.
Points show means across 60 tasks.
Bars show 95\% confidence intervals from a bootstrap that resamples
tasks and keeps each task's runs together.
The last point in each panel loads the entire pool.
}
\label{fig:smain}
\end{figure*}

\paragraph{Size sweep: results.}
Large files reduce gross value in the largest pool
(Figure~\ref{fig:smain}).
At $N=3200$, both models reach their highest gross value at $b=200$,
where Haiku follows $5.0$ relevant instructions and Sonnet follows
$23.8$.
With the entire pool loaded, these values decrease to $0.6$ and $4.0$.
The marginal relevant compliance, the change in gross value per unit
of additional relevant load, is negative at the last step.
From $b=1600$ to $b=3200$, it is $-0.007$ $[-0.013,-0.002]$ for Haiku
and $-0.037$ $[-0.052,-0.023]$ for Sonnet (95\% intervals).

The best tested file stays small as the pool grows.
At $\rho_{\mathrm{exp}}=0.1$ and $N=200$, $800$, and $3200$, the measured optimum is
$10$, $5$, and $10$ for Haiku and $40$, $20$, and $40$ for Sonnet.
This result is consistent with the size bound of
Proposition~\ref{prop:size}, although it covers only the tested files.
The accretion loss, $R_{\rho_{\mathrm{exp}}}(S_{n^\star})-R_{\rho_{\mathrm{exp}}}(S_N)$, compares the
best tested file with the file that loads the entire pool.
From $N=200$ to $N=3200$, it increases from $15.8$ to $316.7$ for
Haiku and from $6.0$ to $320.4$ for Sonnet.
The loss includes both the compliance effect and the token penalty,
as in Corollary~\ref{cor:accretion}.

Dilution also appears instruction by instruction.
At $N=3200$, the relevant instructions of the $b=40$ file have a mean
compliance of $0.27$ for Haiku and $0.91$ for Sonnet in that file.
With the entire pool loaded, both rates decrease to $0.05$, as
Assumption~\ref{ass:mono} predicts at high load.
The score measures instruction compliance and not the overall quality
of the response.

\paragraph{Dilution on a published benchmark.}
We re-score 5,993 existing IFScale responses
\citep{jaroslawicz2025ifscale} from 24 model configurations.
Compliance is $0.92$--$1.00$ at ten keyword requirements and
$0.067$--$0.689$ at 500 requirements.
Appendix~\ref{app:scal} gives the fitted curves and the results
for each configuration.

\paragraph{Regret under an edit cap.}
The third experiment runs ETC-$m$ (Algorithm~\ref{alg:etc}) on the
instances of Theorem~\ref{thm:lower}, where the optimal file is
$\{j\}$ and gains have no noise.
We vary $N \in \{16,32,64,128,256\}$, $m \in \{2,4,8,16\}$, and
$L \in \{10,20,40\}$, which gives 60 settings
(Appendix~\ref{app:l3}).
We report the cap term of the regret: the regret from the first epoch
of each test batch and from the transition to the final file.

\begin{figure*}[t]
\centering
\includegraphics[width=\textwidth]{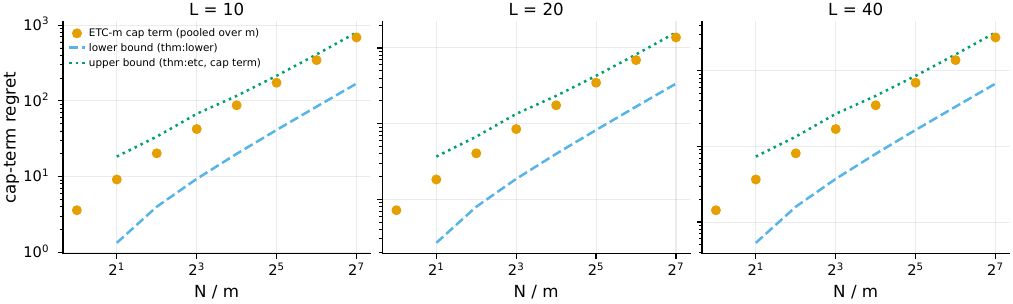}
\caption{
Regret from the edit limit against $N/m$.
Both axes use logarithmic scales.
Each panel uses a different epoch length $L$.
Each point combines simulation settings with the same $N/m$.
The lines show the lower bound from Theorem~\ref{thm:lower}
and the cap term of the upper bound from Theorem~\ref{thm:etc}.
The variation across settings with the same $N/m$ is smaller
than the plotted marker.
}
\label{fig:l3}
\end{figure*}

At each epoch length, the log-log slope of the cap term against
$N/m$ is $1.06$, with a 95\% interval of $[1.03,1.09]$ (Figure~\ref{fig:l3}).
For every setting with $N\ge4m$, the ratio of the cap term to the
lower bound is between $4.1$ and $5.4$, and the cap term stays below
the upper bound in all 60 settings.
The edit-cap term is therefore of order $(N/m)L$, and the bounds of
Theorems~\ref{thm:etc} and~\ref{thm:lower} differ by a constant
factor.